\documentclass{article} 
\usepackage{iclr2025_conference,times}


\usepackage{amsmath,amsfonts,bm}









\def\eqref#1{equation~\ref{#1}}









\def\ceil#1{\lceil #1 \rceil}

\def\1{\bm{1}}
\newcommand{\train}{\mathcal{D_{\mathrm{train}}}}
\newcommand{\valid}{\mathcal{D_{\mathrm{valid}}}}
\newcommand{\test}{\mathcal{D_{\mathrm{test}}}}
\newcommand{\calib}{\mathcal{D_{\mathrm{calib}}}}
\newcommand{\audit}{\mathcal{D_{\mathrm{audit}}}}
\newcommand{\predict}{\mathcal{C}_\lambda(X)}








\def\vu{{\bm{u}}}

\def\vx{{\bm{x}}}


\def\mA{{\bm{A}}}

\def\mX{{\bm{X}}}

\DeclareMathAlphabet{\mathsfit}{\encodingdefault}{\sfdefault}{m}{sl}
\SetMathAlphabet{\mathsfit}{bold}{\encodingdefault}{\sfdefault}{bx}{n}


\def\gC{{\mathcal{C}}}
\def\gD{{\mathcal{D}}}
\def\gE{{\mathcal{E}}}

\def\gG{{\mathcal{G}}}

\def\gN{{\mathcal{N}}}

\def\gS{{\mathcal{S}}}

\def\gV{{\mathcal{V}}}

\def\gY{{\mathcal{Y}}}



\def\sR{{\mathbb{R}}}








\newcommand{\R}{\mathbb{R}}



\usepackage{hyperref}
\usepackage{url}

\usepackage[export]{adjustbox}
\usepackage{algpseudocode}
\usepackage{algorithm}
\usepackage{amsmath}
\usepackage{amssymb}
\usepackage{amsthm}
\usepackage{bm}
\usepackage{booktabs}
\usepackage{dsfont}
\usepackage{etoolbox}
\usepackage{float}
\usepackage{mathtools}
\usepackage{makecell}
\usepackage{multirow}
\usepackage{subcaption}
\usepackage{wrapfig}
\newcommand{\customcommentson}{false} 

\setlength{\marginparwidth}{2cm} 
\usepackage[textsize=tiny]{todonotes}
\newcommand{\sidenote}[4]{
    \ifthenelse{\equal{true}{\customcommentson}} {
    \todo[author=#1,color=#2,#3]{#4} 
    }
    {
    }
} 

\ifthenelse{\equal{true}{\customcommentson}}{
\newcommand\BrText[2]{%
  \par\smallskip
   \noindent\makebox[\textwidth][l]{$\left.
    \begin{minipage}{\textwidth}
    #2
    \end{minipage}
  \right\}\makebox[8em][r]{\begin{minipage}{8em}
    \textcolor{blue}{#1}
    \end{minipage}}\nulldelimiterspace=0pt$}\par\smallskip
}
}
{ 
\newcommand\BrText[2]{
    #2
}
{}
}


\usepackage{enumitem,amssymb}
\newlist{todolist}{itemize}{2}
\setlist[todolist]{label=$\square$}
\usepackage{pifont}
%
%

\usepackage{amsthm}
\usepackage{thmtools}       
\usepackage{thm-restate}

\ifdefined\beginthm\else
\newtheorem{theorem}{Theorem}[section]
\fi
\newtheorem{corollary}{Corollary}[section]
\newtheorem{lemma}[theorem]{Lemma}

\newcommand\swapifbranches[3]{#1{#3}{#2}}
\makeatletter
\MHInternalSyntaxOn
\patchcmd{\DeclarePairedDelimiter}{\@ifstar}{\swapifbranches\@ifstar}{}{}
\MHInternalSyntaxOff
\makeatother

\DeclarePairedDelimiter{\parens}{(}{)}
\DeclarePairedDelimiter{\brackets}{[}{]}
\DeclarePairedDelimiter{\braces}{\{}{\}}
\DeclarePairedDelimiter{\abs}{\lvert}{\rvert}

\title{A Generic Framework for Conformal Fairness}

\renewcommand\footnotemark{}
\author{Aditya T. Vadlamani~\textsuperscript{1,*}, Anutam Srinivasan~\textsuperscript{1,*}, Pranav Maneriker~\textsuperscript{2,$\bm\dagger$}, \\ \textbf{Ali Payani~\textsuperscript{3}, Srinivasan Parthasarathy~\textsuperscript{1}}\\
\textsuperscript{1}The Ohio State University\quad\textsuperscript{2}Dolby Laboratories\quad\textsuperscript{3}Cisco Systems
\thanks{\textsuperscript{*}Equal Contribution, \textsuperscript{$\dagger$}This work was done while the author was a student at The Ohio State University}
}

%

\iclrfinalcopy 
\begin{document}
\maketitle
\begin{abstract}
Conformal Prediction (CP) is a popular method for uncertainty quantification with machine learning models. While conformal prediction provides probabilistic guarantees regarding the coverage of the true label, these guarantees are agnostic to the presence of sensitive attributes within the dataset. In this work, we formalize \textit{Conformal Fairness}, a notion of fairness using conformal predictors, and provide a theoretically well-founded algorithm and associated framework to control for the gaps in coverage between different sensitive groups. Our framework leverages the exchangeability assumption (implicit to CP) rather than the typical IID assumption, allowing us to apply the notion of Conformal Fairness to data types and tasks that are not IID, such as graph data. Experiments were conducted on graph and tabular datasets to demonstrate that the algorithm can control fairness-related gaps in addition to coverage aligned with theoretical expectations. 
\end{abstract}

\section{Introduction}

Machine learning (ML) models are increasingly used to make critical decisions in many fields of human endeavor, making it essential to quantify the uncertainty associated with their predictions. 
Conformal Prediction (CP) is a distribution-free framework~\citep{vovk2005algorithmic} which produces confidence sets with rigorous theoretical guarantees and has become popular in real-world applications~\citep{cherian2020washington}.
Post-hoc CP allows for facile integration into ML pipelines and applies to a wide variety of data types, including graph data~\citep{zargarbashi23conformal,huang2024uncertainty}, because of its weaker requirement of a \textit{statistical exchangeability}.

Relatedly, ensuring the fairness of machine learning models is vital for their high-stakes deployments in critical decision-making. Biases affect ML models at different stages - from data collection to algorithmic learning stages~\citep{mehrabi2021survey}. During the data collection stage, measurement and representation biases can skew how each feature is interpreted, leading to inaccurate determinations by learning models. Algorithmic bias, caused by model design choices and prioritization of specific metrics while learning the model, can also lead to unfair outcomes. Many models inherit biases from historical outcomes~\citep{kallus2018residual, dwork2012fairness} and inadvertently skew decisions towards members of certain advantaged groups~\citep{mehrabi2021survey}. These biases have led to several global actors proposing and requiring practitioners to adhere to certain \textit{fairness} standards~\citep{hirsch2023business}.
To facilitate ML pipeline and model adherence to socio-cultural or regulatory fairness standards, researchers have proposed methods to either construct fair-predictors~\citep{alghamdi2022beyond,creager2019flexibly, zhao2023fair} or audit fairness claims made by deployed machine learning models~\citep{ghosh2021justicia, maneriker2023, yan2022active}. 

However, these efforts on fairness (predictors, auditing, and uncertainty quantification) primarily focus on binary classification, often implicitly relying on the independent and identically distributed (IID) assumption, and largely do not bridge fairness and uncertainty quantification. 
The need to both quantify uncertainty and ensure fairness considerations are met is critical. A few researchers have started to examine how to assess (and possibly improve) the prediction quality of unreliable models \citep{wang2024variational} while meeting socio-cultural or regulatory standards of fairness. However, these efforts are limited in that they either require knowledge of group membership at inference time  (a somewhat impractical assumption)~\citep{lu2022fair} or are model-specific \citep{wang2024variational}.

\textbf{Key Contributions:} 
We propose a novel and comprehensive Conformal Fairness (CF) Framework to redress these concerns. 

First, we develop the theoretical insights that facilitate how our framework leverages CP's distribution-free approach to build and construct fair uncertainty sets according to user-specified notions of fairness. Our framework is not only comprehensive but also highly flexible, as it can be adapted to bespoke user-specified fairness criteria. This adaptability ensures that the framework can be customized to meet the specific needs of different users, enhancing its practicality and usability.

Second, the weaker (exchangeability) assumptions required by CP allow us to extend the utility of our framework to fairness problems in graph models. Graph models, in particular, suffer from the \textit{homophily effect}, which exacerbates inherent segregation due to node linkages and causes further biases in predictions \citep{current2022, dong2023fairness, he2023fairmile}. 

Third, we discuss how our approach serves as a fairness auditing tool for conformal predictors. This function is important as it allows one to verify model fairness, ensuring that fairness is not just a theoretical concept but a practical reality in predictive modeling. 

Finally, we demonstrate the effectiveness of our CF Framework by evaluating fairness using multiple popular fairness metrics for multiple different conformal predictors on both real-world graph and tabular fairness datasets.
\section{Background}
\subsection{Conformal Prediction}
Conformal Prediction~\citep{vovk2005algorithmic} is a framework for quantifying the uncertainty of a model by constructing prediction sets that satisfy a \textit{coverage} guarantee. For expository simplicity, we will focus on split (or inductive) conformal prediction (CP) in the classification setting. Given a calibration dataset, $\calib = \braces{(\bm{x}_i, y_i)}_{i = 1}^n$ and a test point $(\bm{x}_{n + 1}, y_{n+1})$, where $\bm{x}_i\in\mathcal{X} = \sR^d$ and $y_i\in\mathcal{Y} = \braces{0, \dots, K - 1}$, CP is used to construct a prediction set $\mathcal{C}(\bm{x}_{n + 1})$ such that: 
\begin{eqnarray}
    1 - \alpha \leq \Pr\brackets{y_{n + 1}\in \mathcal{C}(\vx_{n+1})}\leq 1 - \alpha + \frac{1}{n + 1},
    \label{eq:cp}
\end{eqnarray}
where $1 - \alpha\in (0, 1)$ is the coverage bound. Concretely, given a non-conformity score function $s: \mathcal{X}\times\mathcal{Y}\to\sR$, let $\hat{q}(\alpha) = \text{Quantile}\parens{\frac{\ceil{(n+1)(1-\alpha)}}{n}; \{s(\bm{x}_i, y_i)\}_{i=1}^{n}}$ be the \textit{conformal quantile}. Then  $\gC_{\hat{q}(\alpha)}\parens{\bm{x}_{n + 1}} = \{y \in \mathcal{Y}: s(\bm{x}_{n+1}, y) \leq  \hat{q}(\alpha)\}$ is a prediction set that satisfies Equation \ref{eq:cp}.

\noindent{\bf Evaluating CP}:
\textit{Coverage} quantifies the true test time probability $\Pr\brackets{y_{n + 1}\in \gC_{\hat{q}(\alpha)}(\bm{x}_{n+1})}$ while  \textit{efficiency} is the average test prediction set size, $\abs{\gC_{\hat{q}(\alpha)}(\bm{x}_{n+1})}$. Intuitively, there is an inverse relationship between coverage and efficiency, as a higher desired coverage is harder to achieve, the method may produce larger prediction sets to satisfy the guarantee.
In CP, the only assumption made about the data is that $\calib\cup\{(\bm{x}_{n + 1}, y_{n + 1})\}$ is \textit{exchangeable} -- a weaker notion than iid, enabling its use on non-iid data, including graph data.

\noindent{\bf Graph CP:} In this work, we focus on the node classification task. Given an attributed graph $\gG = (\gV, \gE, \mX)$, where $\gV$ is the set of nodes, $\gE$ is the set of edges, and $\mX$ is the set of node attributes.
Let $\mA$ be the adjacency matrix for the graph.
Further, let $\gY = \{0, \dots, K-1\}$ denote the set of classes associated with the nodes.
For $v \in \gV$, $\vx_v \in \R^d$ denotes its features and $y_v \in \gY$ denotes its true class.
The task of node classification is to learn a model that predicts the label for each node given all the node features and the adjacency matrix, i.e., $(\mX, \mA, v)\mapsto y_v$.
In the transductive setting, the entire graph, including test points, is accessible during the base model training. 
In this scenario, for any trained permutation-equivariant function (e.g., GNN) trained on a set of training/validation nodes, the scores produced on the calibration set and test set are exchangeable, thus enabling CP to be applied~\citep{zargarbashi23conformal, huang2024uncertainty}.

\subsection{Fairness Metrics} \label{subsec:fair_metrics_orig}
Group (or statistical) fairness requires individuals from different sensitive groups to be treated equally. Sensitive groups are subpopulations characterized by sensitive attributes, including gender, race, and/or ethnicity. Group fairness metrics aim to observe bias in the predictions of a model between the different groups in a dataset.
This work considers several popular fairness metrics, including equal opportunity, equalized odds, demographic parity, predictive equality, and predictive parity.
For generality, we define the metrics for the multiclass setting with an $n$-ary sensitive attribute. Let $\mathcal{Y}^{+}$ denote the set of advantaged labels (e.g., ``is\_approved'' in a loan approval task), $Y$ be the true label, and $\hat{Y}$ be the predicted label from a classifier.
Let $\mathcal{G}$ be the set of all groups for the sensitive attribute(s). 
Table \ref{tab:fairness_defs} discusses the formal definitions of different fairness metrics considered in this work. 

Achieving \textit{exact fairness} (i.e., the equality in Table \ref{tab:fairness_defs}) can be challenging or, in some cases, impossible~\citep{fairMLBook}. Often, regulatory requirements focus on the difference \textcolor{black}{(or ratio)} in probabilities between groups for any given positive label. \textcolor{black}{This is achieved by ensuring the difference (or ratio) meets a prespecified \textit{closeness criterion}. For example, many regulatory bodies consider the \textbf{Four-Fifths Rule}~\citep{eeoc1979, feldman2015}, which asserts that the ratio of the selection probabilities between groups is at least $0.8$.} 


\section{Conformal Fairness (CF) Framework}

{In this section, we propose a theoretically well-founded framework using conformal predictors to control for fairness disparity between different sensitive groups. 
The framework is motivated by adapting the standard CP algorithm to determine conditional coverage \textit{given} a score threshold, $\lambda$, for the prediction sets (i.e. $\mathcal{C}_{\lambda}(\bm x_{n+1}) = \{y\in\mathcal{Y} \mid s(\bm x_{n+1},y) \leq \lambda\})$. Depending on the fairness metric, fairness disparity refers to gaps in group-conditional or group-and-class-conditional coverages between groups and advantaged labels. The conditional coverages are leveraged to evaluate if fairness is achieved for some closeness criterion $c$ for different fairness metrics. This is achieved by searching a \textit{threshold space} $\Lambda$ for an optimal threshold $\lambda_{opt}$ that achieves the closeness criteria. The framework also handles user-defined metrics as discussed in Section \ref{new:sec:extensibility}, thus controlling for quantities, potentially orthogonal to conditional coverage. }

\subsection{Exemplar Conformal Fairness (CF) Metrics} \label{new:subsec:conf_fair_metrics}
\BrText{Expanded on how fairness definitions are adapted}{
For conformal fairness, we adapt popular fairness metrics defined for \textit{multiclass classification} (shown in Table \ref{tab:fairness_defs}). For standard point-wise predictions, fairness measures are concerned with the probability a prediction is a specific label (i.e., $\tilde{y} = \hat{Y}$), given a condition, i.e., $X \in g_a, Y = \tilde{y}$ for Equal Opportunity, for a particular covariate $(X, Y)$. We replace equivalence to the predicted value with set membership $\parens{\tilde{y} \in \predict}$ to adapt these notions for prediction sets. The adapted conformal fairness metrics are in Table \ref{new:tab:conf_fairness_defs}.}


\begin{table}[ht]
    \tiny
    \centering
    \caption{Conformal Fairness Metrics.}
    \begin{tabular}{cc} 
        \toprule
        \textbf{Metric} & \textbf{Definition} \\ 
        \midrule
         Demographic (or Statistical) Parity & $\abs{\Pr\brackets{\tilde{y}\in \predict ~\Big| ~ X\in g_a} - \Pr\brackets{\tilde{y}\in \predict ~\Big| ~ X\in g_b}} < c,~\forall g_a, g_b\in\mathcal{G},~\forall \tilde{y}\in\mathcal{Y}^{+}$ \\ [2ex]
        Equal Opportunity & $\abs{\Pr\brackets{\tilde{y}\in \predict ~\Big|~ Y = \tilde{y}, X\in g_a} - \Pr\brackets{\tilde{y}\in \predict ~\Big|~ Y = \tilde{y}, X\in g_b}} < c,~\forall g_a, g_b\in\mathcal{G},~\forall \tilde{y}\in\mathcal{Y}^{+}$ \\ [2ex]
        Predictive Equality & $\abs{\Pr\brackets{\tilde{y}\in \predict ~\Big|~ Y \neq \tilde{y}, X\in g_a} - \Pr\brackets{\tilde{y}\in \predict ~\Big|~ Y\neq \tilde{y}, X\in g_b}} < c,~\forall g_a, g_b\in\mathcal{G},~\forall \tilde{y}\in\mathcal{Y}^{+}$ \\ [2ex]
        Equalized Odds & Equal Opp. and Pred. Equality \\[1ex]
        \midrule
        Predictive Parity & $\Pr\brackets{Y = \tilde{y} ~\Big|~ \tilde{y}\in \predict, X\in g_a} = \Pr\brackets{Y = \tilde{y} ~\Big|~ \tilde{y}\in \predict, X\in g_b},~\forall g_a, g_b\in\mathcal{G},~\forall \tilde{y}\in\mathcal{Y}^{+}$ \\
        \bottomrule
    \end{tabular}
    \label{new:tab:conf_fairness_defs}
\end{table} 

\subsection{Conformal Fairness (CF) Theory}
\label{new:subsec:theory}
Before presenting our framework, we first lay out the necessary theoretical groundwork. Detailed proofs are in Appendix \ref{app:proofs}. 
\BrText{Mathematically formalize the notion of filter function}

\noindent{\bf Filtering $\bm{\calib}$}: Group fairness metrics are evaluated on a subset of the population, defined by a condition on the data (i.e., membership in a group, true label value). For example, Demographic Parity is evaluated \textit{per group} ($X\in g_a$ in definition), while Equal Opportunity is evaluated \textit{per group \textbf{and} true label} ($Y = y, X\in g_a$ in definition). To formalize this notion, let $M$ denote a fairness metric (e.g. Equal Opportunity) and define $F_M:\mathcal{X}\times\mathcal{Y}\times\mathcal{G}\times\mathcal{Y}^{+}\to \{0, 1\}$ be a \textbf{filter function} which maps a calibration point along with a group and positive label, $(\bm{x}_i, y_i, g, \tilde{y})$, to $0$ or $1$ depending on whether the condition for the fairness metric, $M$, is satisfied. For Equal  Opportunity, $F_M$ would instantiate to $F_{EO}(\bm{x}_i, y_i, g, \tilde{y}) := \bm{1}[\bm{x}_i \in g~\cap~y_i = \tilde{y}]$. We can filter $\calib$ to be $\calib_{(g, \tilde{y})} = \braces{(\bm{x}_i, y_i)\in\calib~|~F_M(x_i, y_i, g, \tilde{y}) = 1}$. By doing so, we provide guarantees regarding the conditional coverages as stated in Lemma \ref{new:lem:cp_conditional}

\begin{restatable}{lemma}{lemCPConditional}
\label{new:lem:cp_conditional}
    For any $(g, \tilde{y})\in\mathcal{G}\times\mathcal{Y}^+$, calibrating on \\$\calib_{(g, \tilde{y})} = \braces{(\vx_i, y_i)~|~F_M(\vx_{i},y_{i}, g, \tilde{y}) = 1}$ guarantees the following about the conditional coverage: 
    \begin{eqnarray}
    1 - \alpha \leq \Pr\brackets{y_{n + 1}\in \mathcal{C}_{\lambda}(\bm x_{n+1})~|~F_M(\vx_{n+1},y_{n+1}, g, \tilde{y}) = 1}\leq 1 - \alpha + \frac{1}{|\calib_{(g, y)}| + 1}
    \label{new:eq:cp_conditional}
    \end{eqnarray}
{The \textbf{interval width} is $\frac{1}{|\calib_{(g, y)}| + 1}$.}
\end{restatable}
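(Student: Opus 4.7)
The plan is to reduce the statement to the standard split CP guarantee stated in Equation~\ref{eq:cp}, applied to the filtered calibration set $\calib_{(g, \tilde{y})}$ together with the (filtered) test point. The high-level observation is that once we fix the pair $(g, \tilde{y})$, the filter function $F_M(\cdot, \cdot, g, \tilde{y})$ acts pointwise and symmetrically on the data, so both exchangeability and the quantile-based construction of $\gC_\lambda$ carry over to the filtered subsequence.

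First I would set up notation: let $N = |\calib_{(g,\tilde{y})}|$ and re-index the points of $\calib \cup \{(\vx_{n+1}, y_{n+1})\}$ that satisfy $F_M(\vx_i, y_i, g, \tilde{y}) = 1$ as a subsequence. The conditioning event in Equation~\ref{new:eq:cp_conditional} is that the test point lies in this filtered subsequence. Next I would argue exchangeability is preserved: since $\calib \cup \{(\vx_{n+1}, y_{n+1})\}$ is exchangeable and the event ``point $i$ satisfies $F_M(\vx_i, y_i, g, \tilde{y}) = 1$'' is a coordinate-wise condition that treats all points symmetrically, the conditional distribution of the filtered subsequence (and in particular of the test point relative to the filtered calibration set) remains exchangeable. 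Concretely, the joint indicator pattern of which points pass the filter is itself exchangeable, and conditioning on which subset of indices passes reduces to an exchangeable sub-collection of size $N+1$.

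Given this, the score $s(\vx_{n+1}, y_{n+1})$, conditional on $F_M(\vx_{n+1}, y_{n+1}, g, \tilde{y}) = 1$, is exchangeable with the scores $\{s(\vx_i, y_i)\}_{(\vx_i, y_i) \in \calib_{(g, \tilde{y})}}$. Applying the standard split-CP argument (i.e., a rank/quantile inequality for exchangeable real-valued random variables with possible ties handled as in Vovk et al.) to this collection of $N + 1$ scores with level $1 - \alpha$ yields
\begin{equation*}
1 - \alpha \;\leq\; \Pr\brackets{y_{n+1} \in \gC_{\hat q(\alpha)}(\vx_{n+1}) \,\big|\, F_M(\vx_{n+1}, y_{n+1}, g, \tilde{y}) = 1} \;\leq\; 1 - \alpha + \frac{1}{N+1},
\end{equation*}
where $\hat q(\alpha)$ is computed on the filtered calibration scores. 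Since the lemma is stated for a generic threshold $\lambda$, I would finally note that choosing $\lambda = \hat q(\alpha)$ recovers exactly the bound in Equation~\ref{new:eq:cp_conditional} with interval width $\frac{1}{|\calib_{(g,\tilde{y})}| + 1}$.

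The step I expect to be the main obstacle (or at least the one requiring the most care) is the exchangeability preservation under conditioning on $F_M = 1$: one must argue that because the filter is a symmetric, pointwise function of the data, conditioning on an individual point's filter indicator being $1$ does not break exchangeability within the sub-collection of points whose indicators are also $1$. A clean way to formalize this is via a symmetry/de~Finetti-style argument: any permutation of the original $(n+1)$ points induces a corresponding permutation of the filtered subsequence, and the joint law is invariant under both. Once this is established, the rest is a direct invocation of the classical split-CP guarantee with sample size $N$ in place of $n$.
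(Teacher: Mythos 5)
Your proposal is correct and follows essentially the same route as the paper: restrict to the filtered calibration points, observe that the pointwise, symmetric filter preserves exchangeability of the surviving sub-collection (the paper delegates this to \citet{ding2024}), and then invoke the standard two-sided split-CP guarantee with $\lambda$ taken to be the conformal quantile of the filtered scores, handling ties by assuming distinct scores or random tie-breaking as in \citet{romano2019conformalized}. You simply spell out the exchangeability-preservation step that the paper asserts by citation.
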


Prior work~\citep{ding2024, vovk2005algorithmic, lei2016} focused on the upper bound; however, the lower bound is also necessary for our framework. 

\noindent {\bf Inverse Quantile:} Given an $(1-\alpha)$-coverage level, we have that the $(1-\alpha)$-quantile of the calibration non-conformity scores is the appropriate threshold to achieve Equation \ref{eq:cp}. For our framework, given a threshold, $\lambda$, we recover the coverage level. This can be done using the \textbf{inverse $\mathbf\lambda$-quantile}.
Formally, if $(\vx_{n + 1}, y_{n + 1})$ is a test point and $\gS_{\text{calib}} = \braces{s(\vx_i, y_i)~|~(\vx_i, y_i)\in\calib}$, the inverse ${\lambda}$-quantile is given by
\[
    Q^{-1}\parens{\lambda, \gS_{\text{calib}}} \coloneq \Pr\brackets{s(\vx_{n + 1}, y_{n + 1})\leq \lambda} = \Pr\brackets{ y_{n + 1}\in\mathcal{C}_{\lambda}(\vx_{n+1})}.
\]
Moreover, $Q^{-1}\parens{\lambda, \gS_{\text{calib}}}$ is the coverage level for the label $y_{n + 1}$. Lemma \ref{new:lem:cp_inv} asserts that the coverage level is within a bounded interval of length $\frac{1}{\abs{\calib} + 1}$.

\begin{restatable}{lemma}{lemCPInv}\label{new:lem:cp_inv}
    For $\lambda\in [0, 1]$ and $n = \abs{\calib}$,
    \begin{eqnarray}
    \frac{\sum_{i=1}^{n} \1{[s(\vx_i, y_{i}) \leq \lambda}]}{n+1} \leq \Pr\brackets{y_{n + 1}\in \mathcal{C}_{\lambda}(\vx_{n+1})}\leq \frac{\sum_{i=1}^{n} \1{[s(\vx_i, y_{i}) \leq \lambda}]+1}{n+1},
    \label{new:eq:cp_inv_statement}
    \end{eqnarray}
\end{restatable}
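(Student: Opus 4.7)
The plan is to reduce the coverage probability to a counting identity by exploiting the exchangeability of the augmented sequence $(\vx_1,y_1),\ldots,(\vx_{n+1},y_{n+1})$, paralleling the standard proof of the marginal CP guarantee in Equation~\ref{eq:cp}.

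First I would unpack the event: by the definition of $\mathcal{C}_\lambda$, the condition $y_{n+1}\in\mathcal{C}_\lambda(\vx_{n+1})$ is equivalent to $s(\vx_{n+1},y_{n+1})\leq\lambda$, so the target is the probability that the test nonconformity score lies below $\lambda$. I would then introduce the \emph{augmented} count
\[
M(\lambda) \;:=\; \sum_{i=1}^{n+1} \1[s(\vx_i,y_i)\leq\lambda] \;=\; N(\lambda) \;+\; \1[s(\vx_{n+1},y_{n+1})\leq\lambda],
\]
where $N(\lambda):=\sum_{i=1}^n \1[s(\vx_i,y_i)\leq\lambda]$ is the calibration-only count appearing in the lemma. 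The decomposition gives the pointwise sandwich $N(\lambda)\leq M(\lambda)\leq N(\lambda)+1$.

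The heart of the argument is an exchangeability step. Since the joint law of the $n+1$ scores is invariant under permutations, conditionally on the unordered multiset of all $n+1$ scores each of the $n+1$ positions is equally likely to host the test point; hence
\[
\Pr\brackets{s(\vx_{n+1},y_{n+1})\leq\lambda \mid \text{multiset of scores}} \;=\; \frac{M(\lambda)}{n+1}.
\]
Plugging the sandwich on $M(\lambda)$ into this identity and then removing the conditioning yields the two-sided bound claimed in the lemma.

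The step I expect to be the main obstacle is the careful handling of ties: the ``uniform over $n+1$ positions'' symmetry requires an unambiguous ranking, which can fail on the level set $\{s=\lambda\}$. The standard remedy is to either assume the scores are almost surely distinct, or to break ties using an independent uniform jitter; in either case the inclusive indicator $\1[s\leq\lambda]$ stays well-defined and the sandwich $N(\lambda)\leq M(\lambda)\leq N(\lambda)+1$ is preserved, so the counting identity transfers to the stated bound without modification.
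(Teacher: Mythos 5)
Your proposal is correct and takes essentially the same route as the paper: the paper invokes the order-statistic identity $\Pr[Z_{n+1}\le Z_{(k)}]=k/(n+1)$ for exchangeable scores and sandwiches $\lambda$ between consecutive calibration order statistics, which is the same ``test score is equally likely to occupy any of the $n+1$ rank positions'' symmetry you exploit via the augmented count $M(\lambda)=N(\lambda)+\1[s(\vx_{n+1},y_{n+1})\le\lambda]$. Your bookkeeping (attributing the $1/(n+1)$ slack directly to the test-point indicator) and your explicit remark on tie-breaking are minor presentational differences, not a different argument.
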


\noindent {\bf CF for a Fixed Label:} In standard CP, coverage is only evaluated for the true label, $y_i$. However, for fairness evaluation, it is essential to balance disparity between groups for \textbf{all} positive labels (see Table \ref{tab:fairness_defs}. So for conformal fairness evaluation, coverage needs to be balanced between groups for any given $\tilde{y}\in\mathcal{Y}^+$, as seen in Table \ref{new:tab:conf_fairness_defs}. Lemma \ref{new:lem:cp_fixed_label} asserts that we can perform CP using a fixed label and get the same coverage guarantees.

\begin{restatable}{lemma}{lemCPFixed}
    Equation \ref{eq:cp} holds if we replace $\braces{(\vx_i, y_i)}$ with $\braces{(\vx_i, \tilde{y})}$ for a fixed $\tilde{y}\in\gY$.
    \label{new:lem:cp_fixed_label}
\end{restatable}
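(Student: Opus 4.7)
The approach is to recognize that the standard split CP guarantee (Equation \ref{eq:cp}) never uses the fact that $y_{i}$ is the \emph{true} label associated with $\vx_{i}$; it relies only on the exchangeability of the sequence of non-conformity scores. Substituting any fixed label $\tilde{y}$ into the score function preserves exchangeability, so the standard argument transfers verbatim.

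Concretely, I would proceed in four short steps. \textbf{Step 1:} Since $\calib\cup\braces{(\vx_{n+1}, y_{n+1})}$ is exchangeable by assumption, taking the marginal in the feature coordinate shows that $\braces{\vx_i}_{i=1}^{n+1}$ is an exchangeable sequence. \textbf{Step 2:} For a fixed $\tilde{y}\in\gY$, the map $\vx\mapsto s(\vx,\tilde{y})$ is deterministic, and applying a common deterministic function coordinatewise to an exchangeable sequence yields an exchangeable sequence, so $\braces{s(\vx_i,\tilde{y})}_{i=1}^{n+1}$ is exchangeable. \textbf{Step 3:} Invoke the standard conformal quantile argument on this exchangeable scalar sequence: the rank of $s(\vx_{n+1},\tilde{y})$ among the $n+1$ scores is (essentially) uniform on $\braces{1,\ldots,n+1}$, which yields
\[
1-\alpha \;\leq\; \Pr\brackets{s(\vx_{n+1},\tilde{y})\leq \hat{q}(\alpha)} \;\leq\; 1-\alpha+\frac{1}{n+1},
\]
where $\hat{q}(\alpha)$ is now the conformal quantile computed from $\braces{s(\vx_i,\tilde{y})}_{i=1}^{n}$. \textbf{Step 4:} By the definition of the prediction set, the event $\braces{s(\vx_{n+1},\tilde{y})\leq \hat{q}(\alpha)}$ coincides with $\braces{\tilde{y}\in\mathcal{C}_{\hat{q}(\alpha)}(\vx_{n+1})}$, which is the fixed-label analogue of Equation \ref{eq:cp}.

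The main (and essentially only) subtlety is the usual one governing the upper bound in CP: ties among the $n+1$ scores can make the rank merely sub-uniform, so one either assumes the score distribution is continuous or adopts a randomized tie-breaking convention. This assumption is already implicit in the usage of Equation \ref{eq:cp} elsewhere in the paper, so no new hypothesis is introduced. There is no genuinely hard step; the argument is effectively a remark that the CP guarantee is label-agnostic once the relevant score sequence is exchangeable, and the substitution $y_i\mapsto \tilde{y}$ simply replaces one exchangeable scalar sequence with another.
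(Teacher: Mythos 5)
Your proposal is correct and follows essentially the same route as the paper's proof: both arguments rest on the observation that for a fixed $\tilde{y}$ the sequence $s(\vx_1,\tilde{y}),\ldots,s(\vx_{n+1},\tilde{y})$ is exchangeable, after which the standard conformal quantile bound applies verbatim. Your version is somewhat more explicit (spelling out that exchangeability is preserved under a common deterministic map and flagging the tie-breaking caveat for the upper bound), but these are elaborations of the same argument rather than a different one.
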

\noindent {\bf Connecting Theory to the Framework:} For a particular fairness metric, we filter the calibration set based on the conditional from Table \ref{new:tab:conf_fairness_defs} and achieve bounds on the conditional coverage with Lemma \ref{new:lem:cp_conditional}. By Lemma \ref{new:lem:cp_fixed_label}, the bounds continue to hold when considering the conditional coverage for a fixed positive label. We use Lemma \ref{new:lem:cp_inv} to perform an inverse quantile to compute the coverage under various $\lambda$ thresholds.
With the coverages for a fixed positive label and each sensitive group, we compute the worst pairwise coverage gap across the groups using the bounds given by Lemma \ref{new:lem:cp_fixed_label} to evaluate and control fairness at the desired closeness criterion.

\subsection{Core Conformal Fairness (CF) Algorithm}

\noindent {\bf Input}: The input to the core CF algorithm (Algorithm~\ref{new:alg:cp_fair}), include the calibration set, $\calib$, the set of labels $(\gY)$ and positive labels $(\gY^+)$, the set of sensitive groups, $\gG$, a closeness criterion, $c$, the threshold search space, $\Lambda$, a fairness metric, $M$, and a corresponding filter function, $F_M$.

\noindent {\bf Specifying $\bm c$}: In practice, the choice of closeness criterion, $c$, may not be in the hands of the practitioner but instead defined by a (external) regulatory framework. For example, for Demographic Parity and $c$, we want that $\forall g_a,g_b\in \mathcal{G}$, 
$$\abs{\Pr\brackets{y_{n + 1}\in \mathcal{C}_{\lambda}(\bm x_{n+1})~|\bm x_{n+1}\in g_a} - \Pr\brackets{y_{n + 1}\in \mathcal{C}_{\lambda}(\bm x_{n+1})~|\bm x_{n+1}\in g_b}} < c.$$
\noindent {\bf Choosing $\bm{\Lambda}$}: The algorithm accepts a user-provided search space, $\Lambda$, which avoids degenerate thresholds and can guarantee desirable conditions. For our experiments, we set $\Lambda = \brackets{\hat{q}(\alpha), \max\{\gS_{\text{calib}}\}}$, ensuring that the optimal threshold, $\lambda_{opt}$, is at least $\hat{q}(\alpha)$. 
Since $\lambda_{opt} \geq \hat{q}(\alpha)$, the coverage increases for larger thresholds and still satisfies the $1 - \alpha$ coverage requirement. That is, $$1 - \alpha\leq \Pr\brackets{y_{n + 1}\in \mathcal{C}_{\hat{q}(\alpha)}(\bm{x}_{n+1})}\leq \Pr\brackets{y_{n + 1}\in \mathcal{C}_{\lambda_{opt}}(\bm{x}_{n+1})}.$$

\noindent {\bf Procedure:} For each $\lambda\in\Lambda$, we want to check if it balances the coverage between groups for all positive labels. So, for each $(g, \tilde{y})\in \gG\times \gY^{+}$, we use $F_M$ to filter $\calib$ (Line \ref{new:line:filter} in Algorithm \ref{new:alg:cp_fair}) and then compute the non-conformity scores, $\gS_{\text{calib}_{(g, \tilde{y})}}$ (Line \ref{new:line:scores}). With the inverse quantile, the coverage level is computed at the $\lambda$ threshold on the scores (Line \ref{new:line:inverse_quantil}). We then compare the coverages for a fixed $y\in\gY^+$ between groups and check if the worst-case disparity satisfies the desired closeness criterion (Lines \ref{new:line:test_coverage_start}-\ref{new:line:test_coverage_end}), forming the set $\mathbf{\Lambda_M}$ (Line \ref{new:line:satisfying_lambdas}). We choose $\lambda_{opt}=\min_{\lambda}{\Lambda_M}$ (Line \ref{new:line:lambda_opt}) to minimize the final prediction set size (i.e., get the best efficiency). When evaluating multiple fairness metrics simultaneously, for example with Equalized Odds, the framework can be used to construct the set of satisfying lambdas for Equal Opportunity and Predictive Equality, $\Lambda_{EO}$ and $\Lambda_{PE}$ respectively. Then, $\lambda_{opt} = \min_{\lambda}\{\Lambda_{EO}\cap \Lambda_{PE}\}$.

\begin{algorithm}
\caption{Conformal Fairness Framework}
\BrText{Cleaned up the algorithm and updated with formalized notation}{
\begin{algorithmic}[1]
\small
\Procedure{Conformal\_Fairness}{$\calib$, $\gY$, $\gY^+$, $\gG$, $c$, $\Lambda$, $F_M$}

\State $\Lambda_M = \braces{\lambda \in \Lambda~|~\Call{\text{Satisfy\_lambda}}{\calib,\gY, \gY^+,\gG,c, \lambda, F_M}}$ \label{new:line:satisfying_lambdas}\Comment{Optimize for fairness}

\State $\lambda_{\text{opt}}= \min_{\lambda}{\Lambda_M}$\label{new:line:lambda_opt}\Comment{Optimize for efficiency}

\State \Return $\lambda_{\text{opt}}$
\EndProcedure
\\
\Procedure{Satisfy\_lambda}{$\calib$, $\gY$, $\gY^+$,$\gG$, $c$,$\lambda$, $F_M$}
\State $\text{label\_coverages} = \brackets{0}_{(\gG_i,y)\in \gG \times \gY}$
\State $\text{interval\_widths} = \brackets{0}_{(\gG_i,y)\in \gG \times \gY}$
\For{$(g,\tilde{y})\in \gG \times \gY^+$}
    \State $\calib_{(g, \tilde{y})} = \braces{(\bm{x}_i, y_i)\in\calib~|~ F_M(\bm{x}_i, y_i, g, \tilde{y}) = 1}$ \label{new:line:filter}
    \State $\gS_{{\text{calib}}_{(g, \tilde{y})}} = \braces{s(\vx_i,y_i)~|~(\vx_i, y_i)\in\calib_{(g, \tilde{y})}}$ \label{new:line:scores}
    \State $\text{interval\_widths}\text{[}(g,\tilde{y})\text{]}=\frac{1}{\abs{\calib_{(g, \tilde{y})}} + 1}$
    \Comment{Uses Lemma \ref{new:lem:cp_conditional}}
    \State $\text{label\_coverages}\text{[}(g,\tilde{y})\text{]}=Q^{-1}(\lambda,\gS_{{\text{calib}}_{(g, \tilde{y})}})$ \label{new:line:inverse_quantil}
    \Comment{Uses Lemma \ref{new:lem:cp_inv}}
\EndFor
\For{$\tilde{y}\in\gY^+$}\label{new:line:test_coverage_start}
    \Comment{Uses Lemma \ref{new:lem:cp_fixed_label}}
    \State $\alpha_{\text{min}} = \text{min}\parens{\text{label\_coverages}\text{[}(\cdot,\tilde{y})\text{]} - \text{interval\_widths}\text{[}(\cdot,\tilde{y})\text{]}}$
    
    \State $\alpha_{\text{max}} = \text{max}\parens{\text{label\_coverages}\text{[}(\cdot,y)\text{]}}$
    \If{$\alpha_{\text{max}} -\alpha_{\text{min}} > c$}\label{new:line:disparity_check}
    \Return $\text{False}$
    \EndIf
\EndFor\label{new:line:test_coverage_end}

\State\Return $\text{True}$
\EndProcedure
\end{algorithmic}
}
\label{new:alg:cp_fair}
\end{algorithm}

\noindent {\bf Using multiple $\bm{\lambda}$ thresholds:} We also consider a classwise approach where we choose a $[\lambda^0_{opt},\hdots,\lambda^{k-1}_{opt}]= \bm{\lambda_{opt}} \in [0,1]^{K}$ for each of the $K$ classes. $\lambda^i_{opt}$ is only required to satisfy the closeness criterion for the $i^\text{th}$ class. One can achieve this by setting $\gY^+ = \{\tilde{y}\}$ and repeating Lines \ref{new:line:satisfying_lambdas} and \ref{new:line:lambda_opt} in Algorithm \ref{new:alg:cp_fair} for each $\tilde{y}\in\gY^+$. This allows for smaller $\lambda^i_{opt}$ to be chosen for most classes as they are no longer impacted by minority classes, which require a larger threshold to meet the closeness criterion.

A distinguishing feature of the CF framework is that it does not require group information at inference time. Though one can choose a different $\lambda$ for each $(g, y)\in\gG\times\gY$ pair, in streaming (or online) settings, the sensitive attribute may be unavailable. 
For example, loan applications may be race or gender-blind to enforce fairer judgment. In these settings, the CF Framework is not limited and provides group conditional coverage when group information is absent at inference time. 

\subsection{Framework Extensibility}\label{new:sec:extensibility}
Algorithm~\ref{new:alg:cp_fair} directly applies to Demographic Parity, Equal Opportunity, Predictive Equality, and Equalized Odds. The following modifications are necessary to accommodate Disparate Impact, Predictive Parity, and some user-defined metrics.

\noindent {\bf Disparate Impact:} The standard criterion for Disparate Impact is the \textit{Four-Fifths Rule} applied to Demographic Parity. To control the conditional coverages for the Four-Fifths Rule, we only change Line \ref{new:line:disparity_check} in Algorithm \ref{new:alg:cp_fair} to check if $(1 - \alpha_{\max})/(1 - \alpha_{\min}) < c$ for $c = 0.8$. 

\noindent {\bf Predictive Parity:} Predictive Parity seeks to balance the Positive Predictive Value (PPV) between groups \citep{verma2018fairness}. It differs from the other fairness metrics in Table \ref{new:tab:conf_fairness_defs} as it is \textit{conditioned on membership in the prediction set}. Given the objective of balancing conditional coverage, the conformal definition of Predictive Parity, and Bayes' Theorem, we get
\begin{equation}
    \Pr\brackets{Y = \tilde y \mid \tilde y \in \predict, X\in g_i} = \underset{\text{Equal Opportunity over Demographic Parity}}{\underbrace{\frac{\Pr\brackets{\tilde y \in \predict \mid Y = \tilde y, X\in g_i}}{\Pr\brackets{\tilde y \in \predict \mid X\in g_i}}}}~\cdot~\underset{\text{Conditional Label Probability}}{\underbrace{\Pr\brackets{Y=\tilde y~\mid~X\in g_i}}}
    \label{new:eq:pred_parity}
\end{equation}
for $\tilde y\in\gY^+$ and $g_i\in\gG$. A threshold, $\lambda$, is guaranteed to exist for any $\mathcal{Y}^{+}\subseteq \mathcal{Y}$ if $c$ is greater than the maximum pairwise total variation distance of the group-conditioned label distribution. This is formalized in Theorem \ref{new:thm:pred_parity_main}.

\begin{restatable}{theorem}{thmPredParity}\label{new:thm:pred_parity_main}
Let $W$ be a random variable for a label distribution over $\gY$. Let $W_i \sim W | (X \in g_i)$ -- the label distribution conditioned on group membership. Then there exists $\lambda$ such that for $c\geq\max\braces{D_{TV}(W_i, W_j)~|~i,j\in \{1,\hdots,|\mathcal{G}|\}}$, where $D_{TV}$ is the total variation distance\footnote{A \textit{modified total variation distance}, $D_{TV}^{+}(W_i, W_j) \coloneq \sup_{k\in \mathcal{Y}^{+}}\abs{\Pr\brackets{W_i = k} - \Pr\brackets{W_j = k}},$ can be used in place of $D_{TV}$ in Theorem \ref{new:thm:pred_parity_main} for a weaker assumption about $c$, which still gives a satisfying $\lambda$.}, the difference in Predictive Parity between groups is within $c$. 
\end{restatable}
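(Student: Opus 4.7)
My plan is to exploit the Bayes decomposition in equation (\ref{new:eq:pred_parity}) and pick $\lambda$ large enough that the Equal-Opportunity-over-Demographic-Parity ratio collapses to $1$, reducing the Predictive Parity gap to a pairwise comparison of the group-conditional marginal label distributions $W_i$. Concretely, I would take $\lambda^\ast$ at (or beyond) the supremum of the non-conformity score's range, so that $s(\vx, \tilde y) \leq \lambda^\ast$ holds identically and $\mathcal{C}_{\lambda^\ast}(\vx) = \gY$ for every $\vx$. Applying Lemma \ref{new:lem:cp_inv} both unconditionally and through the group-and-label-filtered version supplied by Lemma \ref{new:lem:cp_conditional}, this forces the test-time probabilities $\Pr\brackets{\tilde y \in \mathcal{C}_{\lambda^\ast}(X) \mid Y = \tilde y, X \in g_i}$ and $\Pr\brackets{\tilde y \in \mathcal{C}_{\lambda^\ast}(X) \mid X \in g_i}$ to both equal $1$ for every $g_i \in \gG$ and $\tilde y \in \gY^{+}$, so the first factor of (\ref{new:eq:pred_parity}) is exactly $1$.

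Substituting back into (\ref{new:eq:pred_parity}), the Predictive Parity value for $(g_i, \tilde y)$ reduces to $\Pr\brackets{Y = \tilde y \mid X \in g_i} = \Pr\brackets{W_i = \tilde y}$. Hence for any pair of groups $(g_i, g_j)$ and any $\tilde y \in \gY^{+}$,
\begin{align*}
\abs{\Pr\brackets{W_i = \tilde y} - \Pr\brackets{W_j = \tilde y}} \;\leq\; D_{TV}(W_i, W_j) \;\leq\; c,
\end{align*}
where the first inequality follows by instantiating the sup in $D_{TV}(P, Q) = \sup_A \abs{P(A) - Q(A)}$ at the singleton $A = \{\tilde y\}$, and the second is the hypothesis on $c$. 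Maximizing over pairs $(i, j)$ and over $\tilde y \in \gY^{+}$ gives the claim. The footnote variant with $D_{TV}^{+}$ is handled identically, since the singleton argument only ever needs $\tilde y \in \gY^{+}$, yielding the weaker sufficient condition.

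The main obstacle I anticipate is largely bookkeeping: one must verify that the decomposition (\ref{new:eq:pred_parity}) is well-defined at $\lambda^\ast$ (the denominator equals $1$, so yes), and one must accept that this existence argument produces a \emph{trivial} threshold with $\mathcal{C}_{\lambda^\ast}(\vx) = \gY$ — this is admissible because the theorem does not constrain $\lambda$ to lie in the algorithm's search space $\Lambda$. A strictly stronger, constructive result — finding a non-trivial $\lambda$ that still satisfies Predictive Parity — would require a continuity/intermediate-value argument tracking how the ratio in (\ref{new:eq:pred_parity}) deviates from $1$ at intermediate $\lambda$, together with quantitative control on that deviation; that is the natural follow-up if one wanted to simultaneously retain non-trivial efficiency of the prediction sets.
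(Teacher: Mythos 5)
Your proposal is correct and follows essentially the same route as the paper's proof: choose $\lambda$ at the maximum of the score range so that $\mathcal{C}_{\lambda}(\vx) = \gY$, which makes both conditional-coverage factors in Equation \ref{new:eq:pred_parity} equal to $1$ and reduces the Predictive Parity gap to $\abs{\Pr\brackets{W_i = \tilde y} - \Pr\brackets{W_j = \tilde y}} \leq D_{TV}(W_i, W_j) \leq c$. Your added remarks on the triviality of this threshold and the singleton instantiation of the total variation supremum are accurate elaborations of the same argument, not a different one.
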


In Equation \ref{new:eq:pred_parity}, the Equal Opportunity, Demographic Parity, and Conditional Label Probability terms lie within finite intervals. This allows us to compute an interval where conformal Predictive Parity holds and use the CF framework to identify values of $\lambda$ that meet the coverage closeness criterion. Further theoretical details and the proof of Theorem \ref{new:thm:pred_parity_main} are provided in Appendix \ref{app:pred_parity}.

To control for arbitrarily small values of $c$, we use the \textit{Predictive Parity Proxy}--an example of a user-defined metric-- defined in Equation \ref{new:eq:pred_par_proxy}. For all $g_i\in\mathcal{G}, \tilde{y}\in\mathcal{Y}^{+}$,

\begin{equation}
\label{new:eq:pred_par_proxy}
    \Pr\parens{Y = \tilde{y} ~\mid~ \tilde{y} \in \predict, X\in g_i} - \Pr\parens{Y=\tilde{y}~\mid~X\in g_i}.
\end{equation}


In cases where it is possible to assume the label distribution is independent of group membership, Equation \ref{new:eq:pred_parity} can be directly controlled for an arbitrarily small closeness criterion, $c$. Proofs and technical details on these modifications can be found in Appendix \ref{app:pred_parity}.

\subsection{Leveraging the CF Framework for Fairness Auditing}
Using the Conformal Fairness Framework, one can audit if the disparity of a conformal predictor between multiple groups violates a user-specified fairness criterion. Specifically, we have thus far focused on fairness criteria concerning bounding the disparity between groups using the fairness metrics described in Table \ref{new:tab:conf_fairness_defs} by some closeness criterion, $c$. It is straightforward to support user-defined fairness metrics concerning label coverage. While Algorithm \ref{new:alg:cp_fair}, as presented, gives a method of finding an optimal $\lambda$ threshold which satisfies the fairness guarantees using Lemmas \ref{new:lem:cp_conditional}, \ref{new:lem:cp_inv}, and \ref{new:lem:cp_fixed_label}, the same \textsc{Satisfy\_Lambda} procedure can be leveraged to check if a \textit{given} $\lambda$ used by a conformal predictor satisfies the same fairness guarantees. Notably, the CF framework can also be leveraged even if the conformal predictor is treated as a black-box model. In this case, we construct an $\audit$ set exchangeable with the calibration data used for the conformal predictor. Using $\audit$, we can determine if the conformal predictor satisfies the corresponding fairness guarantee given the fairness metric and the $\lambda$ threshold used.

\subsection{Non-Conformity Scores}
There are several choices for the non-conformity score for performing fair conformal prediction with classification tasks. We currently implement TPS~\citep{sadinle2019least}, APS~\citep{romano2020classification}, 
RAPS~\citep{angelopoulos2022uncertaintysetsimageclassifiers}, DAPS~\citep{zargarbashi23conformal}, and CFGNN~\citep{huang2024uncertainty} in the CF framework, though any non-conformity score can be used. More details on the specifics of each non-conformity score can be found in Appendix \ref{app:scores}.

\setlength{\tabcolsep}{1mm}

\section{Experiments}
\subsection{Setup}
\textbf{Datasets:} To evaluate the CF Framework, we used five multi-class datasets: Pokec-n \citep{takac2012data}, Pokec-z \citep{takac2012data}, Credit \citep{agarwal2021towards}, ACSIncome \citep{ding2021retiring}, and ACSEducation \citep{ding2021retiring} (see Table~\ref{tab:datasets} for details). For each dataset, we use a $30\%/20\%/25\%/25\%$ stratified split of the labeled points for $\train/\valid/\calib/\test$.

\begin{table}[h]
    \small
    \centering
    \caption{Dataset Statistics. T refers to Tabular, and G refers to Graph.}
    \label{tab:datasets}
    \begin{tabular}{cccccc}
        \toprule
         \textbf{Name} & \textbf{Type} & \textbf{Size} & \textbf{\# Labeled} & \textbf{\# Groups} & \textbf{\# Classes}\\
         \midrule
         ACSIncome & T & $1,664,500$  & ALL & race$(9)$ & $4$\\
         ACSEducation & T & $1,664,500$ & ALL & race$(9)$ & $6$\\
         \midrule
         \textbf{Name} & \textbf{Type} & \bm{$(|\gV|,|\gE|)$} & \textbf{\# Labeled} & \textbf{\# Groups} & \textbf{\# Classes}\\
         \midrule
         Credit & T/G & $(30,000,~1,436,858)$ & ALL & age$(2)$ & $4$\\
         Pokec-n & G & $(66,569,~729,129)$ & $8,797$ & region$(2)$,~gender$(2)$ & $4$
         \\
         Pokec-z & G & $(66,569,~729,129)$ & $8,797$ & region$(2)$,~gender$(2)$ & $4$ \\
         \bottomrule
    \end{tabular}
\end{table}

\textbf{Models:} 
For the graph datasets, we evaluated with GCN~\citep{kipf2017}, GraphSAGE~\citep{hamilton2017}, or GAT~\citep{veličković2018graphattentionnetworks} as the base model (results reported are for the highest performing base model). 
For Credit, we additionally evaluated XGBoost~\citep{chen2016xgb} (i.e., ignoring the graph structure) as we empirically observed this approach to outperform the graph neural network baselines in terms of efficiency for this dataset. The choice of ignoring edge information while training Credit on XGBoost does not prohibit us from using CFGNN or DAPS, which utilize the edge information. The conformal predictor requires the softmax logits from the base model (i.e., XGBoost) but is otherwise model agnostic.
For ACSIncome and ACSEducation, we used an XGBoost model. Each model's hyperparameters were tuned as discussed in Appendix \ref{app:hpt}.

\textbf{Baseline:} For each dataset and CP non-conformity score, we built a conformal predictor to achieve a coverage level of $1 - \alpha = 0.9$. Then, we assess fairness according to the specific fairness metric using the \textsc{Satisfy\_Lambda} from Algorithm \ref{new:alg:cp_fair} for $\lambda = \hat{q}(\alpha)$. 

\textbf{Evaluation Metrics:} We report the \textit{worst fairness disparity} and \textit{efficiency}. For Disparate Impact, the worst fairness disparity is the \textit{minimum} $(1 - \alpha_{\max}) / (1 - \alpha_{\min})$ across the positive labels. For the remaining metrics, we record the \textit{maximum} $\alpha_{\max} - \alpha_{\min}$ across the positive labels.

\subsection{Results}
For each figure, we use a line to indicate the base conformal predictor's \textit{average} worst fairness disparity across different thresholds, the bar plot for the worst fairness disparity using the CF Framework, and a dot to denote the desired fairness disparity. We report the average base performance for simplicity and readability of the figures. In every experiment, except for Figure \ref{fig:credit}, the CF framework was \textbf{better} than the average base conformal predictor. We provide a more granular version of Figure \ref{fig:credit} with Figure \ref{fig:credit-granular}, where it is clear that the framework performs better for every closeness threshold. 

\paragraph{Controlling for Fairness Disparity:} For different closeness thresholds, our CF Framework effectively controls the fairness disparity for several metrics compared to the base conformal predictor. In Figure \ref{fig:acs_income} and \ref{fig:credit}, we can observe that in terms of fairness disparity, our CF Framework {\bf precisely} (note step-wise change with $c$ on violations) improves upon the baseline conformal predictor. As with algorithmic fairness, a trade-off is involved in that there is a slightly worse efficiency. From Figure \ref{fig:credit}, we continue to observe this for both standard and graph-based conformal predictors. Furthermore, if the base conformal predictor is already ``fair'' according to our fairness disparity criterion, then the CF Framework will report the results accordingly. This phenomenon is observed with the CFGNN results in Figure \ref{fig:credit}, where the CF Framework matches the baseline regarding both evaluation metrics. This behavior of the CF Framework makes it suitable to leverage for black box fairness auditing (as noted previously). We present additional results, for example, the disparity results for the CF Framework without classwise lambdas in Appendix \ref{app:more_results}. Notably, the prediction set sizes are more prominent due to selecting a larger $\lambda$ than the classwise approach (see Figure~\ref{fig:acsincome-no-classwise} vs Figure~\ref{fig:acs_income}).

\paragraph{Controlling for Disparate Impact:} For Disparate Impact, we present results for the standard \textit{Four-Fifths Rule}. In Table \ref{tab:disparate_impact}, we see that using the CF Framework can significantly improve the base conformal predictor for the \textit{Four-Fifths Rule}. The disparate impact value is far below the desired $0.8$ for the base conformal predictor, sometimes even less than $0.4$, as with Credit with TPS and ACSIncome dataset. Our framework, however, is close to the $0.8$ value and in some cases surpasses it, like in Credit with CFGNN, with minor effects on the efficiency for both datasets.

\begin{table}[h]
    \small
    \centering
    \caption{\textit{Four-Fifths Rule} for Credit and ACSIncome. Our framework surpasses the base conformal predictor and achieves close to or exceeds the disparate impact value of $0.80$. The - means N/A.}
    \label{tab:disparate_impact}
    \begin{tabular}{cccccccccccc}
        \toprule
         & & \multicolumn{2}{c}{\textbf{APS}} & \multicolumn{2}{c}{\textbf{RAPS}} & \multicolumn{2}{c}{\textbf{TPS}} & \multicolumn{2}{c}{\textbf{CFGNN}} & \multicolumn{2}{c}{\textbf{DAPS}} \\
         & & Base & CF & Base & CF & Base & CF & Base & CF & Base & CF\\
         \midrule
         \multirow{2}{*}{\textbf{Credit}} & Disp. Impact & 0.646 & \textbf{0.821} & 0.586 & \textbf{0.768} & 0.252 & \textbf{0.793} &\textbf{0.922}& \textbf{0.922}& 0.539 & \textbf{0.809} \\
         & Efficiency & 2.326 & 2.513 & 2.326 & 2.509 & 2.268 & 2.558 & 2.202 & 2.202 & 2.254 & 2.526 \\
         \toprule
         \multirow{2}{*}{\textbf{ACSIncome}} &  Disp. Impact & 0.397 & \textbf{0.797} & 0.387 & \textbf{0.790} & 0.356 & \textbf{0.798} & - & - & - & -\\
          & Efficiency & 2.212 & 2.674 & 2.169 & 2.752 & 2.109 & 2.679 & - & - & - & -\\
         \bottomrule
    \end{tabular}
\end{table}

\paragraph{Agnostic to Non-Conformity Score:} As discussed earlier, the CF Framework can support a variety of non-conformity scores, emphasizing the agnostic nature of our framework. We achieved effective results for conformal predictors with different underlying non-conformity score functions for all the experiments. Further results can be found in Appendix \ref{app:more_results}.

\begin{figure}[ht!]
    \centering
    \begin{subfigure}{\textwidth}
    \centering
    \includegraphics[width=0.7\linewidth]{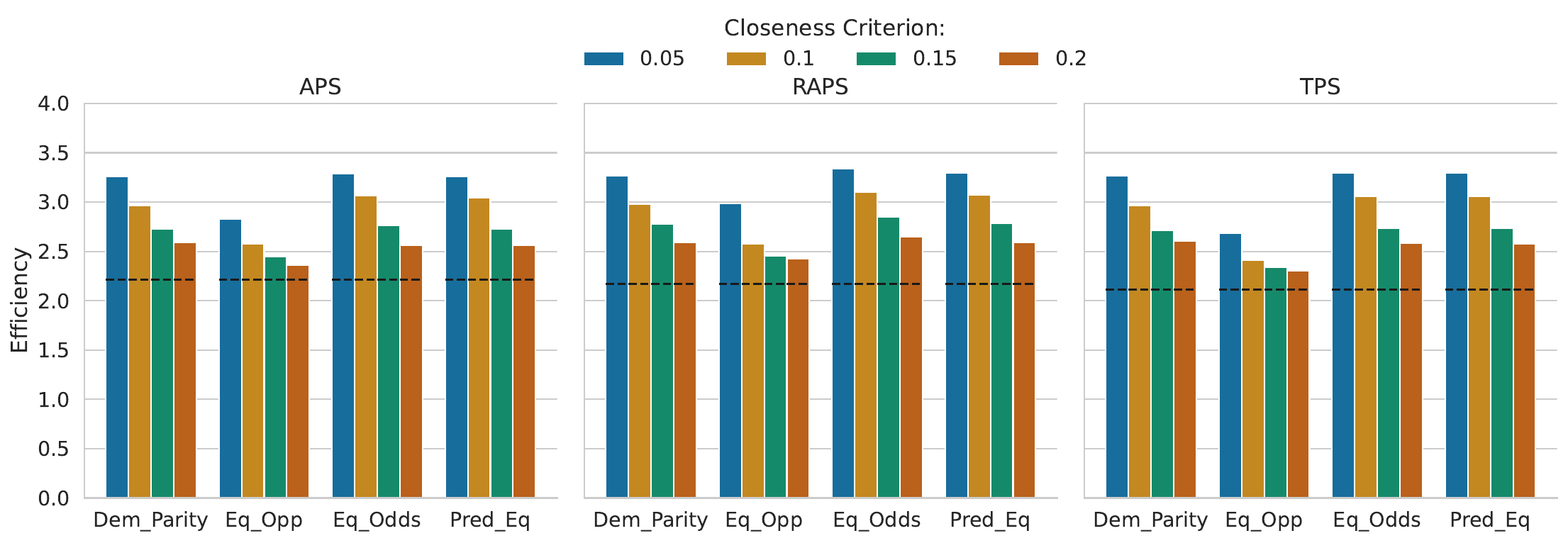}
    \end{subfigure}
    \begin{subfigure}{\textwidth}
    \centering
    \includegraphics[width=0.7\linewidth]{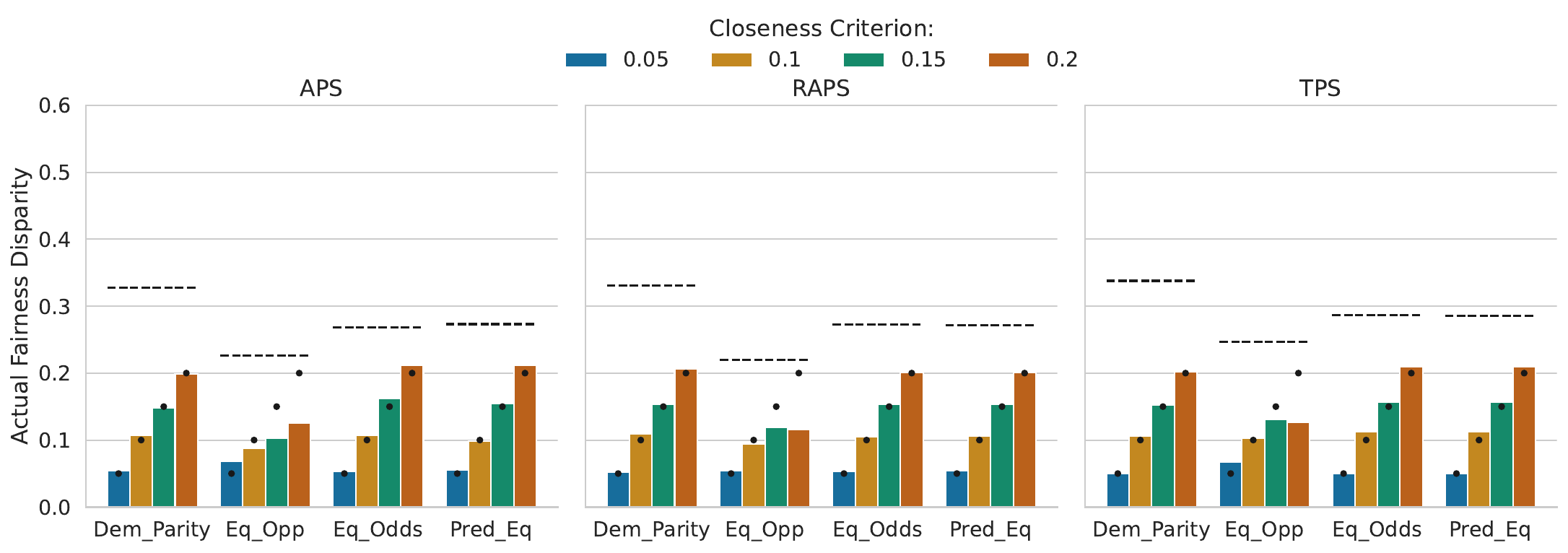}
    \end{subfigure}
    \caption{\small\textbf{ACSIncome}. The top plots are efficiency results, while the bottom are the fairness disparities for (a) APS, (b) RAPS, and (c) TPS. In all cases, our framework gives results at or better than the desired threshold and better than the baseline.}
    \label{fig:acs_income}
\end{figure}

\begin{figure}[ht!]
    \centering
    \begin{subfigure}{\textwidth}
    \includegraphics[width=\linewidth]{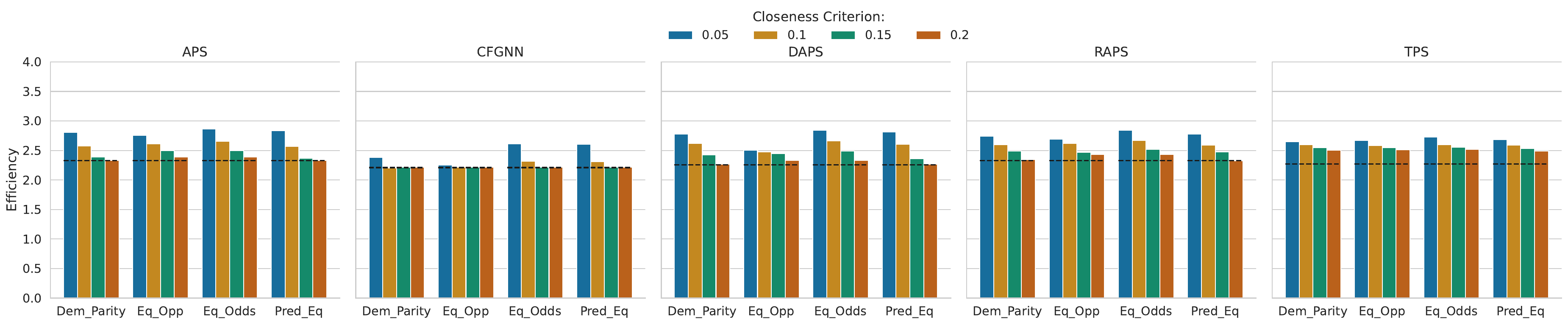}
    \end{subfigure}
    \begin{subfigure}{\textwidth}
    \includegraphics[width=\linewidth]{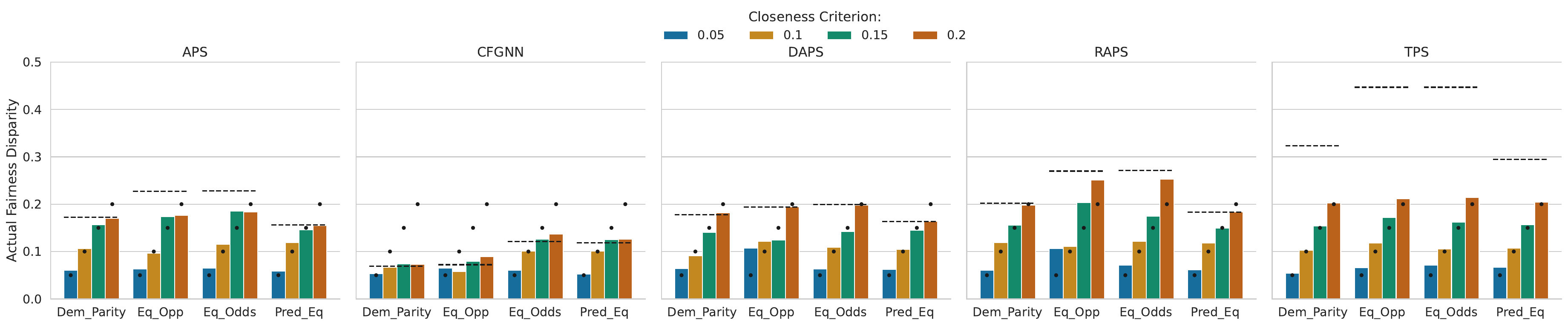}
    \end{subfigure}
    \caption{\small\textbf{Credit}. The top plots are efficiency results, while the bottom are the fairness disparities for (a) APS, (b) CFGNN, (c) DAPS, (d) RAPS, and (e) TPS. In all cases, our framework achieves the desired coverage gap better than the baseline, with a minor impact on efficiency.}
    \label{fig:credit}
\end{figure}

\paragraph{Intersectional Fairness:} When characterizing data points into groups, we are not limited to a single sensitive attribute. In many applications, there can be multiple sensitive attributes (e.g., race and gender) that need to be considered. Our CF Framework is not limited to analyzing a single sensitive attribute. To demonstrate this, we experiment with the Pokec-n dataset. Pokec-n has two sensitive attributes, namely \textit{region} and \textit{gender}. We treat each combination of region and gender as a separate sensitive group and apply the CF framework to control for fairness disparities. Figure \ref{fig:pokec_n_intersectional} shows that the CF framework improves upon the base conformal predictor regarding fairness disparity. This improvement is starker with the graph-based conformal predictors, CFGNN, and DAPS, as seen in Figure \ref{fig:pokec_n_intersectional} plots (b) and (c).

A key challenge with intersectional fairness is the multiplicative increase in the number of groups (i.e., combinations of sensitive attributes and classes) that must be calibrated and evaluated. This increases the data requirements needed to satisfy the coverage guarantees discussed in Section \ref{new:subsec:theory}, as these guarantees become harder to achieve when the size of $\gD_{(g, y)}$ decreases. This problem is exacerbated (in empirical results) for datasets with only a few labeled points, such as Pokec-n. For Pokec-n, using a standard data split, the calibration set has around $2200$ data points. The calibration set is then further split to get the conditional positive label coverage for each positive label and group pair. This results in the calibration being done with sets of fewer than a few hundred points, which is much lower than the suggested $1000$ points in the literature~\citep{angelopoulos2021}. In Figure \ref{fig:pokec_n_intersectional}, the effect of this challenge is seen with the fairness disparity given by the CF Framework being slightly above the desired closeness threshold for $c = 0.1$. Nevertheless, the guarantees still hold, even under intersectional fairness constraints.

\begin{figure}[ht!]
    \centering
    \begin{subfigure}{\textwidth}
        \includegraphics[width=\linewidth]{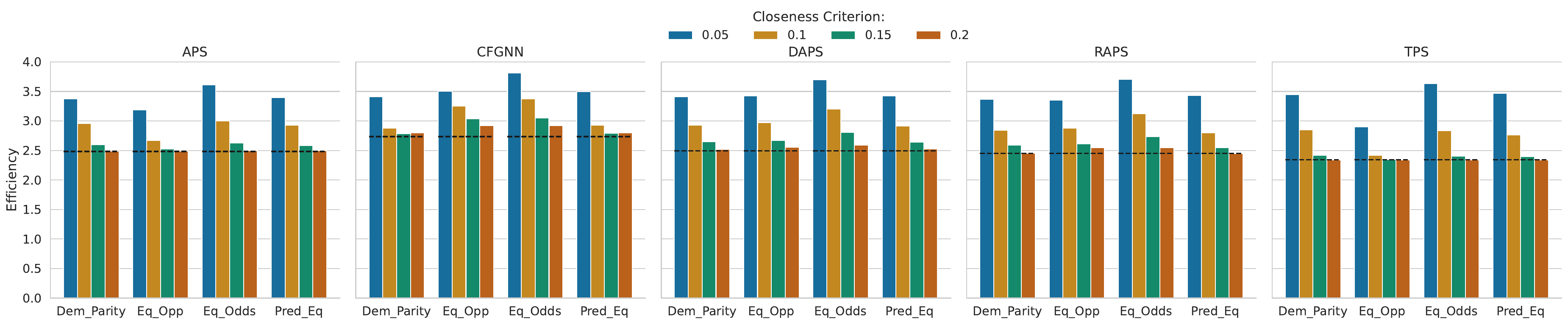}
    \end{subfigure}
    \begin{subfigure}{\textwidth}
        \includegraphics[width=\linewidth]{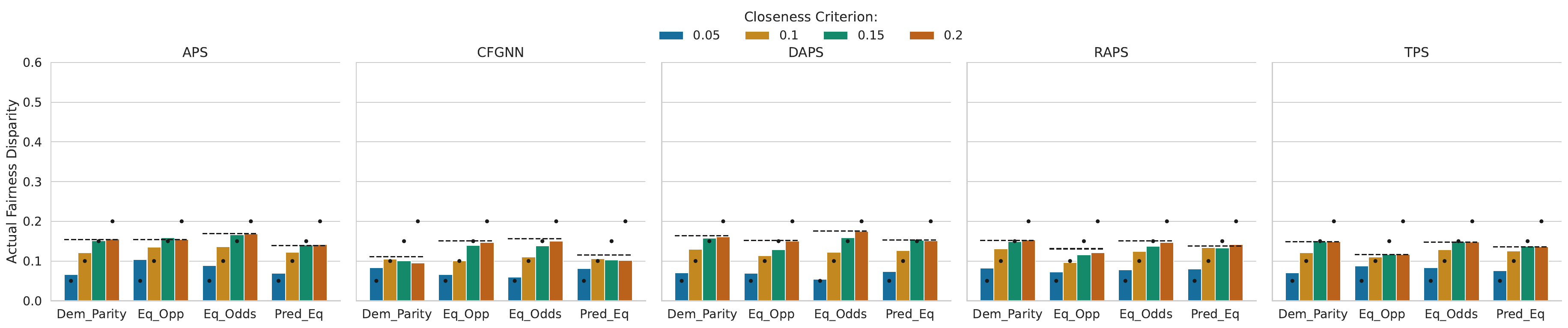}
    \end{subfigure}
    \caption{\small\textbf{Pokec-n} using \textbf{both} sensitive attributes. The top plots are the efficiency results, while the bottom plots are the fairness disparities for (a) APS, (b) CFGNN, (c) DAPS, (d) RAPS, and (e) TPS. CFGNN (b) and DAPS (c) achieve the desired fairness coverage thresholds better than standard CP methods.}
    \label{fig:pokec_n_intersectional}
\end{figure}

\paragraph{Predictive Parity Proxy:}
As discussed, the CF framework is extensible to user-defined fairness notions. We consider the Predictive Parity Proxy in Equation \ref{new:eq:pred_par_proxy} as an example of a user's ability to provide a reasonable fairness measure (Disparate Impact, above, is another example). An experiment on ACSEducation in Table \ref{tab:acs_educ_pred_par} demonstrates we can control for arbitrarily small values of $c$, unlike the standard notion of Predictive Parity. Additionally, it empirically illustrates that we can control for disparities of probabilities conditioned on the prediction set. 
This metric can also be applied in the graph setting, as seen in Appendix \ref{app:more_results}. 

\begin{table}[ht]
    \centering
    \small
    \caption{\small\textbf{ACSEducation}. The worst-case fairness disparity, based on the Predictive Parity Proxy, with our method is below the desired $c$ threshold, while the \textit{average} baseline disparity is much higher ($>0.30$) than all of the $c$ thresholds we consider.}
    \label{tab:acs_educ_pred_par}
    \begin{tabular}{ccccccc}
        \toprule
         & Closeness Threshold ($c$) & \textbf{0.05} & \textbf{0.10} & \textbf{0.15} & \textbf{0.20}& \textbf{Base (Average)} \\
         \midrule
         \multirow{2}{*}{\textbf{APS}} &  Max Fairness Disparity & 0.044 & 0.093 & 0.152 & 0.166 & 0.411\\
          & Efficiency & 3.662 & 3.236 & 3.049 & 3.008 & 2.982\\
          \toprule
         \multirow{2}{*}{\textbf{RAPS}} &  Max Fairness Disparity & 0.043 & 0.094 & 0.153 & 0.172 & 0.268\\
          & Efficiency & 3.948 & 3.339 & 3.102 & 3.063 & 3.030\\
          \toprule
          \multirow{2}{*}{\textbf{TPS}} & Max Fairness Disparity & 0.038 & 0.091& 0.167 & 0.199 &0.319\\
         & Efficiency & 3.662 & 3.061 & 2.880 & 2.845 & 2.828\\
         \bottomrule
    \end{tabular}
\end{table}

\subsection{Discussion and Related Work}
Few prior efforts study fairness and conformal prediction~\citep{wang2023equal,lu2022fair,liu2022conformalized}. One line of work has focused on applying fairness notions toward CP problems for regression tasks, explicitly focusing on Demographic Parity \citep{liu2022conformalized} and Equal Opportunity \citep{wang2023equal}. Another line of work focuses on applying the notion of Overall Accuracy Equality for CP~\citep{lu2022fair}. This effort considers a specific medical application of detecting malignant skin conditions and applies group-balanced CP~\citep{vovk2012group-balance-cp}. 

An orthogonal direction is on (group) conditional CP. \citet{foygel2021limits} provides a theoretical grounding for conditional CP, while~\citet{gibbs2023conformal} considers the impact of covariate shift for conditional coverage under the I.I.D assumption. Others~\citet{bastani2022practical, Jung2022BatchMC} look at multivalid CP, which requires (1) group-conditional and (2) threshold-calibrated coverage guarantees – a distinct notion from Conformal Fairness. \citet{deng2023happymap} also introduces a generalization for multi-calibration and how it relates to algorithmic fairness for conformal prediction, focusing on equalized coverage for regression.

Our work differs in its breadth and flexibility (i.e., support for several fairness metrics and non-conformity scores) and focus on classification. Some existing works represent specific instantiations in our framework (e.g., \citet{wang2023equal}). Others provide baselines for comparison (e.g., BatchGCP \citep{Jung2022BatchMC}), without our framework's theoretical guarantees for fairness. Appendix \ref{appx:batchgcp} contains more details on BatchGCP and results. The CF framework generalizes group-balanced CP to consider the notion of coverage for specific labels, thus allowing us to evaluate disparity based on classical fairness metrics in a manner that does not require \textit{a priori} knowledge of group membership at inference time, unlike many approaches listed above. 

Lastly, CF can be used in fairness-critical domains where conditional conformal prediction is infeasible, such as finance, which can have strict fairness requirements~\citep{agarwal2021towards}, and health care~\citep{wang2023equal}, where privileged information may be unavailable at inference time.


\section{Conclusion}
In this work, we formalize the notion of Conformal Fairness (CF) for conformal predictors and propose a novel and comprehensive CF Framework. We provide a theoretically grounded algorithm that can be used to control for the gaps in conditional coverage, defined based on different fairness metrics, across sensitive groups. We conduct experiments on tabular and graph datasets, leveraging the exchangeability assumption of conformal prediction. We present results for CF based on various classical and user-defined fairness metrics on conformal predictors with various non-conformity score functions, including results on the framework's effectiveness in evaluating intersectional fairness with conformal predictors. We further describe how the CF framework can be practically leveraged for applications, including fairness auditing of conformal predictors. Future work could extend the framework to regression tasks and strengthen the theory by relaxing assumptions and exploring non-exchangeable settings.

\subsubsection*{Acknowledgments}
The authors acknowledge support from National Science Foundation (NSF) grant \#2112471 (AI-EDGE) and a grant from Cisco Research (US202581249).  Any opinions and findings are those of the author(s) and do not necessarily reflect the views of the granting agencies. The authors also thank the anonymous reviewers for their constructive feedback on this work.

\subsection*{Reproducibility Statement}
In the spirit of reproducibility, the proofs for the theoretical aspects of our work can be found in Appendices \ref{app:proofs} and \ref{app:pred_parity}. Details of the experiments, including datasets, non-conformity scores, and hyperparameters, are provided in Appendix \ref{app:experiment_details}. The source code is available at \url{https://github.com/AdityaVadlamani/conformal-fairness}.

\bibliography{iclr2025_conference}

\begin{thebibliography}{49}
\providecommand{\natexlab}[1]{#1}
\providecommand{\url}[1]{\texttt{#1}}
\expandafter\ifx\csname urlstyle\endcsname\relax
  \providecommand{\doi}[1]{doi: #1}\else
  \providecommand{\doi}{doi: \begingroup \urlstyle{rm}\Url}\fi

\bibitem[Agarwal et~al.(2021)Agarwal, Lakkaraju, and Zitnik]{agarwal2021towards}
Chirag Agarwal, Himabindu Lakkaraju, and Marinka Zitnik.
\newblock Towards a unified framework for fair and stable graph representation learning.
\newblock In \emph{Uncertainty in Artificial Intelligence}, pp.\  2114--2124. PMLR, 2021.

\bibitem[Alghamdi et~al.(2022)Alghamdi, Hsu, Jeong, Wang, Michalak, Asoodeh, and Calmon]{alghamdi2022beyond}
Wael Alghamdi, Hsiang Hsu, Haewon Jeong, Hao Wang, Peter Michalak, Shahab Asoodeh, and Flavio Calmon.
\newblock Beyond adult and compas: Fair multi-class prediction via information projection.
\newblock \emph{Advances in Neural Information Processing Systems}, 35:\penalty0 38747--38760, 2022.

\bibitem[Angelopoulos et~al.(2022)Angelopoulos, Bates, Malik, and Jordan]{angelopoulos2022uncertaintysetsimageclassifiers}
Anastasios Angelopoulos, Stephen Bates, Jitendra Malik, and Michael~I. Jordan.
\newblock Uncertainty sets for image classifiers using conformal prediction, 2022.
\newblock URL \url{https://arxiv.org/abs/2009.14193}.

\bibitem[Angelopoulos \& Bates(2021)Angelopoulos and Bates]{angelopoulos2021}
Anastasios~N. Angelopoulos and Stephen Bates.
\newblock A gentle introduction to conformal prediction and distribution-free uncertainty quantification.
\newblock \emph{CoRR}, abs/2107.07511, 2021.
\newblock URL \url{https://arxiv.org/abs/2107.07511}.

\bibitem[Barocas et~al.(2023)Barocas, Hardt, and Narayanan]{fairMLBook}
Solon Barocas, Moritz Hardt, and Arvind Narayanan.
\newblock \emph{Fairness and Machine Learning: Limitations and Opportunities}.
\newblock MIT Press, 2023.

\bibitem[Bastani et~al.(2022)Bastani, Gupta, Jung, Noarov, Ramalingam, and Roth]{bastani2022practical}
Osbert Bastani, Varun Gupta, Christopher Jung, Georgy Noarov, Ramya Ramalingam, and Aaron Roth.
\newblock Practical adversarial multivalid conformal prediction.
\newblock \emph{Advances in Neural Information Processing Systems}, 35:\penalty0 29362--29373, 2022.

\bibitem[Chen \& Guestrin(2016)Chen and Guestrin]{chen2016xgb}
Tianqi Chen and Carlos Guestrin.
\newblock Xgboost: A scalable tree boosting system.
\newblock In \emph{Proceedings of the 22nd ACM SIGKDD International Conference on Knowledge Discovery and Data Mining}, KDD '16, pp.\  785–794, New York, NY, USA, 2016. Association for Computing Machinery.
\newblock ISBN 9781450342322.
\newblock \doi{10.1145/2939672.2939785}.
\newblock URL \url{https://doi.org/10.1145/2939672.2939785}.

\bibitem[Cherian \& Bronner(2020)Cherian and Bronner]{cherian2020washington}
John Cherian and Lenny Bronner.
\newblock How the washington post estimates outstanding votes for the 2020 presidential election.
\newblock \emph{Retrieved September}, 13:\penalty0 2023, 2020.

\bibitem[Creager et~al.(2019)Creager, Madras, Jacobsen, Weis, Swersky, Pitassi, and Zemel]{creager2019flexibly}
Elliot Creager, David Madras, J{\"o}rn-Henrik Jacobsen, Marissa Weis, Kevin Swersky, Toniann Pitassi, and Richard Zemel.
\newblock Flexibly fair representation learning by disentanglement.
\newblock In \emph{International conference on machine learning}, pp.\  1436--1445. PMLR, 2019.

\bibitem[Current et~al.(2022)Current, He, Gurukar, and Parthasarathy]{current2022}
Sean Current, Yuntian He, Saket Gurukar, and Srinivasan Parthasarathy.
\newblock Fairegm: Fair link prediction and recommendation via emulated graph modification.
\newblock In \emph{Proceedings of the 2nd ACM Conference on Equity and Access in Algorithms, Mechanisms, and Optimization}, EAAMO '22, New York, NY, USA, 2022. Association for Computing Machinery.
\newblock ISBN 9781450394772.
\newblock \doi{10.1145/3551624.3555287}.
\newblock URL \url{https://doi.org/10.1145/3551624.3555287}.

\bibitem[Deng et~al.(2023)Deng, Dwork, and Zhang]{deng2023happymap}
Zhun Deng, Cynthia Dwork, and Linjun Zhang.
\newblock Happymap: A generalized multicalibration method.
\newblock In \emph{14th Innovations in Theoretical Computer Science Conference (ITCS 2023)}, 2023.

\bibitem[Ding et~al.(2021)Ding, Hardt, Miller, and Schmidt]{ding2021retiring}
Frances Ding, Moritz Hardt, John Miller, and Ludwig Schmidt.
\newblock Retiring adult: New datasets for fair machine learning.
\newblock \emph{Advances in neural information processing systems}, 34:\penalty0 6478--6490, 2021.

\bibitem[Ding et~al.(2024)Ding, Angelopoulos, Bates, Jordan, and Tibshirani]{ding2024}
Tiffany Ding, Anastasios~N. Angelopoulos, Stephen Bates, Michael~I. Jordan, and Ryan~J. Tibshirani.
\newblock Class-conditional conformai prediction with many classes.
\newblock In \emph{Proceedings of the 37th International Conference on Neural Information Processing Systems}, NIPS '23, Red Hook, NY, USA, 2024. Curran Associates Inc.

\bibitem[Dong et~al.(2023)Dong, Ma, Wang, Chen, and Li]{dong2023fairness}
Yushun Dong, Jing Ma, Song Wang, Chen Chen, and Jundong Li.
\newblock Fairness in graph mining: A survey.
\newblock \emph{IEEE Transactions on Knowledge and Data Engineering}, 35\penalty0 (10):\penalty0 10583--10602, 2023.

\bibitem[Dwork et~al.(2012)Dwork, Hardt, Pitassi, Reingold, and Zemel]{dwork2012fairness}
Cynthia Dwork, Moritz Hardt, Toniann Pitassi, Omer Reingold, and Richard Zemel.
\newblock Fairness through awareness.
\newblock In \emph{Proceedings of the 3rd innovations in theoretical computer science conference}, pp.\  214--226, 2012.

\bibitem[EEOC(1979)]{eeoc1979}
The~U.S. EEOC.
\newblock Uniform guidelines on employee selection procedures.
\newblock March 1979.

\bibitem[Feldman et~al.(2015)Feldman, Friedler, Moeller, Scheidegger, and Venkatasubramanian]{feldman2015}
Michael Feldman, Sorelle~A. Friedler, John Moeller, Carlos Scheidegger, and Suresh Venkatasubramanian.
\newblock Certifying and removing disparate impact.
\newblock In \emph{Proceedings of the 21th ACM SIGKDD International Conference on Knowledge Discovery and Data Mining}, KDD '15, pp.\  259–268, New York, NY, USA, 2015. Association for Computing Machinery.
\newblock ISBN 9781450336642.
\newblock \doi{10.1145/2783258.2783311}.
\newblock URL \url{https://doi.org/10.1145/2783258.2783311}.

\bibitem[Foygel~Barber et~al.(2021)Foygel~Barber, Candes, Ramdas, and Tibshirani]{foygel2021limits}
Rina Foygel~Barber, Emmanuel~J Candes, Aaditya Ramdas, and Ryan~J Tibshirani.
\newblock The limits of distribution-free conditional predictive inference.
\newblock \emph{Information and Inference: A Journal of the IMA}, 10\penalty0 (2):\penalty0 455--482, 2021.

\bibitem[Ghosh et~al.(2021)Ghosh, Basu, and Meel]{ghosh2021justicia}
Bishwamittra Ghosh, Debabrota Basu, and Kuldeep~S. Meel.
\newblock Justicia: A stochastic sat approach to formally verify fairness.
\newblock \emph{Proceedings of the AAAI Conference on Artificial Intelligence}, 35\penalty0 (9):\penalty0 7554--7563, May 2021.
\newblock \doi{10.1609/aaai.v35i9.16925}.
\newblock URL \url{https://ojs.aaai.org/index.php/AAAI/article/view/16925}.

\bibitem[Gibbs et~al.(2023)Gibbs, Cherian, and Cand{\`e}s]{gibbs2023conformal}
Isaac Gibbs, John~J Cherian, and Emmanuel~J Cand{\`e}s.
\newblock Conformal prediction with conditional guarantees.
\newblock \emph{arXiv preprint arXiv:2305.12616}, 2023.

\bibitem[H.~Zargarbashi et~al.(2023)H.~Zargarbashi, Antonelli, and Bojchevski]{zargarbashi23conformal}
Soroush H.~Zargarbashi, Simone Antonelli, and Aleksandar Bojchevski.
\newblock Conformal prediction sets for graph neural networks.
\newblock In Andreas Krause, Emma Brunskill, Kyunghyun Cho, Barbara Engelhardt, Sivan Sabato, and Jonathan Scarlett (eds.), \emph{Proceedings of the 40th International Conference on Machine Learning}, volume 202 of \emph{Proceedings of Machine Learning Research}, pp.\  12292--12318. PMLR, 23--29 Jul 2023.
\newblock URL \url{https://proceedings.mlr.press/v202/h-zargarbashi23a.html}.

\bibitem[Hamilton et~al.(2017)Hamilton, Ying, and Leskovec]{hamilton2017}
William~L. Hamilton, Rex Ying, and Jure Leskovec.
\newblock Inductive representation learning on large graphs.
\newblock \emph{CoRR}, abs/1706.02216, 2017.
\newblock URL \url{http://arxiv.org/abs/1706.02216}.

\bibitem[He et~al.(2023)He, Gurukar, and Parthasarathy]{he2023fairmile}
Yuntian He, Saket Gurukar, and Srinivasan Parthasarathy.
\newblock Fairmile: Towards an efficient framework for fair graph representation learning.
\newblock In \emph{Proceedings of the 3rd ACM Conference on Equity and Access in Algorithms, Mechanisms, and Optimization}, pp.\  1--10, 2023.

\bibitem[Hirsch et~al.(2023)Hirsch, Bartley, Chandrasekaran, Norris, Parthasarathy, and Turner]{hirsch2023business}
Dennis Hirsch, Timothy Bartley, Aravind Chandrasekaran, Davon Norris, Srinivasan Parthasarathy, and Piers~Norris Turner.
\newblock \emph{Business Data Ethics: Emerging Models for Governing AI and Advanced Analytics}.
\newblock Springer Nature, 2023.

\bibitem[Huang et~al.(2024)Huang, Jin, Cand\`{e}s, and Leskovec]{huang2024uncertainty}
Kexin Huang, Ying Jin, Emmanuel Cand\`{e}s, and Jure Leskovec.
\newblock Uncertainty quantification over graph with conformalized graph neural networks.
\newblock In \emph{Proceedings of the 37th International Conference on Neural Information Processing Systems}, NIPS '23, Red Hook, NY, USA, 2024. Curran Associates Inc.

\bibitem[Jung et~al.(2023)Jung, Noarov, Ramalingam, and Roth]{Jung2022BatchMC}
Christopher Jung, Georgy Noarov, Ramya Ramalingam, and Aaron Roth.
\newblock Batch multivalid conformal prediction.
\newblock \emph{11th International Conference on Learning Representations (ICLR)}, 2023.

\bibitem[Kallus \& Zhou(2018)Kallus and Zhou]{kallus2018residual}
Nathan Kallus and Angela Zhou.
\newblock Residual unfairness in fair machine learning from prejudiced data.
\newblock In \emph{International Conference on Machine Learning}, pp.\  2439--2448. PMLR, 2018.

\bibitem[Kipf \& Welling(2017)Kipf and Welling]{kipf2017}
Thomas~N. Kipf and Max Welling.
\newblock Semi-supervised classification with graph convolutional networks.
\newblock In \emph{5th International Conference on Learning Representations, {ICLR} 2017, Toulon, France, April 24-26, 2017, Conference Track Proceedings}. OpenReview.net, 2017.
\newblock URL \url{https://openreview.net/forum?id=SJU4ayYgl}.

\bibitem[Lei et~al.(2016)Lei, G'Sell, Rinaldo, Tibshirani, and Wasserman]{lei2016}
Jing Lei, Max~Grazier G'Sell, Alessandro Rinaldo, Ryan~J. Tibshirani, and Larry~A. Wasserman.
\newblock Distribution-free predictive inference for regression.
\newblock \emph{Journal of the American Statistical Association}, 113:\penalty0 1094--1111, 2016.
\newblock URL \url{https://api.semanticscholar.org/CorpusID:13741419}.

\bibitem[Liaw et~al.(2018)Liaw, Liang, Nishihara, Moritz, Gonzalez, and Stoica]{liaw2018tune}
Richard Liaw, Eric Liang, Robert Nishihara, Philipp Moritz, Joseph~E Gonzalez, and Ion Stoica.
\newblock Tune: A research platform for distributed model selection and training.
\newblock \emph{arXiv preprint arXiv:1807.05118}, 2018.

\bibitem[Liu et~al.(2022)Liu, Ding, Yu, Liu, Kong, and Jiang]{liu2022conformalized}
Meichen Liu, Lei Ding, Dengdeng Yu, Wulong Liu, Linglong Kong, and Bei Jiang.
\newblock Conformalized fairness via quantile regression.
\newblock \emph{Advances in Neural Information Processing Systems}, 35:\penalty0 11561--11572, 2022.

\bibitem[Liu et~al.(2023)Liu, Wang, Wang, Wang, Su, and Gao]{liu2023simfair}
Tianci Liu, Haoyu Wang, Yaqing Wang, Xiaoqian Wang, Lu~Su, and Jing Gao.
\newblock Simfair: A unified framework for fairness-aware multi-label classification.
\newblock \emph{Proceedings of the AAAI Conference on Artificial Intelligence}, 37\penalty0 (12):\penalty0 14338--14346, Jun. 2023.
\newblock \doi{10.1609/aaai.v37i12.26677}.
\newblock URL \url{https://ojs.aaai.org/index.php/AAAI/article/view/26677}.

\bibitem[Lu et~al.(2022)Lu, Lemay, Chang, Höbel, and Kalpathy-Cramer]{lu2022fair}
Charles Lu, Andréanne Lemay, Ken Chang, Katharina Höbel, and Jayashree Kalpathy-Cramer.
\newblock Fair conformal predictors for applications in medical imaging.
\newblock \emph{Proceedings of the AAAI Conference on Artificial Intelligence}, 36\penalty0 (11):\penalty0 12008--12016, Jun. 2022.
\newblock \doi{10.1609/aaai.v36i11.21459}.
\newblock URL \url{https://ojs.aaai.org/index.php/AAAI/article/view/21459}.

\bibitem[Maneriker et~al.(2023)Maneriker, Burley, and Parthasarathy]{maneriker2023}
Pranav Maneriker, Codi Burley, and Srinivasan Parthasarathy.
\newblock Online fairness auditing through iterative refinement.
\newblock In \emph{Proceedings of the 29th ACM SIGKDD Conference on Knowledge Discovery and Data Mining}, KDD '23, pp.\  1665–1676, New York, NY, USA, 2023. Association for Computing Machinery.
\newblock ISBN 9798400701030.
\newblock \doi{10.1145/3580305.3599454}.
\newblock URL \url{https://doi.org/10.1145/3580305.3599454}.

\bibitem[Mehrabi et~al.(2021)Mehrabi, Morstatter, Saxena, Lerman, and Galstyan]{mehrabi2021survey}
Ninareh Mehrabi, Fred Morstatter, Nripsuta Saxena, Kristina Lerman, and Aram Galstyan.
\newblock A survey on bias and fairness in machine learning.
\newblock \emph{ACM computing surveys (CSUR)}, 54\penalty0 (6):\penalty0 1--35, 2021.

\bibitem[Romano et~al.(2019)Romano, Patterson, and Candes]{romano2019conformalized}
Yaniv Romano, Evan Patterson, and Emmanuel Candes.
\newblock Conformalized quantile regression.
\newblock \emph{Advances in neural information processing systems}, 32, 2019.

\bibitem[Romano et~al.(2020)Romano, Sesia, and Candes]{romano2020classification}
Yaniv Romano, Matteo Sesia, and Emmanuel Candes.
\newblock Classification with valid and adaptive coverage.
\newblock \emph{Advances in neural information processing systems}, 33:\penalty0 3581--3591, 2020.

\bibitem[Rouzot et~al.(2022)Rouzot, Ferry, and Huguet]{fairness_defs}
Julien Rouzot, Julien Ferry, and Marie-José Huguet.
\newblock Learning optimal fair scoring systems for multi-class classification.
\newblock In \emph{2022 IEEE 34th International Conference on Tools with Artificial Intelligence (ICTAI)}, pp.\  197--204, 2022.
\newblock \doi{10.1109/ICTAI56018.2022.00036}.

\bibitem[Sadinle et~al.(2019)Sadinle, Lei, and Wasserman]{sadinle2019least}
Mauricio Sadinle, Jing Lei, and Larry Wasserman.
\newblock Least ambiguous set-valued classifiers with bounded error levels.
\newblock \emph{Journal of the American Statistical Association}, 114\penalty0 (525):\penalty0 223--234, 2019.

\bibitem[Takac \& Zabovsky(2012)Takac and Zabovsky]{takac2012data}
Lubos Takac and Michal Zabovsky.
\newblock Data analysis in public social networks.
\newblock In \emph{International scientific conference and international workshop present day trends of innovations}, volume~1, 2012.

\bibitem[Veličković et~al.(2018)Veličković, Cucurull, Casanova, Romero, Liò, and Bengio]{veličković2018graphattentionnetworks}
Petar Veličković, Guillem Cucurull, Arantxa Casanova, Adriana Romero, Pietro Liò, and Yoshua Bengio.
\newblock Graph attention networks, 2018.
\newblock URL \url{https://arxiv.org/abs/1710.10903}.

\bibitem[Verma \& Rubin(2018)Verma and Rubin]{verma2018fairness}
Sahil Verma and Julia Rubin.
\newblock Fairness definitions explained.
\newblock In \emph{Proceedings of the international workshop on software fairness}, pp.\  1--7, 2018.

\bibitem[Vovk(2012)]{vovk2012group-balance-cp}
Vladimir Vovk.
\newblock Conditional validity of inductive conformal predictors.
\newblock In Steven C.~H. Hoi and Wray Buntine (eds.), \emph{Proceedings of the Asian Conference on Machine Learning}, volume~25 of \emph{Proceedings of Machine Learning Research}, pp.\  475--490, Singapore Management University, Singapore, 04--06 Nov 2012. PMLR.
\newblock URL \url{https://proceedings.mlr.press/v25/vovk12.html}.

\bibitem[Vovk et~al.(2005)Vovk, Gammerman, and Shafer]{vovk2005algorithmic}
Vladimir Vovk, Alexander Gammerman, and Glenn Shafer.
\newblock \emph{Algorithmic learning in a random world}, volume~29.
\newblock Springer, 2005.

\bibitem[Wang et~al.(2024)Wang, Cheng, Guo, Liu, and Yu]{wang2023equal}
Fangxin Wang, Lu~Cheng, Ruocheng Guo, Kay Liu, and Philip~S. Yu.
\newblock Equal opportunity of coverage in fair regression.
\newblock In \emph{Proceedings of the 37th International Conference on Neural Information Processing Systems}, NIPS '23, Red Hook, NY, USA, 2024. Curran Associates Inc.

\bibitem[Wang \& Wang(2024)Wang and Wang]{wang2024variational}
Ziyan Wang and Hao Wang.
\newblock Variational imbalanced regression: Fair uncertainty quantification via probabilistic smoothing.
\newblock \emph{Advances in Neural Information Processing Systems}, 36, 2024.

\bibitem[Yan \& Zhang(2022)Yan and Zhang]{yan2022active}
Tom Yan and Chicheng Zhang.
\newblock Active fairness auditing.
\newblock In \emph{International Conference on Machine Learning}, pp.\  24929--24962. PMLR, 2022.

\bibitem[Yeh \& Lien(2009)Yeh and Lien]{yeh2009comparisons}
I-Cheng Yeh and Che-hui Lien.
\newblock The comparisons of data mining techniques for the predictive accuracy of probability of default of credit card clients.
\newblock \emph{Expert systems with applications}, 36\penalty0 (2):\penalty0 2473--2480, 2009.

\bibitem[Zhao et~al.(2023)Zhao, Wu, Shao, Zhang, Hong, and Wang]{zhao2023fair}
Chen Zhao, Le~Wu, Pengyang Shao, Kun Zhang, Richang Hong, and Meng Wang.
\newblock Fair representation learning for recommendation: A mutual information perspective.
\newblock \emph{Proceedings of the AAAI Conference on Artificial Intelligence}, 37\penalty0 (4):\penalty0 4911--4919, Jun. 2023.
\newblock \doi{10.1609/aaai.v37i4.25617}.
\newblock URL \url{https://ojs.aaai.org/index.php/AAAI/article/view/25617}.

\end{thebibliography}
\bibliographystyle{iclr2025_conference}

\clearpage
\appendix


\counterwithin{figure}{section}
\counterwithin{table}{section}
\renewcommand\thefigure{\thesection\arabic{figure}}
\renewcommand\thetable{\thesection\arabic{table}}

\section{Fairness Metrics}

As discussed in Section \ref{subsec:fair_metrics_orig}, exact fairness is difficult to achieve. So, for many metrics, we want to say something about the difference between groups. Formally, for Demographic Parity, we may require that for some (small) $c\in(0, 1]$,
\[
\max\limits_{\tilde{y}\in\gY^+}\braces{\abs{\Pr\brackets{\hat{Y} = \tilde{y} \mid X\in g_a} - \Pr\brackets{\hat{Y} = \tilde{y} \mid X\in g_b}}\mid\forall g_a, g_b\in\mathcal{G}} < c.
\]
Similar requirements exist for the other fairness metrics.

\begin{table}[ht]
    \tiny
    \centering
    \caption{Fairness metrics formulations for multiclass classification~\citep{{fairness_defs}}.}
    \begin{tabular}{cc} 
        \toprule
        \textbf{Metric} & \textbf{Definition} \\ 
        \midrule
        Demographic (or Statistical) Parity & $\Pr\brackets{\hat{Y} = y ~\Big| ~ X\in g_a} = \Pr\brackets{\hat{Y} = y ~\Big| ~ X\in g_b},~\forall g_a, g_b\in\mathcal{G},~\forall y\in\mathcal{Y}^{+}$ \\ [2ex]
        Equal Opportunity & $\Pr\brackets{\hat{Y} = y ~\Big|~ Y = y, X\in g_a} = \Pr\brackets{\hat{Y} = y ~\Big|~ Y = y, X\in g_b},~\forall g_a, g_b\in\mathcal{G},~\forall y\in\mathcal{Y}^{+}$ \\ [2ex]
        Predictive Equality & $\Pr\brackets{\hat{Y} = y ~\Big|~ Y \neq y, X\in g_a} = \Pr\brackets{\hat{Y} = y ~\Big|~ Y\neq y, X\in g_b},~\forall g_a, g_b\in\mathcal{G},~\forall y\in\mathcal{Y}^{+}$ \\ [2ex]
        Equalized Odds& Equal Opp. and Pred. Equality \\[1ex]
        \midrule
        Predictive Parity& $\Pr\brackets{Y = y ~\Big|~ \hat{Y} = y, X\in g_a} = \Pr\brackets{Y = y ~\Big|~ \hat{Y} = y, X\in g_b},~\forall g_a, g_b\in\mathcal{G},~\forall y\in\mathcal{Y}^{+}$ \\[2ex]
        \bottomrule
    \end{tabular}
    \label{tab:fairness_defs}
\end{table}

\section{Proofs}

\label{app:proofs}
\subsection{Proof of Lemma \ref{new:lem:cp_conditional}}

\lemCPConditional*
\begin{proof}
    The proof for the upper bound follows \cite{ding2024}, where the test score $s(\bm{x}_{n+1},y_{n+1})$ follows the distribution of scores in $\calib\cap \mathcal{R}$, thus holding the conditional coverage. The upper-bound guarantee directly follows \cite{romano2019conformalized}, assuming distinct non-conformity scores or a suitably random way to tie-break equal scores. 
\end{proof}

\subsection{Proof of Lemma \ref{new:lem:cp_inv}}
To prove Lemma \ref{new:lem:cp_inv}, we use the following Lemma that is stated in the proof of Theorem D.1 from  \citet{angelopoulos2021}:
\begin{lemma}
    Suppose you have $n + 1$ exchangeable random variables $Z_1, Z_2, \dots, Z_n, Z_{n + 1}$. Then, $\Pr[Z_{n+1} \leq Z_{(k)}]=\frac{k}{n + 1}$ for all $k\leq n$, where $Z_{n+1}$ is the test point. \label{lem:exchangeable_rv_cdf}
\begin{proof}[Proof Sketch]
    Using $Z_1, \dots, Z_n$, you can form $n+1$ intervals, $(L, Z_{(1)}], (Z_{(1)}, Z_{(2)}],\allowbreak \dots, (Z_{(n-1)}, Z_{(n)}], (Z_{(n)}, U]$, where $L$ and $U$ are the lower and upper bounds of the random variables. Exchangeability gives us that $Z_{n + 1}$ falls in any of the $n + 1$ intervals with equal probability. Thus, $\Pr[Z_{n+1} \leq Z_{(k)}]$ is the probability that it falls in the first $k$ intervals giving us $\Pr[Z_{n+1} \leq Z_{(k)}] = \frac{k}{n + 1}$.
    \end{proof}
\end{lemma}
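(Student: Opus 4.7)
The plan is to reduce the statement to a uniform-rank argument. First I would consider the rank of the test variable $Z_{n+1}$ among the full collection $(Z_1, \ldots, Z_{n+1})$. Assuming the variables are almost surely distinct (ties can be broken by appending an independent continuous auxiliary variable to each coordinate, which preserves exchangeability), let $R \in \{1, \ldots, n+1\}$ denote the position of $Z_{n+1}$ in the sorted order of the full collection.

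The first main step is to show that $R$ is uniform on $\{1, \ldots, n+1\}$. This follows directly from exchangeability: for any permutation $\sigma$ of $\{1,\ldots,n+1\}$, $(Z_{\sigma(1)}, \ldots, Z_{\sigma(n+1)}) \stackrel{d}{=} (Z_1, \ldots, Z_{n+1})$. Conditional on the unordered multiset of values, the induced distribution over the $(n+1)!$ assignments of indices to ranked positions is uniform, so each $Z_i$ is equally likely to occupy any of the $n+1$ ranks. In particular, $\Pr[R = j] = 1/(n+1)$ for every $j \in \{1,\ldots,n+1\}$.

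The second step is a deterministic accounting that connects $R$ to $Z_{(k)}$, the $k$-th order statistic of only the first $n$ variables. If $R = r$, then $Z_{n+1}$ exceeds exactly $r-1$ of the values $Z_1, \ldots, Z_n$ and is exceeded by the remaining $n - r + 1$, so $Z_{(r-1)} < Z_{n+1} < Z_{(r)}$ under the convention $Z_{(0)} = -\infty$ and $Z_{(n+1)} = +\infty$. A short case split then shows $\{Z_{n+1} \leq Z_{(k)}\} = \{R \leq k\}$: if $R \leq k$ then $Z_{(k)} \geq Z_{(R)} > Z_{n+1}$, while if $R > k$ then $Z_{(k)} \leq Z_{(R-1)} < Z_{n+1}$. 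Combining this with the uniformity of $R$ gives $\Pr[Z_{n+1} \leq Z_{(k)}] = \Pr[R \leq k] = k/(n+1)$.

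The main obstacle is cosmetic rather than substantive: the rank $R$ is only well-defined once a tie-breaking convention is fixed, so one must verify that the conclusion is insensitive to that choice. The auxiliary-randomization reduction mentioned above handles this without altering the exchangeability of the original sequence, and in the almost-surely-distinct case (e.g.\ continuous distributions) no such adjustment is required.
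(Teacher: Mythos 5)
Your argument is correct and is essentially the paper's own proof made rigorous: identifying which of the $n+1$ gaps determined by $Z_{(1)},\dots,Z_{(n)}$ the test point falls into is exactly the statement that its rank $R$ among all $n+1$ variables is uniform, and your identity $\{Z_{n+1}\le Z_{(k)}\}=\{R\le k\}$ is the paper's ``falls in the first $k$ intervals'' step. The only additions are the explicit tie-breaking discussion and the careful case split, which the paper's sketch leaves implicit.
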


Now using Lemma \ref{lem:exchangeable_rv_cdf}, we will prove Lemma \ref{new:lem:cp_inv}.
\lemCPInv*
\begin{proof}
    Let $s_i = s(\vx_i, y_i)$ for brevity. By Lemma \ref{lem:exchangeable_rv_cdf}, we have that $\Pr(s_{n + 1}\leq s_{(k)}) = \frac{k}{n + 1}$ for all $k\leq n$. Let $k$ be s.t. $s_{(k)}\leq \lambda \leq s_{(k + 1)}$. We then have that $\Pr(s_{n + 1}\leq s_{(k)})\leq \Pr(s_{n + 1}\leq \lambda)\leq \Pr(s_{n + 1}\leq s_{(k + 1)})$. So,
    \[
        \frac{k}{n + 1}\leq \Pr(s_{n + 1}\leq \lambda)\leq \frac{k + 1}{n + 1}.
    \]
    Since $s_1, \dots, s_n$ are known empirically, we have that $k = \sum_{i = 1}^n \1{[s_i\leq \lambda]}$. Hence, 
    \[
        \frac{\sum_{i = 1}^n \1{[s_i\leq \lambda]}}{n + 1}\leq \Pr(s_{n + 1}\leq \lambda)\leq \frac{\sum_{i = 1}^n \1{[s_i\leq \lambda]} + 1}{n + 1}.
    \]
    Since $\Pr[{s_{n + 1} \leq \lambda}] = \Pr[y_{n+1}\in \mathcal{C}_\lambda(\vx_{n + 1})]$, we get Equation \ref{new:eq:cp_inv_statement}.

\end{proof}

\subsection{Proof of Lemma \ref{new:lem:cp_fixed_label}}
\lemCPFixed*
\begin{proof}
    By computing $\hat{q}(\alpha) = \text{Quantile}\parens{\frac{\ceil{(n+1)(1-\alpha)}}{n}; \{s(\bm{x}_i, \Tilde{y})\}_{i=1}^{n}}$, where $\Tilde{y}$ is the fixed label. Using the assumption of exchangeability we have that $s(\bm{x}_1, \Tilde{y}),s(\bm{x}_2, \Tilde{y}),\hdots,s(\bm{x}_{n+1}, \Tilde{y})$ is an exchangeable sequence. Thus
    \begin{eqnarray}
    1 - \alpha \leq \Pr\brackets{s(\bm{x}_{n+1}, \Tilde{y}) \leq \hat{q}(\alpha)}\leq 1 - \alpha + \frac{1}{n + 1}.
    \label{eq:cp_fixed_label}
    \end{eqnarray}
    Equivalently,
    \begin{eqnarray}
    1 - \alpha  \leq \Pr\brackets{\Tilde{y} \in \gC_{\hat{q}(\alpha)}(x_{n+1})}\leq 1 - \alpha + \frac{1}{n + 1}.
    \label{eq:cp_fixed_label_normal}
    \end{eqnarray}
\end{proof}

\section{Further Discussion on Predictive Parity}

\label{app:pred_parity}
\thmPredParity*
\begin{proof}
Let $\lambda$ be the maximum value it can take on as the threshold of the prediction set. Let $y\in\mathcal{Y}^{+}$ and $g_m,g_n\in\mathcal{G}.$ Then, 
\begin{eqnarray*}
\Pr\brackets{Y = y ~\Big|~ y \in \predict, X\in g_m} =& \frac{\Pr\brackets{y \in \predict ~\Big|~ Y = y, X\in g_m}}{\Pr\brackets{y \in \predict ~\Big|~ X\in g_m}}\Pr\brackets{Y=y~\Big|~X\in g_m}\\
=& \Pr\brackets{Y=y~\Big|~X\in g_m}
\end{eqnarray*}
since the numerator and the denominator are $1$ due to the selection of $\lambda$. Using the same argument for $g_n$ and the definition of $D_{TV}$ the difference in Predictive Parities is: 
\begin{eqnarray*}
    \abs{\Pr\brackets{Y = y ~\Big|~X\in g_m} - \Pr\brackets{Y = y ~\Big|~ X\in g_n}}\leq D_{TV}(W_m, W_n)
\end{eqnarray*}
Since, $D_{TV}(W_m, W_n) \leq \max\braces{D_{TV}(W_i, W_j)~|~i,j\in \{1,\hdots,|\mathcal{G}|\}}\leq c$, the choosen $\lambda$ value causes the Predictive Parity difference to be less than $c$. Thus a solution exists for $c\geq\max\braces{D_{TV}(W_i, W_j)~|~i,j\in \{1,\hdots,|\mathcal{G}|\}}$.
\end{proof}

\subsection{Achieving Arbitrary Closeness}
We will further discuss the two methods for controlling for arbitrarily small values of $c$, depending on whether the label distribution is independent of group membership.
\subsubsection{Label Distribution Independent of Group Membership}
Assuming independence, we have the following corollary of Theorem \ref{new:thm:pred_parity_main}:
\begin{corollary}
    Given a random variable, $W$, from a label distribution over $\gY$ that is independent of group membership, then the Conformal Fairness Framework can find a $\lambda$ such that the disparity of Predictive Parity is within any $c > 0$.

    \begin{proof}
        Let $W_i \sim W|(X\in g_i)$ be the label distribution condition on group membership. Using the independence assumption we get $W=W_i,~\forall i\in \{1,\hdots,|\mathcal{G}|\}$, thus $D_{TV}(W_i, W_j) = 0,~\forall i,j\in \{1,\hdots,|\mathcal{G}|\}$. Hence, using Theorem \ref{new:thm:pred_parity_main}, $c> \max\braces{D_{TV}(W_i, W_j)~|~i,j\in \{1,\hdots,|\mathcal{G}|\}}$ = 0. Thus a $\lambda$ can be found for any $c>0$.
    \end{proof}
\end{corollary}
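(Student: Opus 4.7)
The plan is to derive this as an immediate specialization of Theorem \ref{new:thm:pred_parity_main}. That theorem already guarantees a satisfying $\lambda$ exists whenever the closeness criterion $c$ is at least $\max\{D_{TV}(W_i, W_j) : i,j \in \{1,\dots,|\gG|\}\}$, so the only work left is to show that the independence hypothesis collapses this maximum to zero, after which any strictly positive $c$ trivially qualifies.

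First I would introduce, exactly as in the theorem, the group-conditional label distributions $W_i \sim W \mid (X \in g_i)$ for each $g_i \in \gG$. The hypothesis that $W$ is independent of group membership means precisely that conditioning on $X \in g_i$ leaves the marginal law unchanged, so $W_i \stackrel{d}{=} W$ for every $i$. Consequently $W_i$ and $W_j$ share a common distribution for every pair $(i,j)$, which forces $D_{TV}(W_i, W_j) = 0$ and hence $\max\{D_{TV}(W_i, W_j) : i,j \in \{1,\dots,|\gG|\}\} = 0$.

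Next I would invoke Theorem \ref{new:thm:pred_parity_main} directly with this observation. For any target $c > 0$ we have $c > 0 = \max\{D_{TV}(W_i, W_j)\}$, so the hypothesis of the theorem is satisfied and a threshold whose induced Predictive Parity disparity is within $c$ is guaranteed to exist; the CF Framework's search over $\Lambda$ (as described in Algorithm \ref{new:alg:cp_fair}) then identifies such a $\lambda$.

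There is essentially no technical obstacle: this is a one-line substitution into Theorem \ref{new:thm:pred_parity_main}. The only point worth being explicit about is the precise reading of ``independent of group membership'' -- namely that the law of $W$ does not depend on the conditioning event $\{X \in g_i\}$, which is exactly what is needed to conclude $W_i \stackrel{d}{=} W$. With that reading in place, the argument is immediate.
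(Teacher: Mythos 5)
Your proposal is correct and follows exactly the same route as the paper's proof: use independence to conclude $W_i \stackrel{d}{=} W$ for all $i$, hence all pairwise total variation distances vanish, and then apply Theorem \ref{new:thm:pred_parity_main} with $c > 0 = \max\braces{D_{TV}(W_i, W_j)}$. Your version is slightly more careful in spelling out what ``independent of group membership'' means distributionally, but the argument is the same one-line specialization.
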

    
\subsubsection{Without Independence Assumption}
When independence cannot be assumed, then we propose a proxy for Predictive Parity that the Conformal Fairness Framework can use. We propose a \textit{Predictive Parity Proxy} as the balancing the quantity $\Pr\parens{Y = y \mid y\in\gC_{\lambda}(X), X\in g_a} - \Pr\parens{Y = y \mid X\in g_a}$ across all groups and positive labels. Thus, for the framework, given a user-specified value for $c$, we want $\forall g_a, g_b\in\mathcal{G},~\forall y\in\mathcal{Y}^{+}$:
\begin{multline}
\label{eq:proxy_pred_parity}
    \big|\bm{(}\Pr\brackets{Y = y ~\mid~ y \in \predict, X\in g_a} - \Pr\brackets{Y=y~\mid~X\in g_a}\bm{)}\\ - \bm{(}\Pr\brackets{Y = y ~\mid~ y \in \predict, X\in g_b} - \Pr\brackets{Y=y~\mid~X\in g_b}\bm{)}\big|< c.
\end{multline}

Observe that using $\lambda = \sup\Lambda$ will make Equation \ref{eq:proxy_pred_parity} equal to zero, thus $c$ can be arbitrarily small. Intuitively, this proxy balances the information provided about an outcome if the label is in the prediction set, similar to Predictive Parity. Formally, if the label distribution is independent of group membership, then balancing the proxy would be the same as balancing Predictive Parity.

\section{Additional Experiment Details}

\label{app:experiment_details}
\subsection{Datasets} \label{app:datasets}
\paragraph{Credit (\textit{G}):} The Credit dataset is from the \texttt{UCI} repository, and traditionally the binary target is to predict the existence of default payments \citep{yeh2009comparisons}. We used a graph version of Credit as considered by \cite{agarwal2021towards}. To convert the dataset to a multi-class dataset, we used the education level (4 labels) as the target and used gender as the sensitive attribute, as done by \cite{liu2023simfair}.  
\paragraph{Pokec-\{n,z\} (\textit{G}):} The Pokec dataset \citep{takac2012data} is a social-network graph dataset collected from Pokec, a popular social network in Slovakia. Since several rows in the dataset are missing features, two commonly used subgraphs are the Pokec-z and Pokec-n datasets. The graphs have 4 labels, corresponding to the fieldwork. They also have two sensitive attributes, gender (2 groups) and region (2 groups). Our experiments consider each attribute individually and intersectional fairness by creating an attribute with 4 (2x2) groups. 
\paragraph{ACSIncome (\textit{T}):} In the fairness space, the American Community Services (ACS) datasets from the \texttt{Folktables} library are widely used \citep{ding2021retiring}. For ACSIncome, we used the standard ACSIncome dataset in Foltables; however, we divided the targets into 4 classes by evenly dividing the income into 4 brackets. Race is the sensitive attribute and has 9 groups.
\paragraph{ACSEducation (\textit{T}):} Similar to ACSIncome, we used the ACS data and selected the Education Level as our target. We broke the education level into 6 groups: \{did not complete high school, has a high school diploma, has a GED, started an undergrad program, completed an undergrad program, and completed graduate or professional school\}. ACSEducation also uses race as a sensitive attribute.

\subsection{Non-Conformity Scores} \label{app:scores}
Let $\hat{\pi}$ be a trained classification model with softmaxed output.

\paragraph{Threshold Prediction Sets (TPS)} In TPS~\citep{sadinle2019least}, the score function is $s\parens{\vx, y} = 1 - \hat{\pi}(\vx)_y$, where $\hat{\pi}(\vx)_y$ is the class probability for the correct class. This is the simplest method, which is also shown to be optimal with respect to efficiency~\citep{sadinle2019least}.

\paragraph{Adaptive Prediction Sets (APS)} The most popular baseline when comparing CP method is APS~\citep{romano2020classification}. The scoring function works by sorting the softmax logits in descending order and accumulating the class probabilities until the correct class is included. For tighter prediction sets, randomization is introduced through a uniform random variable. Formally, if $\hat{\pi}(\vx)_{(1)}\geq \hat{\pi}(\vx)_{(2)}\geq \dots \geq \hat{\pi}(\vx)_{(K - 1)}$, $u\sim U(0, 1)$, and $r_y$ is the rank of the correct label, then
\[
    s(\vx, y) = \brackets{\sum_{i = 1}^{r_y} \hat{\pi}(\vx)_{(i)}} - u\hat{\pi}(\vx)_y.
\]

\paragraph{Regularized Adaptive Prediction Sets (RAPS)} One drawback of APS is that it can produce large prediction sets. \citet{angelopoulos2022uncertaintysetsimageclassifiers} introduces a regularization approach for APS. Given the same setup and notation as APS, define $o(\vx, y) = \abs{\braces{c\in\mathcal{Y} : \hat{\pi}(\vx)_y\geq \hat{\pi}(\vx)_c}}$. Then,

\[
    s(\vx, y) = \brackets{\sum_{i = 1}^{r_y} \hat{\pi}(\vx)_{(i)}} - u\hat{\pi}(\vx)_y + \nu\cdot\max\braces{\parens{o(\vx, y) - k_{reg}}, 0},
\]
where $\nu$ and $k_{reg}\geq 0$ are regularization hyperparameters.


\paragraph{Diffusion Adaptive Prediction Sets (DAPS)} Graphs are rich with neighborhood information, with nodes tending to be homophilous. Intuitively, this suggests that the non-conformity scores of connected nodes are also related. To utilize this observation, DAPS~\cite{zargarbashi23conformal} performs a one-step diffusion update on the non-conformity scores. Formally, if $s(\vx, y)$ is a point-wise score function (e.g., APS), then the diffusion step gives a new score function
\[
    \hat{s}(\vx, y) = (1 - \delta) s(\vx, y) + \frac{\delta}{|
    \gN_\vx|} \sum\limits_{\vu \in \gN_\vx} s(\vu, y),
\]
where $\delta\in[0, 1]$ is a diffusion hyperparamter and $\gN_\vx$ is the $1$-hop neighborhood of $\vx$. 

\paragraph{Conformalized GNN (CFGNN)} CFGNN~\citep{huang2024uncertainty} is a GNN approach to graph CP. The underlying observation is that the inefficiencies are correlated between nodes with similar neighborhood topologies. Bearing this in mind, using the calibration set, a second GNN is trained to correct the scores from the base model to optimize for efficiency through an inefficiency loss function \citet{huang2024uncertainty} propose. The inefficiency loss function definition includes a point-wise score function and can be different for training and validation. For our experiments, we set the score function to be APS and kept it consistent between training and validation.

\subsection{Hyperparameter Tuning} \label{app:hpt}
Hyperparameter tuning was done using Ray Tune \citep{liaw2018tune}. For the Pokec\_n and Pokec\_z datasets, hyperparameters for the base GNN models were tuned via random search using Table \ref{tab:base_gnn:hpt} for each model type (i.e., GCN, GAT, and GraphSAGE) and for each choice of the sensitive attribute(s). For the Credit, ACS Income, and ACS Education datasets, the base XGBoost models were tuned via random search using Table \ref{tab:base_xgb:hpt} for each choice of sensitive attribute(s).

For Credit, Pokec\_n, and Pokec\_z, we tune the hyperparameters for the CFGNN model via random search using Table \ref{tab:cf_gnn:hpt} for each model type (e.g., GCN, GAT, and GraphSAGE), for each dataset, and choice of sensitive attribute(s). We set $\calib = \test = (1 - \train - \valid) / 2$. 

All experiments were run on a single P100 GPU. 

\begin{table}[ht]
    \centering
    \caption{Hyperparameter search space for the base GNN model for Pokec\_n and Pokec\_z. The last two rows are layer-type specific for GAT and GraphSAGE, respectively.}
    \label{tab:base_gnn:hpt}
    \begin{tabular}{ll}
        \toprule
        Hyperparameter & Search Space \\
        \midrule
        batch\_size & $64$\\
        lr & loguniform$(10^{-4}, 10^{-1})$\\
        hidden\_channels & $\{16, 32, 64, 128\}$\\
        layers & $\{1, 2, 4\}$\\
        dropout & uniform$(0.1, 0.8)$\\
        \midrule
        heads & $\{2, 4, 8\}$\\
        aggr\_fn & \{mean, gcn, pool, lstm\}\\
        \bottomrule
    \end{tabular}
\end{table}
\begin{table}[ht]
    \centering
    \caption{Hyperparameter search space for the base XGBoost model for Credit, ACS Education, and ACS Income.}
    \label{tab:base_xgb:hpt}
    \begin{tabular}{ll}
        \toprule
        Hyperparameter & Search Space \\
        \midrule
        lr & loguniform$(10^{-4}, 10^{-1})$\\
        n\_estimators & $\{2, \dots, 500\}$\\
        max\_depth & $\{2, \dots, 30\}$\\
        gamma & uniform$(0, 1)$\\
        colsample\_bytree & uniform$(0.25, 1.0)$ \\
        colsample\_bylevel & uniform$(0.25, 1.0)$ \\
        colsample\_bynode & uniform$(0.25, 1.0)$ \\
        subsample & uniform$(0.5, 1.0)$\\
        \bottomrule
    \end{tabular}
\end{table}
\begin{table}[ht!]
    \centering
    \caption{Hyperparameter search space for the CFGNN model for Credit, Pokec\_n, and Pokec\_z. The last two rows are layer-type specific for GAT and GraphSAGE, respectively.}
    \label{tab:cf_gnn:hpt}
    \begin{tabular}{ll}
        \toprule
        Hyperparameter & Search Space \\
        \midrule
        batch\_size & $64$\\
        lr & loguniform$(10^{-4}, 10^{-1})$\\
        hidden\_channels & $\{16, 32, 64, 128\}$\\
        layers & $\{1, 2, 3, 4\}$\\
        dropout & uniform$(0.1, 0.8)$\\
        $\tau$ & loguniform$(10^{-3}, 10^{1})$\\
        \midrule
        heads & $\{2, 4\}$\\
        aggr\_fn & \{mean, gcn, pool, lstm\}\\
        \bottomrule
    \end{tabular}
\end{table}

\section{Additional Results} \label{app:more_results}
Here, we provide additional results and discourse for each dataset and experiment we discuss in the main paper.
\raggedbottom
\subsection{ACSEducation}
Below, we include results for the ACSEducation dataset, which is unique as it has a greater number of classes and is a custom dataset generated from the ACS datasets.
\begin{figure}[H]
    \centering
    \begin{subfigure}{\textwidth}
        \includegraphics[width=0.9\linewidth]{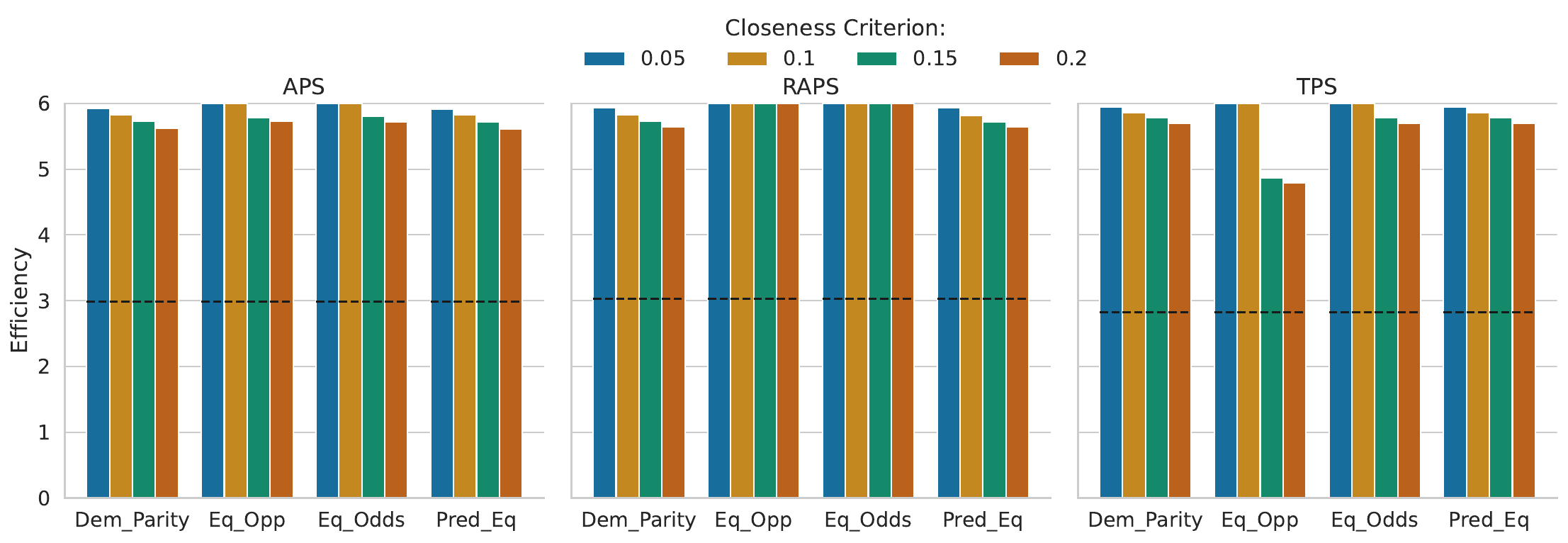}
    \end{subfigure}
    \begin{subfigure}{\textwidth}
        \includegraphics[width=0.9\linewidth]{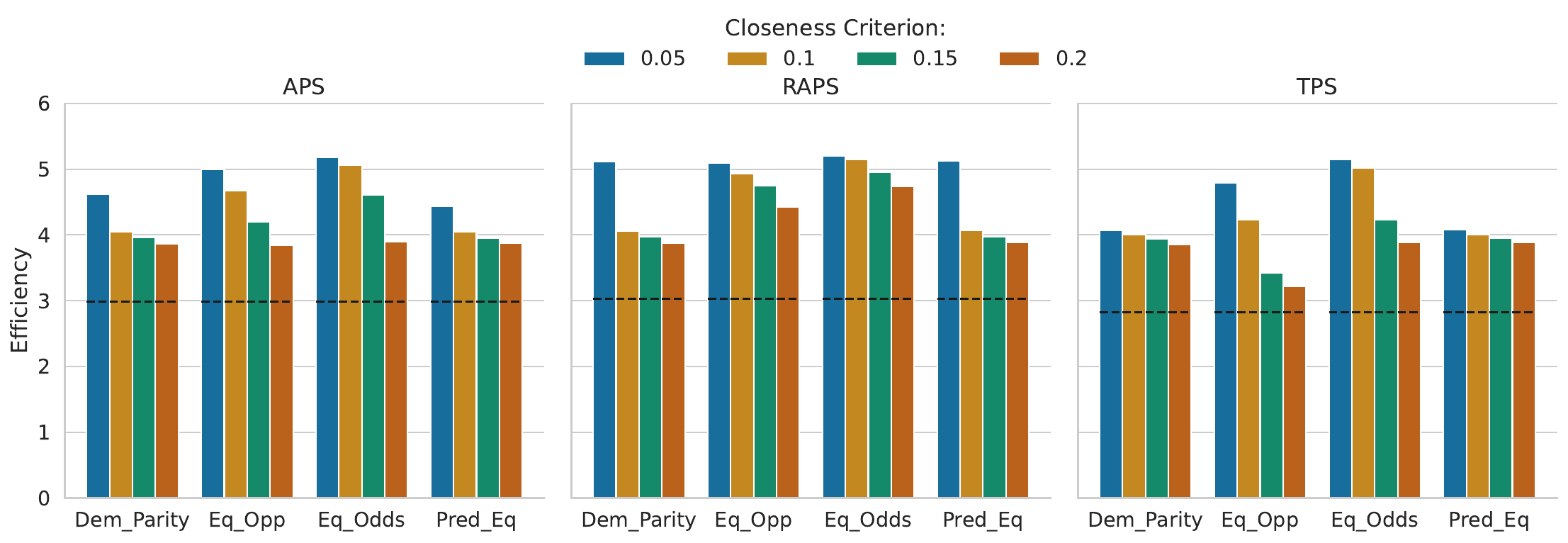}
    \end{subfigure}
    \caption{\small\textbf{ACSEducation.} Comparison of efficiencies when using the CF Framework without (top) and with (bottom) classwise lambdas. We observe that the efficiencies are better in the \textit{right} plot. This is because $\forall_{i,\hdots,k}~ \lambda_{\text{non-classwise}} \geq \lambda_{\text{classwise}}^{i}$ (k is the number of classes), which causes fewer labels to be included in the prediction set, thus improving efficiency with the classwise approach. For some experiments, the fairness disparity is $0$ (e.g., APS and RAPS in the no-classwise setting), because the framework is producing the full prediction set--the trivial case--which means the coverage of $\tilde y \in \mathcal Y^+$ is $1.00$, thus causing the disparity to be $0$. }
    \label{fig:acseducation-false-more}
\end{figure}
\clearpage
\begin{figure}[H]
    \centering
    \begin{subfigure}{\textwidth}
        \includegraphics[width=\linewidth]{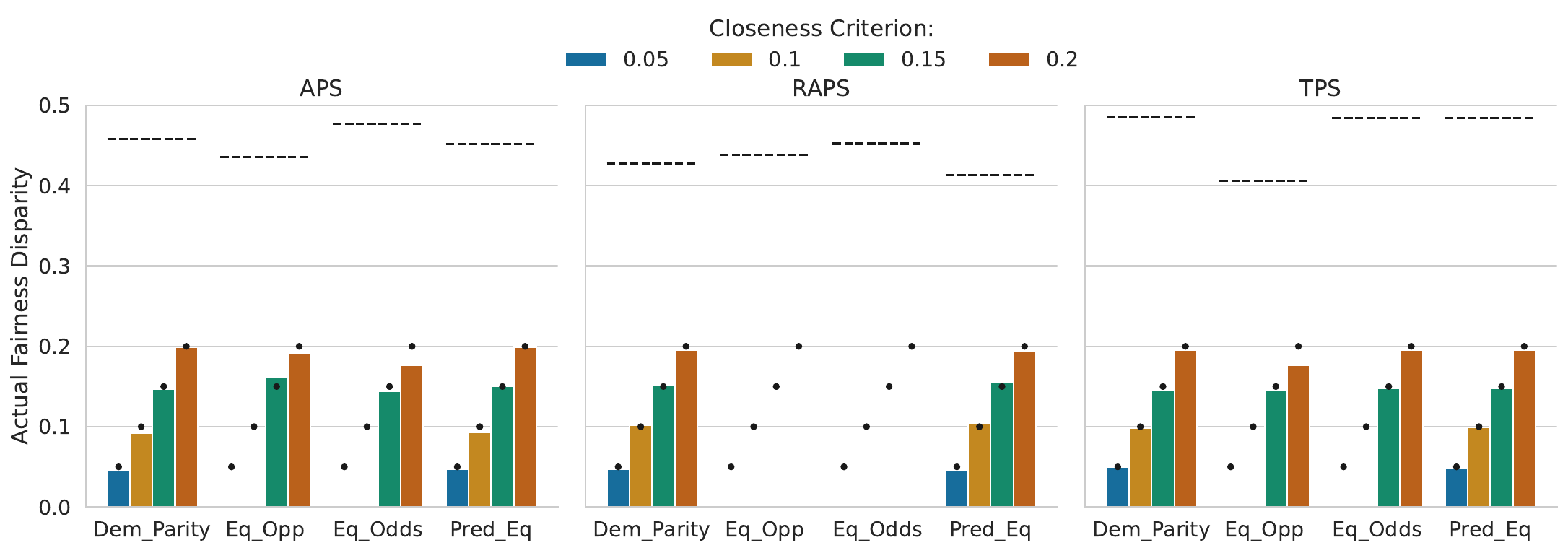}
    \end{subfigure}
    \begin{subfigure}{\textwidth}
        \includegraphics[width=\linewidth]{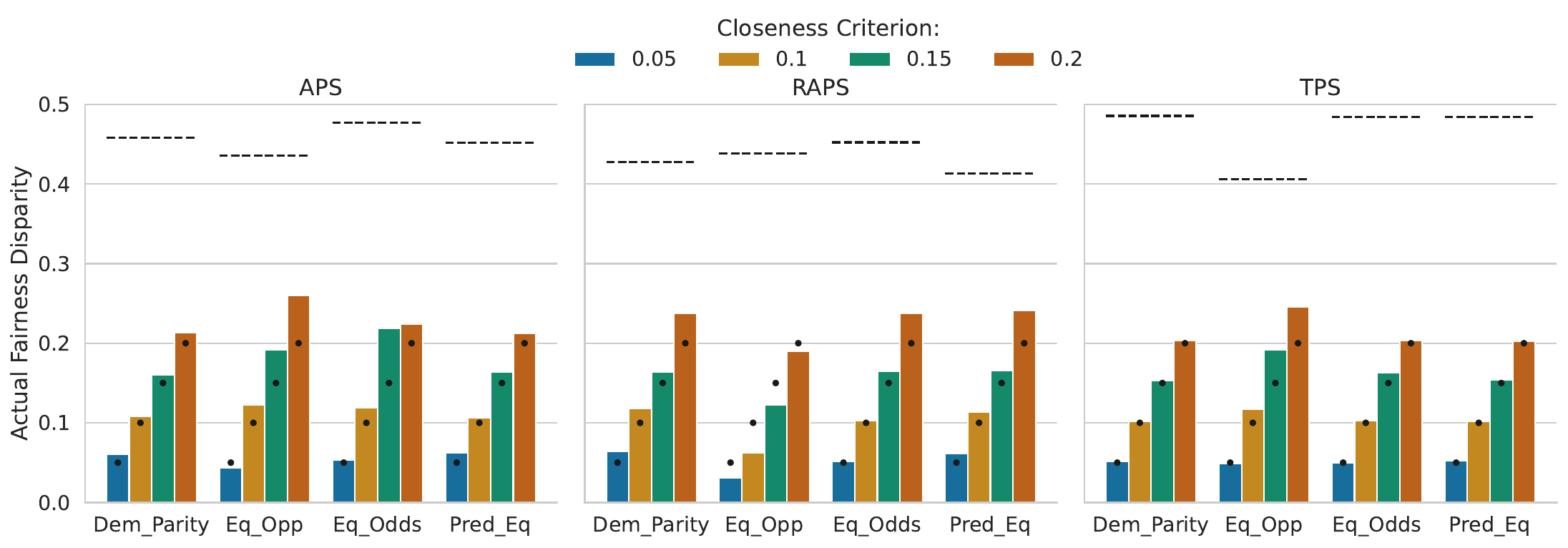}
    \end{subfigure}
    \caption{\small\textbf{ACSEducation.} Comparison of fairness disparities when using the CF Framework without (top) and with (bottom) classwise lambdas. We observe that the fairness disparities are better in the \textit{top} plot. This is because by using a single $\lambda$, only the hardest-to-satisfy label will be at or around the coverage gap, $c$, unlike classwise, which ensures all labels will be at or around the coverage gap, $c$. Since fewer labels have coverages around the coverage gap, for non-classwise in (top), the likelihood of being above the threshold is limited - as opposed to the classwise approach (bottom).}
    \label{fig:acseducation-true-more}
\end{figure}

\subsection{Impact of Classwise Lambdas}
\subsubsection{ACSIncome}
\begin{figure}[H]
    \centering
    \begin{subfigure}{\textwidth}
        \includegraphics[width=0.9\linewidth]{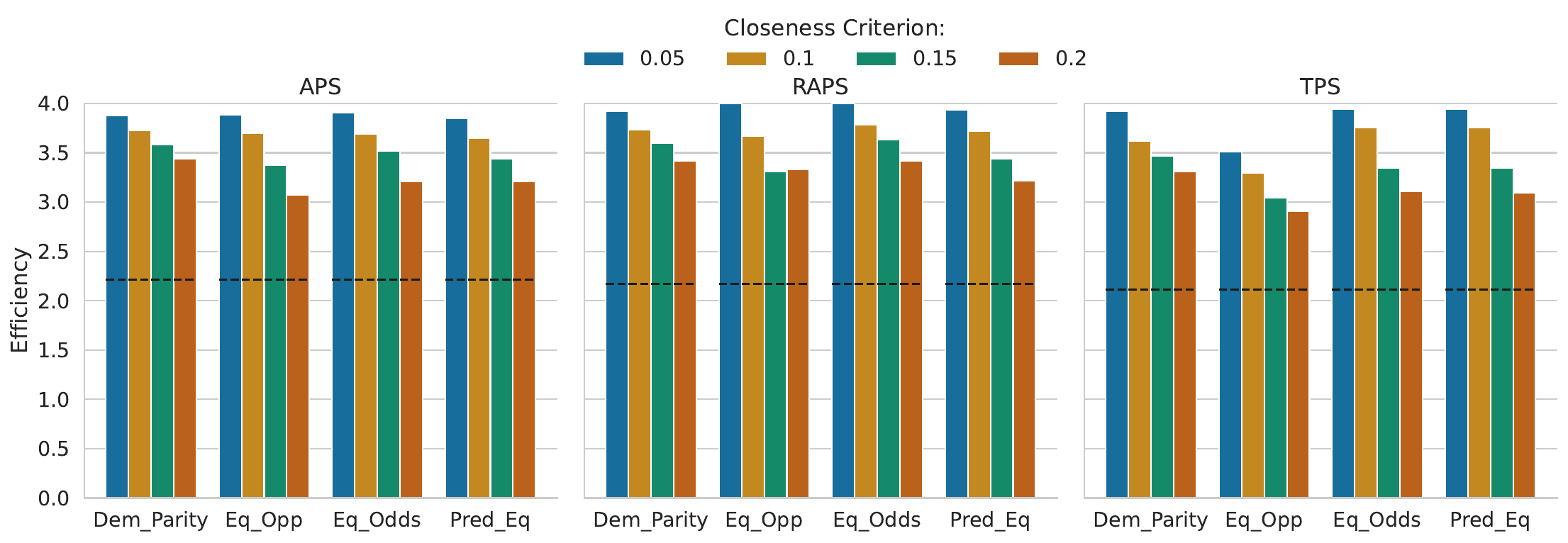}
    \end{subfigure}
    \begin{subfigure}{\textwidth}
        \includegraphics[width=0.9\linewidth]{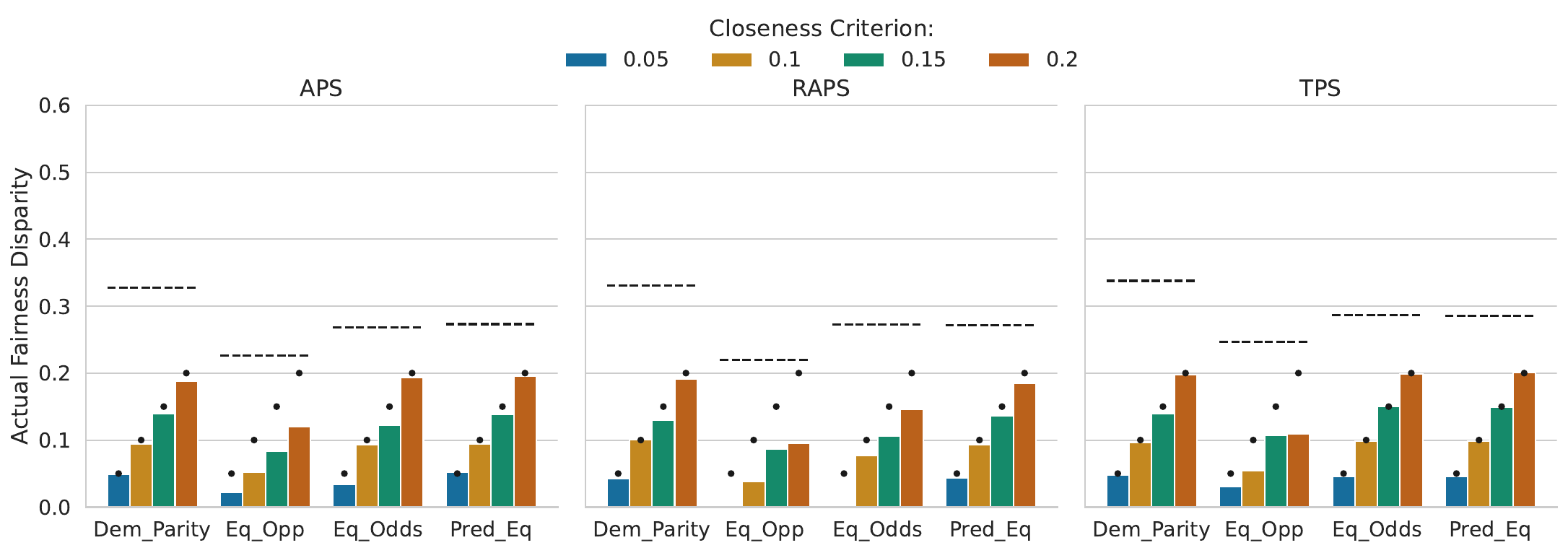}
    \end{subfigure}
    \caption{\small\textbf{ACSIncome.} The efficiency (top) and fairness disparity (bottom) plots when \textbf{not} using classwise lambdas. We observe that the efficiency is worse, but the disparity control is much better than using classwise lambdas (see Figure \ref{fig:acs_income}).}
    \label{fig:acsincome-no-classwise}
\end{figure}

\subsubsection{Credit}
\begin{figure}[H]
    \centering
    \includegraphics[width=0.8\linewidth]{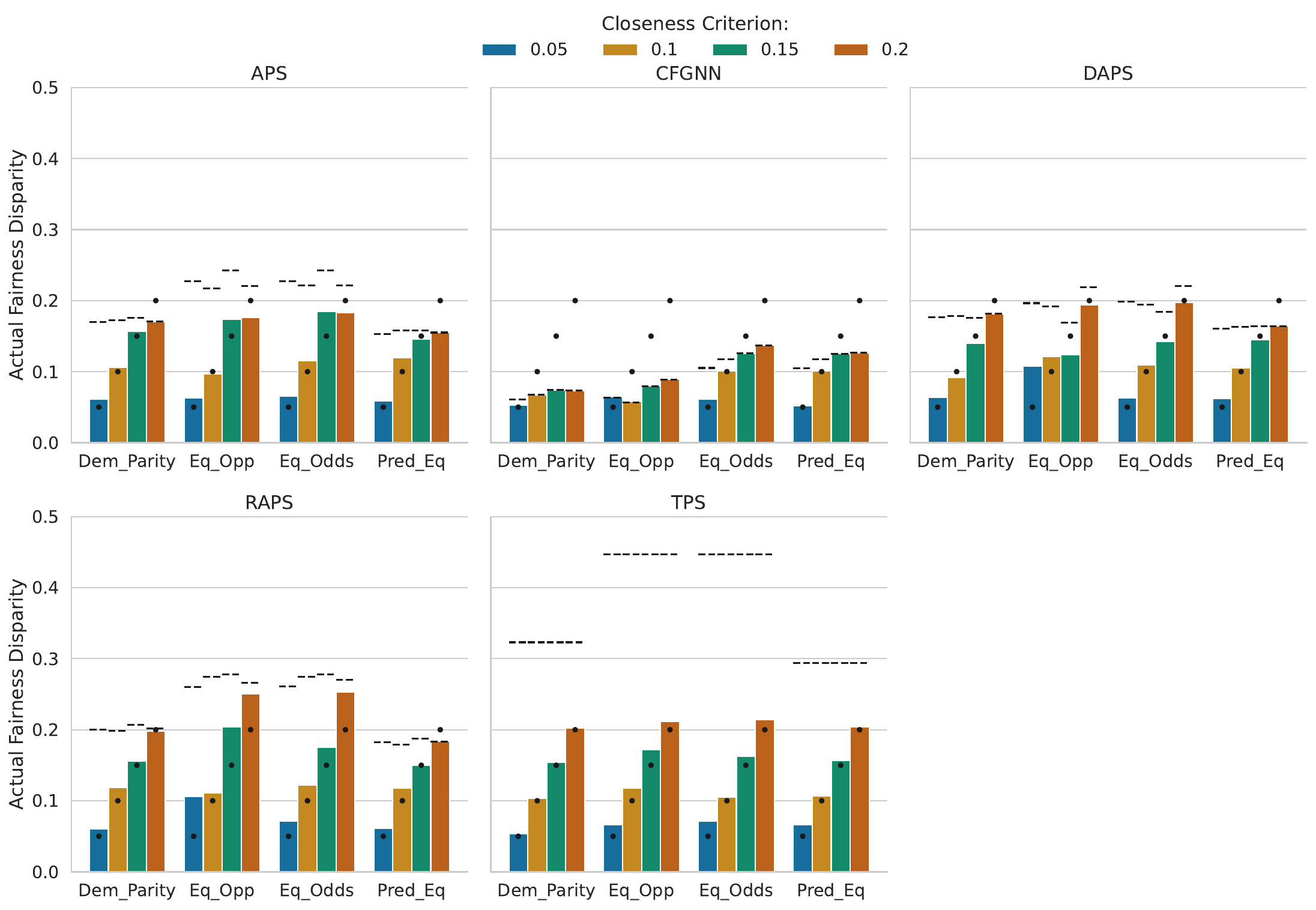}
    \caption{\textbf{Credit}. A granular version of Figure \ref{fig:credit}, which has a bar over each value of $c$ rather than considering an average. This plot clarifies that the CF framework matches or exceeds the baseline conformal predictor's performance in fairness disparity.}
    \label{fig:credit-granular}
\end{figure}

\begin{figure}[H]
    \centering
    \begin{subfigure}{0.5\textwidth}
        \centering
        \includegraphics[width=0.6\linewidth]{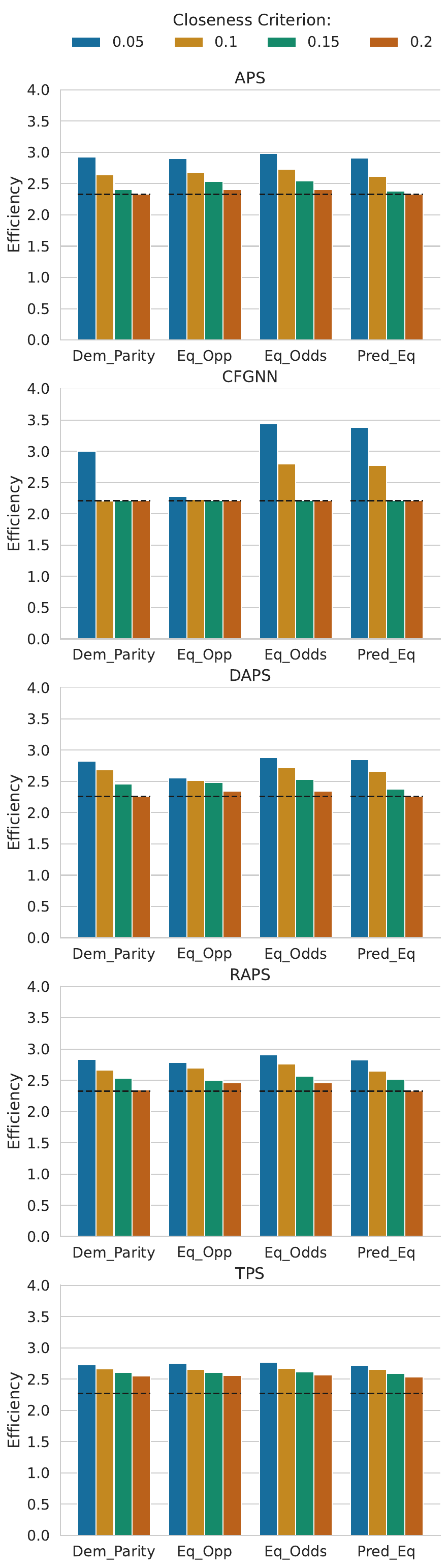}
    \end{subfigure}%
    \begin{subfigure}{0.5\textwidth}
        \includegraphics[width=0.58\linewidth]{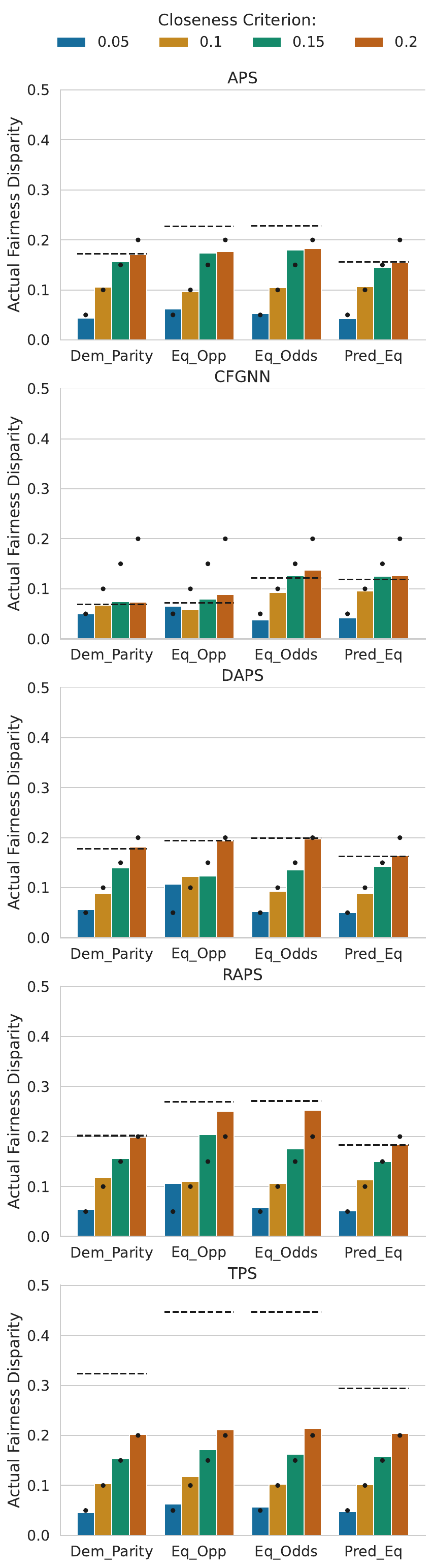}
    \end{subfigure}
    \caption{\textbf{Credit.} The efficiency (left) and fairness disparity (right) plots when \textbf{not} using classwise lambdas. We observe that the efficiency is worse, but the disparity control is much better than using classwise lambdas (see Figure \ref{fig:credit}).}
    \label{fig:credit-more}
\end{figure}

\subsection{Impact of different Pokec sensitive attributes}
\subsubsection{Pokec-n}
\begin{figure}[H]
    \centering
    \begin{subfigure}{0.5\textwidth}
    \centering
        \includegraphics[width=0.58\linewidth]{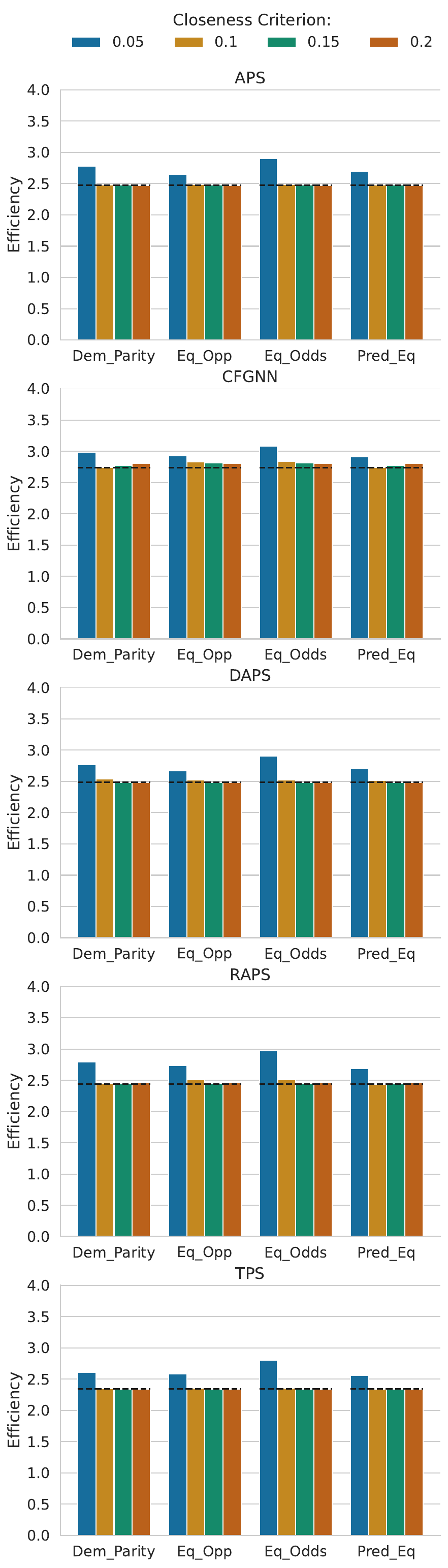}
    \end{subfigure}%
    \begin{subfigure}{0.5\textwidth}
        \includegraphics[width=0.56\linewidth]{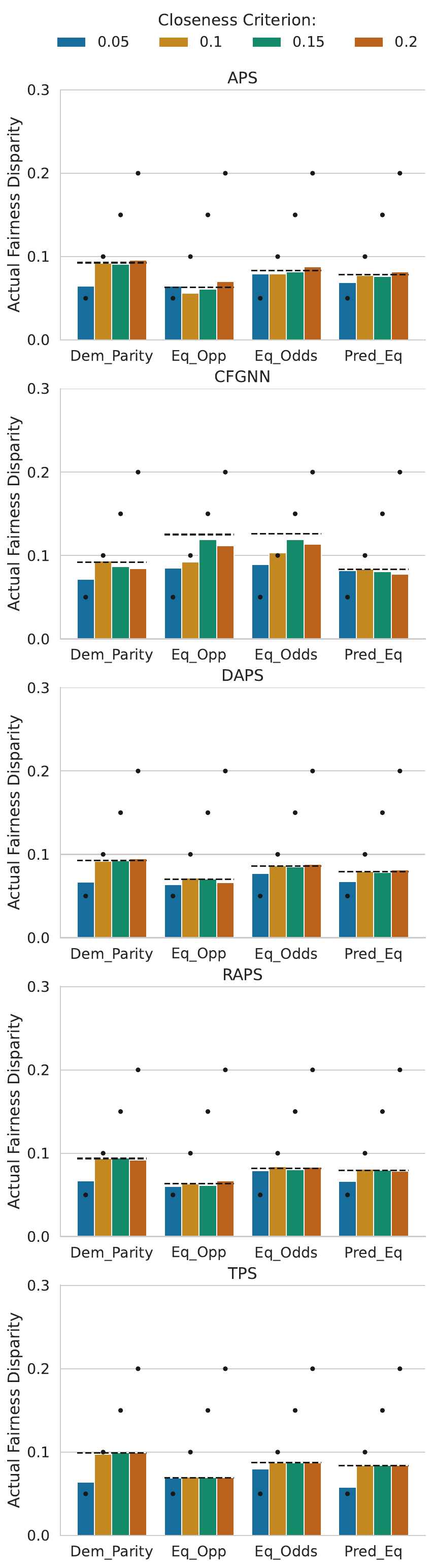}
    \end{subfigure}
    \caption{\small\textbf{Pokec-n.} The efficiency (left) and fairness violation (right) plots when considering only \textit{gender} as the sensitive attribute. Observe that the baseline disparity here is smaller than the baseline in intersectional fairness (see Figure \ref{fig:pokec_n_intersectional}). Thus, when controlling for $c=0.05$ coverage, there is a minimal change in efficiency across all the non-conformity scores.}
    \label{fig:pokec-n-gender}
\end{figure}

\begin{figure}[H]
    \centering
    \begin{subfigure}{0.5\textwidth}
    \centering
        \includegraphics[width=0.6\linewidth]{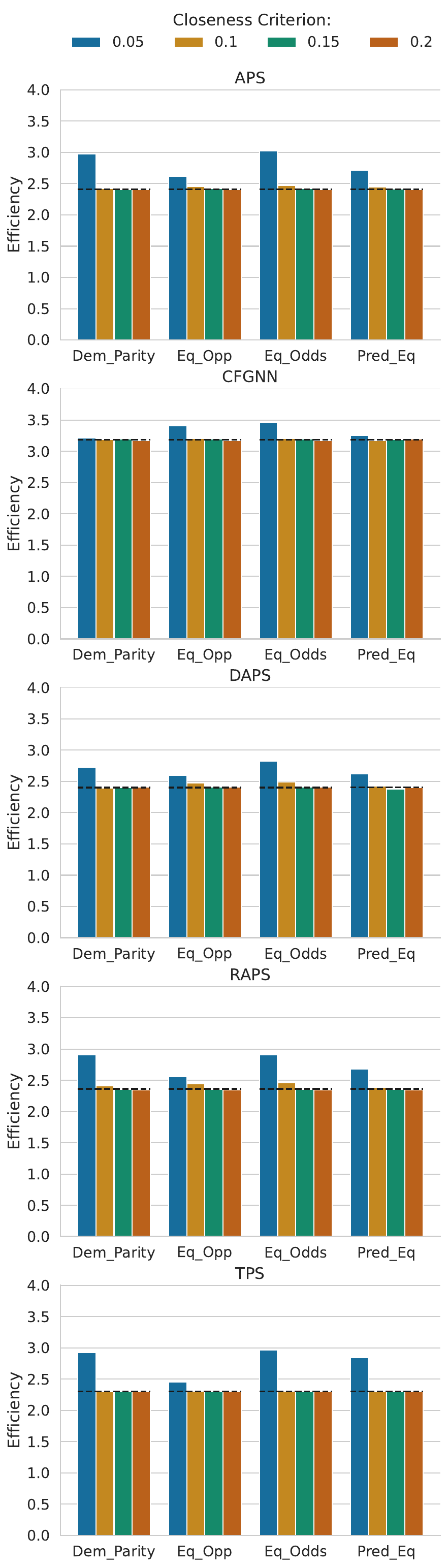}
    \end{subfigure}%
    \begin{subfigure}{0.5\textwidth}
        \includegraphics[width=0.58\linewidth]{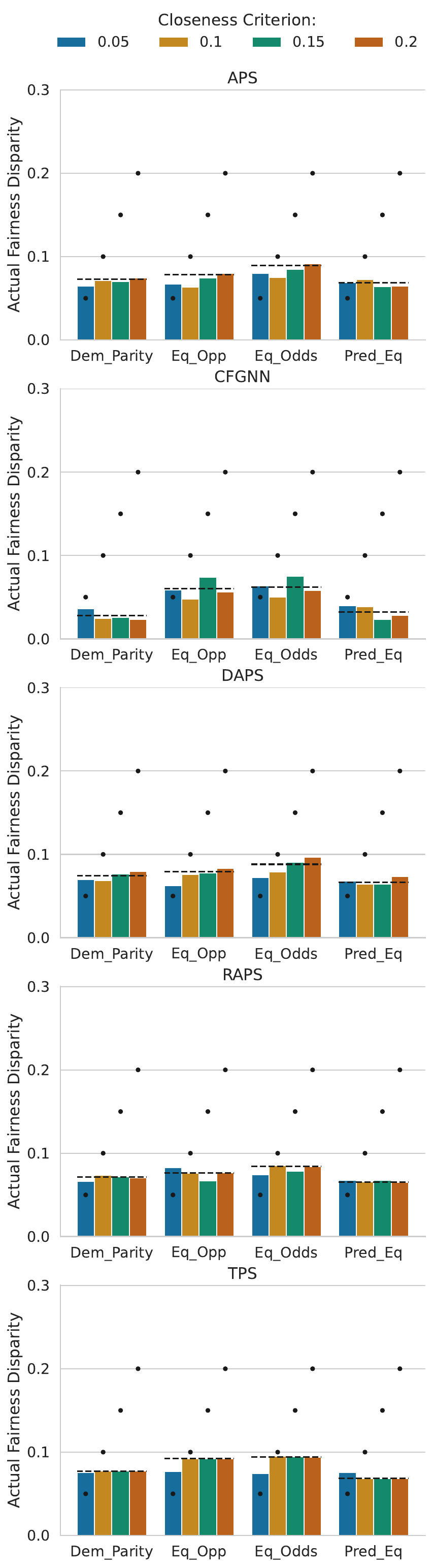}
    \end{subfigure}
    \caption{\small\textbf{Pokec-n.} The efficiency (left) and fairness violation (right) plots when considering only \textit{region} as the sensitive attribute. Compared to using \textit{gender} in Figure \ref{fig:pokec-n-gender}, the prediction set sizes are larger for \textit{region} when controlling for $c=0.05$, thus illustrating that exact performance will vary based on the sensitive attribute and group distribution.}
    \label{fig:pokec-n-region}
\end{figure}

\clearpage
\subsubsection{Pokec-z}

\begin{figure}[H]
    \centering
    \begin{subfigure}{0.5\textwidth}
        \centering
        \includegraphics[width=0.6\linewidth]{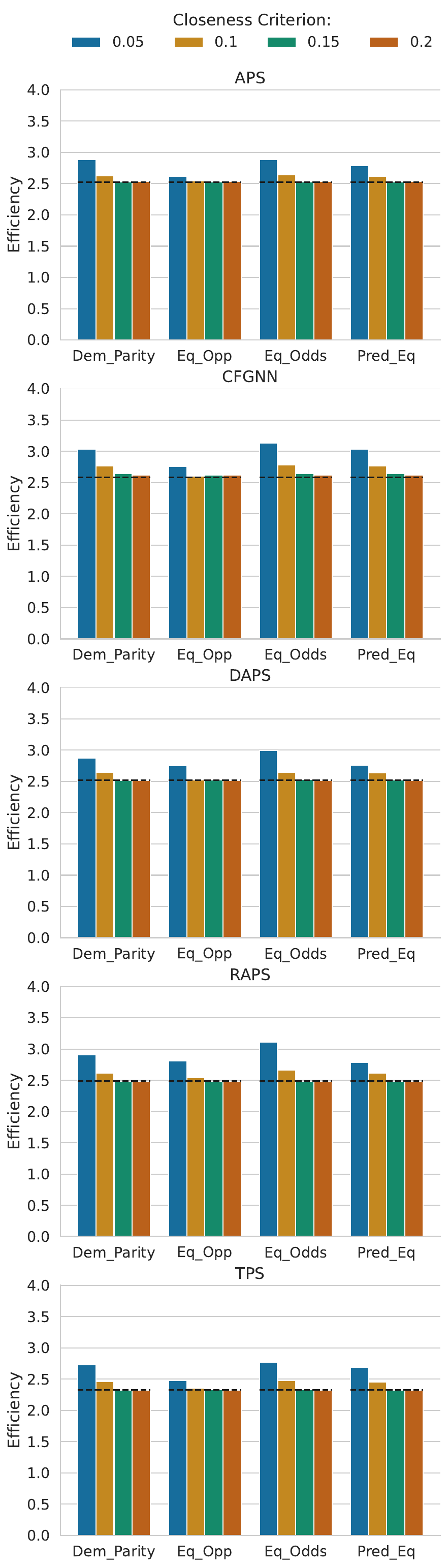}
    \end{subfigure}%
    \begin{subfigure}{0.5\textwidth}
        \includegraphics[width=0.58\linewidth]{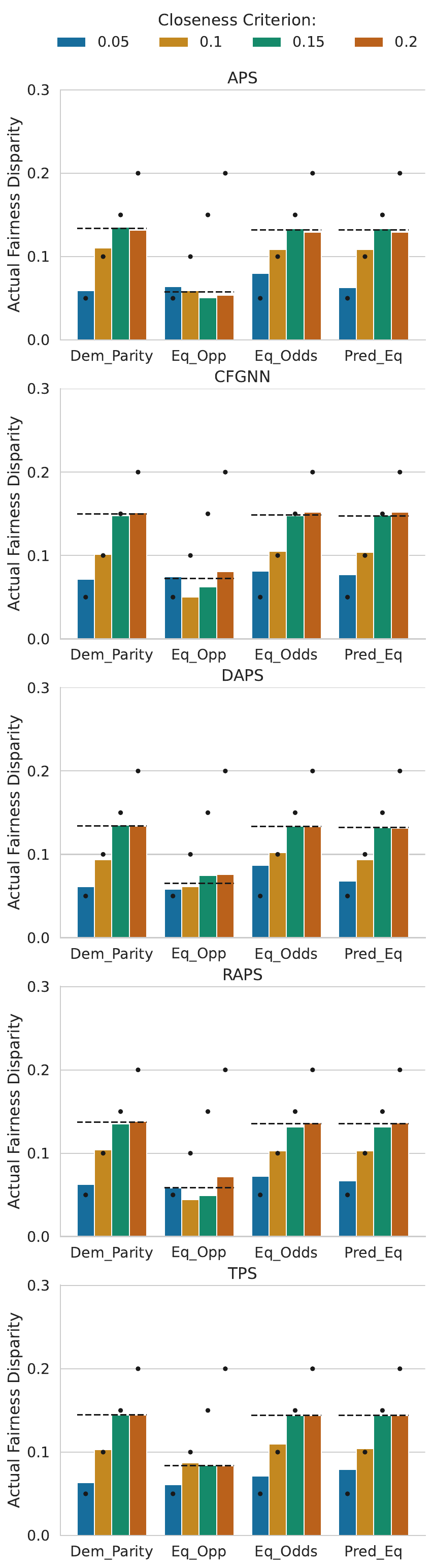}
    \end{subfigure}
    \caption{\small\textbf{Pokec-z.} The efficiency (left) and fairness violation (right) plots when considering only \textit{gender} as the sensitive attribute. Unlike Pokec-n - using \textit{gender} (see Figure \ref{fig:pokec-n-gender} - we find that the baseline is unfair for several values of c - exemplifying the auditing capabilities of the CF framework. This result also demonstrates how fairness can vary at different localities since Pokec-n and Pokec-z are disjoint subgraphs of the same graph.}
    \label{fig:pokec-z-gender}
\end{figure}

\begin{figure}[ht!]
    \centering
    \begin{subfigure}{0.5\textwidth}
        \centering
        \includegraphics[width=0.6\linewidth]{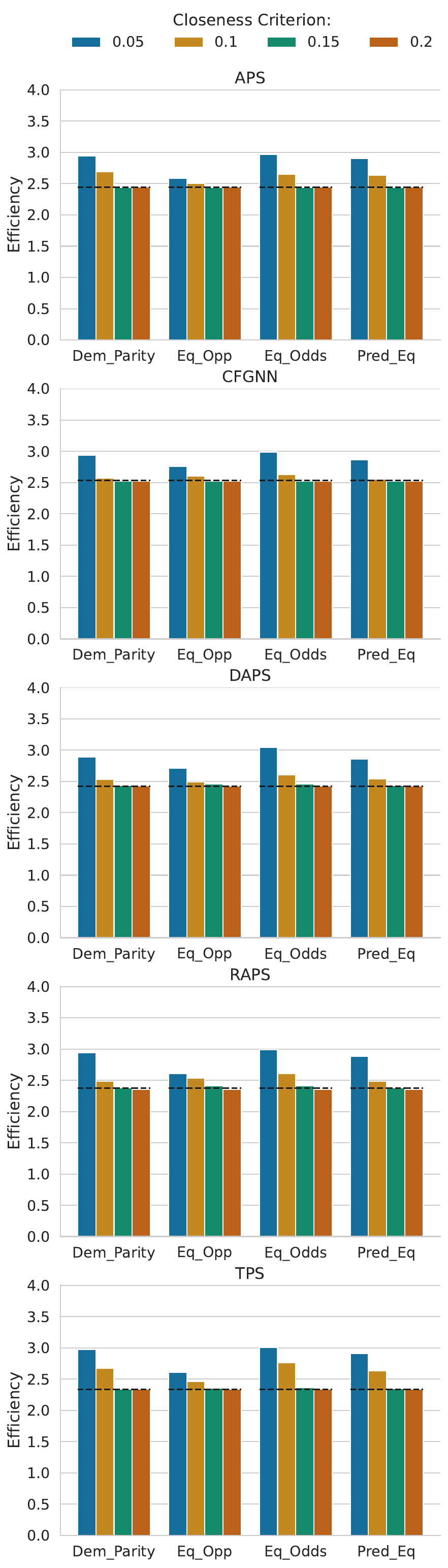}
    \end{subfigure}%
    \begin{subfigure}{0.5\textwidth}
        \includegraphics[width=0.58\linewidth]{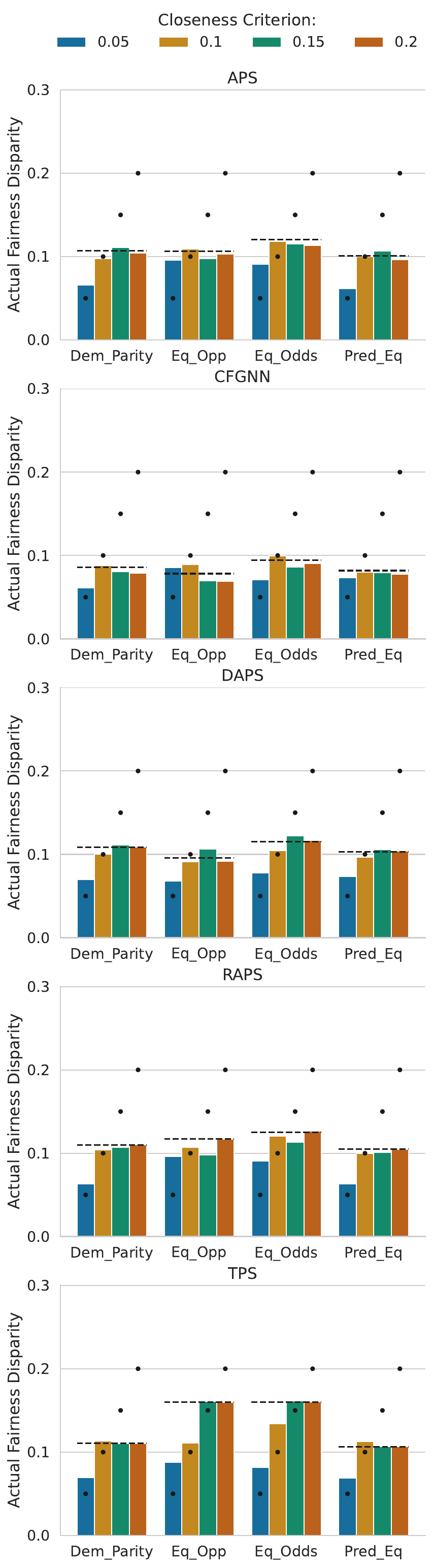}
    \end{subfigure}
    \caption{\small\textbf{Pokec-z.} The efficiency (left) and fairness violation (right) plots when considering only \textit{region} as the sensitive attribute. In these plots for $c\geq0.1$, the baselines are fair for all score functions except for TPS. The baseline for TPS being 'unfair' at $c=0.1$ suggests TPS sacrifices fairness for efficiency.}
    \label{fig:pokec-z-region}
\end{figure}

\begin{figure}[ht!]
    \centering
    \begin{subfigure}{0.5\textwidth}
        \centering
        \includegraphics[width=0.623\linewidth]{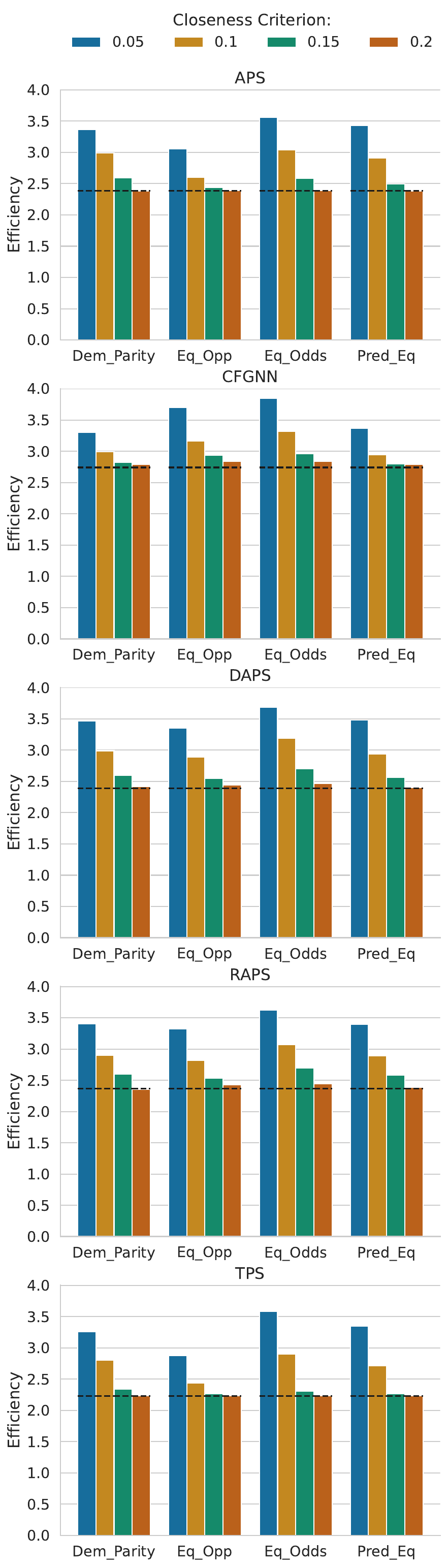}
    \end{subfigure}%
    \begin{subfigure}{0.5\textwidth}
        \includegraphics[width=0.58\linewidth]{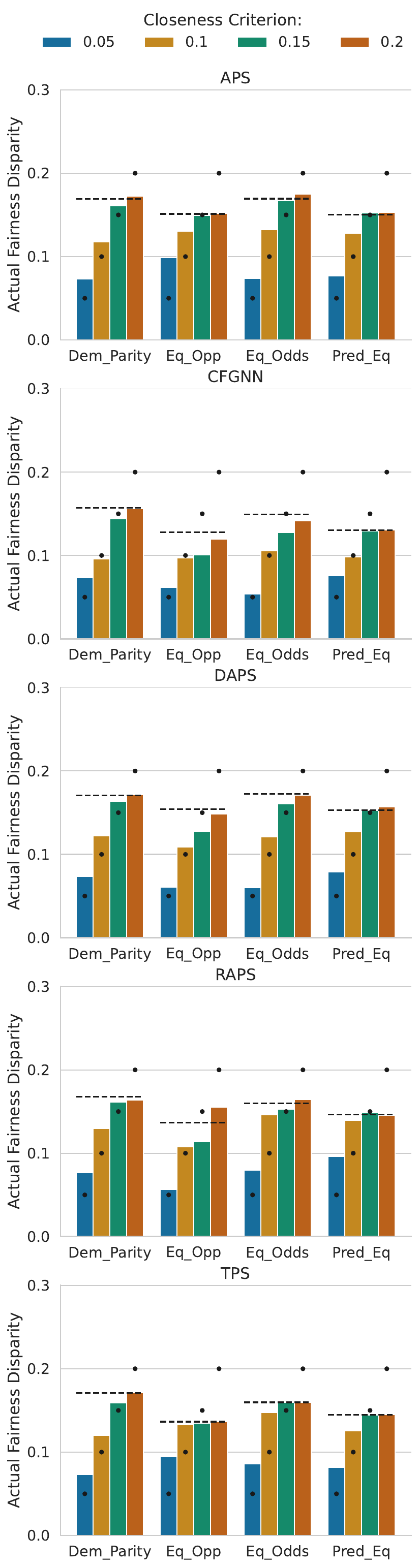}
    \end{subfigure}
    \caption{\small\textbf{Pokec-z.} Another example of intersectional fairness using both sensitive attributes of the Pokec-z dataset. Like Pokec-n, we find that the worst-case fairness disparity is better than the baseline for the CF Framework. We reiterate that the challenge of intersection fairness is the multiplicative increase in the number of groups that must be balanced. Thus, intersection fairness would benefit from more calibration points.}
    \label{fig:pokec-z-intersectional}
\end{figure}

\subsection{Predictive Parity Proxy Results}
\begin{figure}[H]
    \centering
    \begin{subfigure}{0.5\textwidth}
        \includegraphics[width=0.9\linewidth]{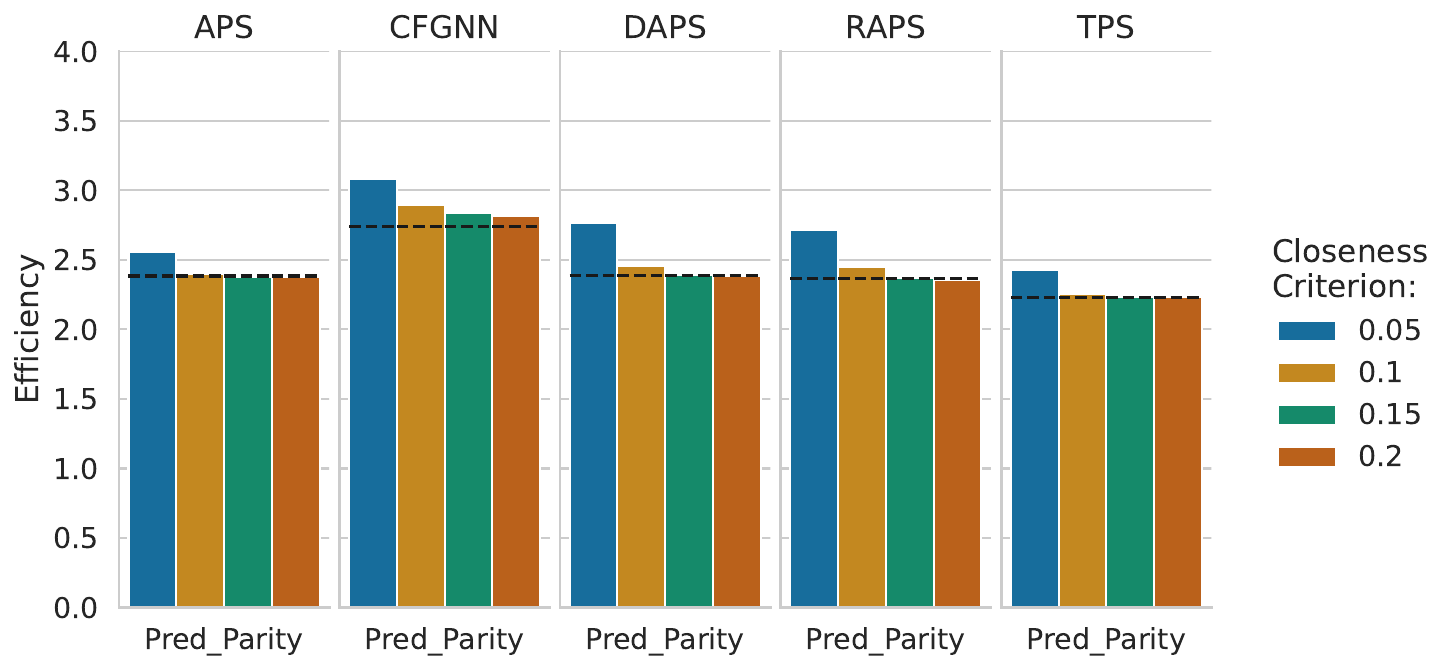}
    \end{subfigure}%
    \begin{subfigure}{0.5\textwidth}
        \includegraphics[width=0.9\linewidth]{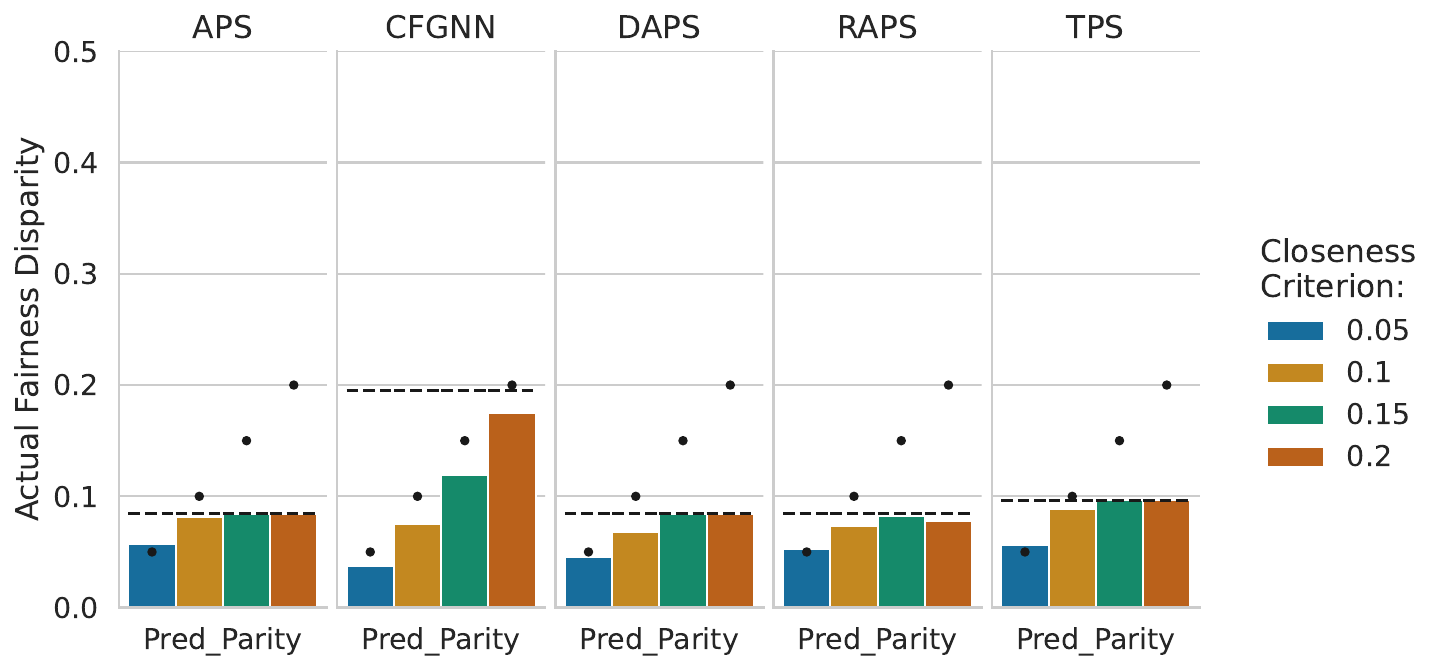}
    \end{subfigure}
    \caption{\small\textbf{Pokec-z.} Results using the Predictive Parity Proxy on a \textit{graph} dataset. We can see that the CF Framework controls for this metric as well as or better than the base conformal predictor at all values of $c$ and non-conformity scores.}
    \label{fig:pokec-z-intersectional-proxy}
\end{figure}

\clearpage
\subsection{Disparate Impact Results}

\begin{table}[ht]
    \small
    \caption{\textbf{ACSEducation.} The CF framework significantly enhances fairness (increases the ratio close to the desired 80\% disparate impact rule), with a cost to efficiency. The classwise approach improves efficiency further while retaining a ratio near 80\%.}
    \label{tab:acseducation_di_all}
    \centering
    \begin{tabular}{cccccccc}
        \toprule
        & \textbf{ACSEducation} & \multicolumn{2}{c}{APS} & \multicolumn{2}{c}{RAPS} & \multicolumn{2}{c}{TPS} \\
        \midrule
        Classwise &  & Base & CF & Base & CF & Base & CF \\
        \toprule
        \multirow{2}{*}{False} & Disp. Impact & 0.011 & \textbf{0.799} & 0.014 & \textbf{0.791} & 0.019 & \textbf{0.804}\\
        & Efficiency &2.982 & 5.662 & 3.031 & 5.658 & 2.828 & 5.705\\
        \midrule
        \multirow{2}{*}{True}& Disp. Impact & 0.011 & \textbf{0.781} & 0.014 & \textbf{0.761} & 0.019 & \textbf{0.797}  \\
        & Efficiency & 2.982 & 4.727 & 3.031 & 4.724 & 2.828 & 4.744 \\
        \bottomrule
    \end{tabular}
\end{table}

\begin{table}[ht]
    \small
    \caption{\textbf{ACSIncome. }The CF framework significantly enhances fairness (increases the ratio close to the desired 80\% disparate impact rule), with a cost to efficiency. The classwise approach improves efficiency further while retaining a ratio near 80\%.}
    \label{tab:acsincome_di_all}
    \centering
    \begin{tabular}{cccccccc}
        \toprule
        & \textbf{ACSIncome} & \multicolumn{2}{c}{APS} & \multicolumn{2}{c}{RAPS} & \multicolumn{2}{c}{TPS} \\
        \midrule
        Classwise &  & Base & CF & Base & CF & Base & CF \\
        \toprule
        \multirow{2}{*}{False} & Disp. Impact & 0.397 & \textbf{0.815} & 0.387 & \textbf{0.847} & 0.356 & \textbf{0.804} \\
        & Efficiency &2.212 & 3.557 & 2.169 & 3.610 & 2.109 & 3.416 \\
        \midrule
        \multirow{2}{*}{True}& Disp. Impact & 0.397 & \textbf{0.797} & 0.387 & \textbf{0.790} & 0.356 & \textbf{0.797} \\
        & Efficiency & 2.212 & 2.674 & 2.169 & 2.752 & 2.109 & 2.679 \\
        \bottomrule
    \end{tabular}
\end{table}

\begin{table}[ht]
    \small
    \caption{\textbf{Credit.} For the Credit dataset, we notice that the CF Framework improves over the baseline for APS, RAPS, TPS, and DAPS. We find that the baseline for TPS performs the worst of the 4 methods regarding disparate impact. Interestingly, the CFGNN baseline, on the other hand, maximizes the disparate impact while having the best efficiency. CFGNN demonstrates a case where CF does not perform worse than the baseline since the baseline was already `fair'. It also provides an example of where an audit via CF would find that the CFGNN conformal predictor is {\it a priori} fair.}
    \label{tab:credit_di_all}
    \centering
    \begin{tabular}{cccccccccccc}
        \toprule
        & \textbf{Credit} & \multicolumn{2}{c}{APS} & \multicolumn{2}{c}{RAPS} & \multicolumn{2}{c}{TPS} & \multicolumn{2}{c}{CFGNN} & \multicolumn{2}{c}{DAPS} \\
        \midrule
        Classwise &  & Base & CF & Base & CF & Base & CF & Base & CF & Base & CF \\
        \toprule
        \multirow{2}{*}{False} & Disp. Impact & 0.646 & \textbf{0.821} & 0.586 & \textbf{0.768} & 0.252 & \textbf{0.793} & \textbf{0.922} & \textbf{0.922} & 0.539 & \textbf{0.809}  \\
        & Efficiency & 2.325 & 2.561 & 2.326 & 2.559 & 2.268 & 2.620 & 2.202 & 2.202 & 2.254 & 2.573 \\
        \midrule
        \multirow{2}{*}{True}& Disp. Impact  & 0.646 &\textbf{0.821} & 0.586 & \textbf{0.768} & 0.252 & \textbf{0.793} & \textbf{0.922} & \textbf{0.922} & 0.539 & \textbf{0.809}  \\
        & Efficiency & 2.325 & 2.513 & 2.326 & 2.509 & 2.268 & 2.558 & 2.202 & 2.202 & 2.254 & 2.526\\
        \bottomrule
    \end{tabular}
\end{table}

\begin{table}[ht]
    \small
    \caption{\textbf{Pokec-n.} For Pokec-n, the CF Framework improves the disparate impact of the baseline conformal predictor. Similar to the other fairness metrics, the disparate impact worsens when considering intersectional fairness and is near or exceeds 80\% when only one sensitive attribute is considered (i.e., \textit{region} or \textit{gender}).}
    \label{tab:pokec_n_di_all}
    \centering
    \begin{tabular}{ccccccccccccc}
    \toprule
    & &  \textbf{Pokec-n} & \multicolumn{2}{c}{APS} & \multicolumn{2}{c}{RAPS} & \multicolumn{2}{c}{TPS} & \multicolumn{2}{c}{CFGNN} & \multicolumn{2}{c}{ DAPS} \\
    \midrule
    Classwise & Group & & Base & CF & Base & CF & Base & CF & Base & CF & Base & CF \\
    \toprule
    \multirow{2}{*}{False}& \multirow{2}{*}{Gender}& Disp. Impact & 0.798 & \textbf{0.802} & 0.793 & \textbf{0.811} & 0.777 & \textbf{0.796} & 0.784 & \textbf{0.797} & 0.800 & \textbf{0.806} \\
    & & Efficiency & 2.465 & 2.565 & 2.434 & 2.648 & 2.343 & 2.494 & 2.636 & 2.838 & 2.474 & 2.639 \\
    \midrule
    \multirow{2}{*}{True}& \multirow{2}{*}{Gender}& Disp. Impact & 0.798 & \textbf{0.802} & 0.793 & \textbf{0.810} & 0.777 & \textbf{0.796} & 0.784 & \textbf{0.797} & 0.800 & \textbf{0.806} \\
    & & Efficiency & 2.465 & 2.473 & 2.434 & 2.457 &  2.343 & 2.358 & 2.636 & 2.666 & 2.474 & 2.494 \\
    \midrule
    \multirow{2}{*}{False}& \multirow{2}{*}{Region}& Disp. Impact & 0.814 & \textbf{0.823} & 0.820 & \textbf{0.829} & 0.803 & \textbf{0.814} & 0.916 & \textbf{0.918} & 0.822 & \textbf{0.831} \\
    & & Efficiency & 2.401 & 2.498 & 2.373 & 2.552 & 2.300 & 2.426 & 3.168 & 3.185 & 2.413 & 2.584 \\
    \midrule
    \multirow{2}{*}{True}& \multirow{2}{*}{Region}& Disp. Impact & \textbf{0.814} & \textbf{0.814} & 0.820 & \textbf{0.822} & 0.803 & \textbf{0.806} & 0.916 & \textbf{0.917} & 0.822 & \textbf{0.824} \\
    & & Efficiency & 2.401 & 2.430 & 2.374 & 2.404 & 2.300 & 2.332 & 3.168 & 3.180 & 2.413 & 2.437 \\
    \midrule
    \multirow{2}{*}{False}& Region \& & Disp. Impact & 0.718 & \textbf{0.792} & 0.723 & \textbf{0.774} & 0.716 & \textbf{0.785} & 0.602 & \textbf{0.812} & 0.720 & \textbf{0.787} \\
    &Gender& Efficiency & 2.485 & 3.037 & 2.435 & 3.012  & 2.341 & 2.970 & 2.537 & 3.563 & 2.509 & 2.991 \\
    \midrule
    \multirow{2}{*}{True}& Region \& &Disp. Impact & 0.718 & \textbf{0.767} & 0.723 & \textbf{0.760} &0.716 & \textbf{0.754} & 0.602 & \textbf{0.779} & 0.720 & \textbf{0.764} \\
    &Gender& Efficiency & 2.485 & 2.739 & 2.435 & 2.656 & 2.341 & 2.566 & 2.537 & 2.964 & 2.509 & 2.709 \\
    \bottomrule
    \end{tabular}
\end{table}

\begin{table}[ht]
    \small
    \caption{\textbf{Pokec-z.}  For Pokec-z, the CF Framework improves the disparate impact of the baseline conformal predictor. Similar to the other fairness metrics, the disparate impact worsens when considering intersectional fairness and is near or exceeds 80\% when only one sensitive attribute is considered (i.e., \textit{region} or \textit{gender}).}
    \label{tab:pokec_z_di_all}
    \centering
    \begin{tabular}{ccccccccccccc}
    \toprule
    & &  \textbf{Pokec-z} & \multicolumn{2}{c}{APS} & \multicolumn{2}{c}{RAPS} & \multicolumn{2}{c}{TPS} & \multicolumn{2}{c}{CFGNN} & \multicolumn{2}{c}{ DAPS} \\
    \midrule
    Classwise & Group & & Base & CF & Base & CF & Base & CF & Base & CF & Base & CF \\
    \toprule
    \multirow{2}{*}{False}& \multirow{2}{*}{Gender}& Disp. Impact & 0.798 & \textbf{0.802} & 0.737 & \textbf{0.800} & 0.737 & \textbf{0.800} & 0.784 & \textbf{0.797} & 0.800 & \textbf{0.806} \\
    & & Efficiency & 2.523 & 2.995 & 2.489 & 3.088 &2.327 & 3.134 & 2.512 & 3.388 & 2.522 & 2.996 \\
    \midrule
    \multirow{2}{*}{True}& \multirow{2}{*}{Gender}& Disp. Impact & 0.798 & \textbf{0.802} & 0.737 & \textbf{0.800} & 0.661 & \textbf{0.742} & 0.784 & \textbf{0.797} & 0.800 & \textbf{0.806} \\
    & & Efficiency & 2.523 & 2.572 & 2.489 & 2.571 & 2.327 & 2.415 & 2.512 & 2.642 & 2.522 & 2.579 \\
    \midrule
    \multirow{2}{*}{False}& \multirow{2}{*}{Region}& Disp. Impact & 0.807 & \textbf{0.823} & 0.796 & \textbf{0.826} & 0.812 & \textbf{0.816} & 0.781 & \textbf{0.796} & 0.811 & \textbf{0.829} \\
    & & Efficiency & 2.429 & 2.569 & 2.369 & 2.592 & 2.337 & 2.497 & 2.546 & 2.644 & 2.416 & 2.586 \\
    \midrule
    \multirow{2}{*}{True}& \multirow{2}{*}{Region}& Disp. Impact & 0.807 & \textbf{0.815} & 0.796 & \textbf{0.800} & 0.812 & \textbf{0.816} & 0.781 & \textbf{0.796} & 0.811 & \textbf{0.813} \\
    & & Efficiency & 2.429 & 2.475 & 2.369 & 2.403 & 2.337 & 2.391 & 2.546 & 2.603 & 2.416 & 2.453 \\
    \midrule
    \multirow{2}{*}{False}& Region \& & Disp. Impact & 0.658 & \textbf{0.787} & 0.661 & \textbf{0.770} & 0.640 & \textbf{0.785} & 0.590 & \textbf{0.806} & 0.658 & \textbf{0.768} \\
    &Gender& Efficiency & 2.408 & 3.173 & 2.359 & 3.214 & 2.265 & 3.175 & 2.512 & 3.471 & 2.409 & 3.133 \\
    \midrule
    \multirow{2}{*}{True}& Region \& &Disp. Impact & 0.658 & \textbf{0.773} &0.661  & \textbf{0.742}& 0.640 & \textbf{0.745} & 0.590 & \textbf{0.800} & 0.658 & \textbf{0.749} \\
    &Gender& Efficiency & 2.408 & 2.799 & 2.359 & 2.741 & 2.265 & 2.627 & 2.512 & 2.788 & 2.409 & 2.743 \\
    \bottomrule
    \end{tabular}
\end{table}

\clearpage
\subsection{Comparison with BatchGCP}\label{appx:batchgcp}
To produce conformal prediction sets with group-wise coverage, BatchGCP \citep{Jung2022BatchMC} learns a group-dependent threshold function to provide $1-\alpha$ coverage for the correct label - i.e. $ \Pr\parens{y_{n+1} \in \mathcal C_{\hat{f}(\bm x_{n+1};\lambda)}(\bm x_{n+1})~|~\bm x_{n+1}\in g'} = 1-\alpha~~\forall g' \in \mathcal G$. To achieve this, a group-dependent threshold function, $\hat{f}(\bm x_{n+1};\lambda)$, in Equation \ref{eq:gcp_threshold_func} is used to construct prediction sets by adding a correction to the base threshold function $f(x)$. For the scoring functions considered (i.e., APS), we define $f(x) \equiv \hat{q}(\alpha)$ as the $1-\alpha$ quantile of the calibration scores. $\lambda \in \mathbb{R}^{|\mathcal G|}$ is a vector with $\lambda_{g'}$ corresponding to the entry for the $g'$ group. The groups, $g'$, may intersect, allowing for $\vx_{n+1}$ to be a part of multiple groups. 

\begin{equation}\label{eq:gcp_threshold_func}
    \hat{f}(x;\lambda) = f(x) + \sum_{g'\in G}\lambda_{g'}\bm 1[x\in g']
\end{equation}

To determine the value of $\lambda$, \citet{Jung2022BatchMC} solve the convex optimization problem in \ref{eq:convex_gcp_problem}, where $L_q$ is the pinball loss (Equation \ref{eq:pinball_loss}) - a function used to determine how close a threshold, $\hat f(x;\lambda)$, is to a specific quantile, $1-\alpha$, for a given score, $s$. Using the pinball loss, $\lambda^*$ - the optimal $\lambda$ - is computed and used to construct prediction sets. 

\begin{equation}\label{eq:convex_gcp_problem}
    \lambda^* = \underset{\lambda}{\text{argmin}}~ \mathbb{E}\big[L_q\big(\hat{f}(x;\lambda),s(x,y)\big)\big] 
\end{equation}

\begin{equation}\label{eq:pinball_loss}
    L_{1-\alpha}(\tau,s) = (s-\tau)(1-\alpha)\bm 1[s > \tau] + (\tau-s)\alpha\bm 1[s \leq \tau]
\end{equation}

To compare against BatchGCP, we adapted the codebase\footnote{\url{https://github.com/ProgBelarus/BatchMultivalidConformal}} provided by \citet{Jung2022BatchMC} to accommodate APS as a non-conformity score and use the classification variant of the Folktables datasets since \citet{Jung2022BatchMC} conducts experiments with those datasets. With the BatchGCP implementation, we get the threshold function, $\hat{f}$, and use it when constructing prediction sets. The fairness disparity is evaluated using Demographic Parity and Disparate Impact. Since BatchGCP aims to optimize \textit{group}-conditional coverage, we only compare against those two fairness metrics, for a fair comparison.

While BatchGCP provides PAC guarantees on group-wise coverage, it does not \textit{necessarily} provide the fairness guarantees our framework does, as empirically seen with the ACSIncome and ACSEducation datasets in Table \ref{tab:disparate_gcp} and Figures  \ref{fig:acsincome_dem_parity_gcp} and \ref{fig:acseducation_dem_parity_gcp}. We can dissect the poor performance on these datasets by looking at the per-group conditional coverages in Figure \ref{fig:acs_gcp}, where several groups are undercovered. We note that BatchGCP \textit{requires group information at inference time}, which restricts it from settings where group information may be unavailable at inference time -- this is not a limitation of the CF Framework. 




\begin{table}[h]
    \small
    \centering
    \caption{\small Comparing Base APS, BatchGCP, and CF framework under Disparate Impact. For the CF framework, we set $c = 0.8$ and use classwise lambdas. We observe that the CF Framework can achieve a disparate impact value much closer to $c = 0.8$ with little effect on efficiency.}
    \begin{tabular}{cccccc}
        \toprule
         &  & \textbf{Base APS} & {\textbf{BatchGCP}} & {\textbf{CF}} \\
         \midrule
          \multirow{2}{*}{\textbf{ACSEducation}} & Disp. Impact& $0.011$ & $0.576$ & $\mathbf{0.781}$  \\
          &Efficiency& $2.982$ & $2.893$ & $4.727$  \\
          \midrule
         \multirow{2}{*}{\textbf{ACSIncome}} & Disp. Impact& $0.397$ & $0.349$ & $\mathbf{0.797}$  \\
          &Efficiency& $2.212$ & $2.200$ & $2.674$  \\
         \bottomrule
    \end{tabular}
    \label{tab:disparate_gcp}
\end{table}

\begin{figure}[ht!]
    \centering
    \begin{subfigure}{0.5\textwidth}
        \includegraphics[width=\linewidth]{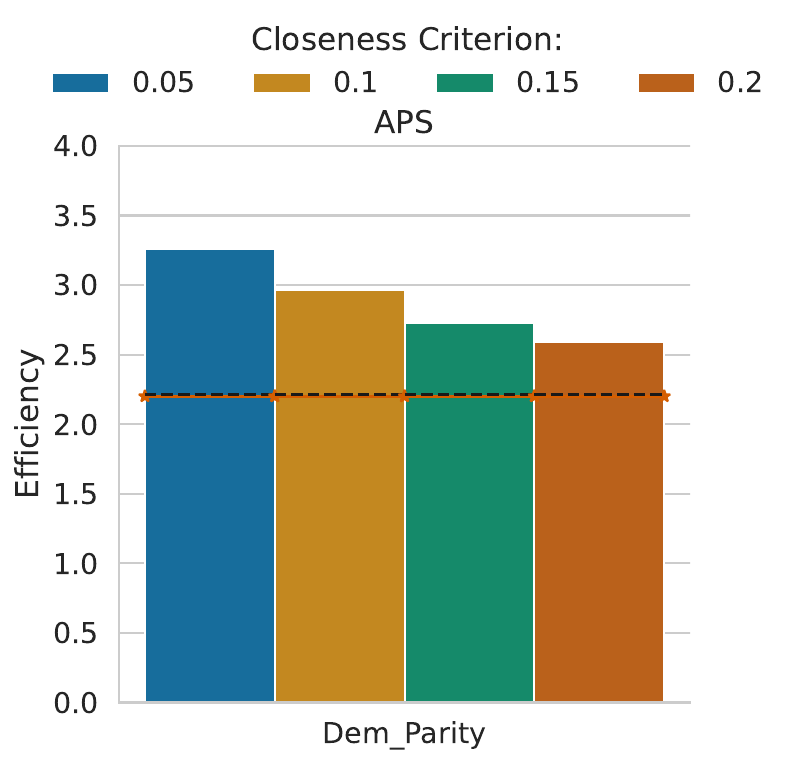}
    \end{subfigure}%
    \begin{subfigure}{0.48\textwidth}
        \includegraphics[width=\linewidth]{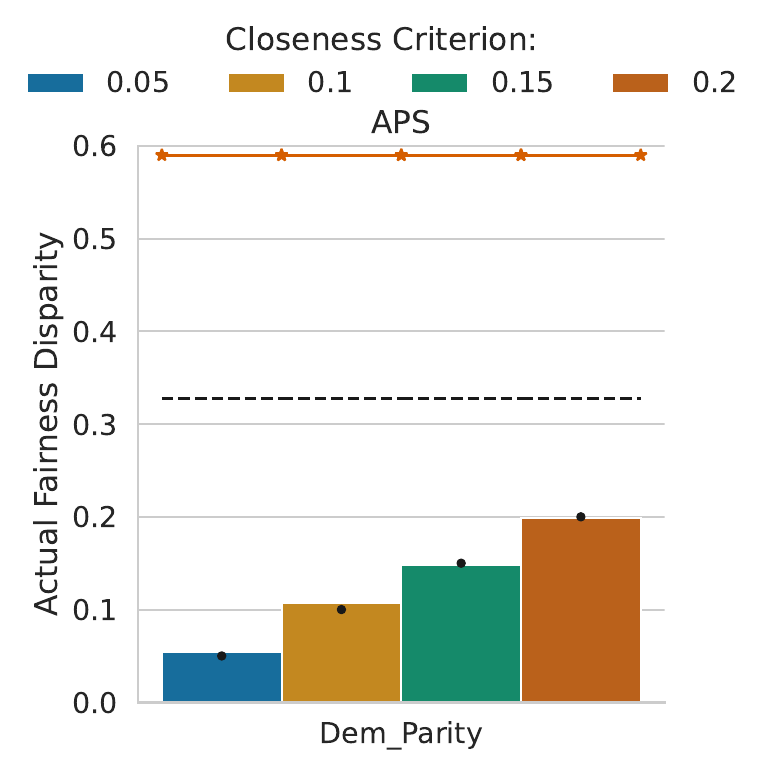}
    \end{subfigure}
    \caption{\small\textbf{ACSIncome.} Comparing Base APS, BatchGCP, and CF framework under Demographic Parity. For the CF framework, we vary $c\in\{0.05, 0.1, 0.15, 0.2\}$. We observe that the CF Framework achieves the smallest disparity for every value of $c$ (seen on the right figure) with a small cost to the efficiency (as seen on the left figure). In the figure, Base APS = Black lines, BatchGCP = Red lines, and the CF framework is the bar charts. The black dots are the \textit{desired} disparity level for the CF framework, which the CF framework achieves, while neither Base APS nor BatchGCP meets these thresholds.}
    \label{fig:acsincome_dem_parity_gcp}
\end{figure}

\begin{figure}[ht!]
    \centering
    \begin{subfigure}{0.5\textwidth}
        \includegraphics[width=\linewidth]{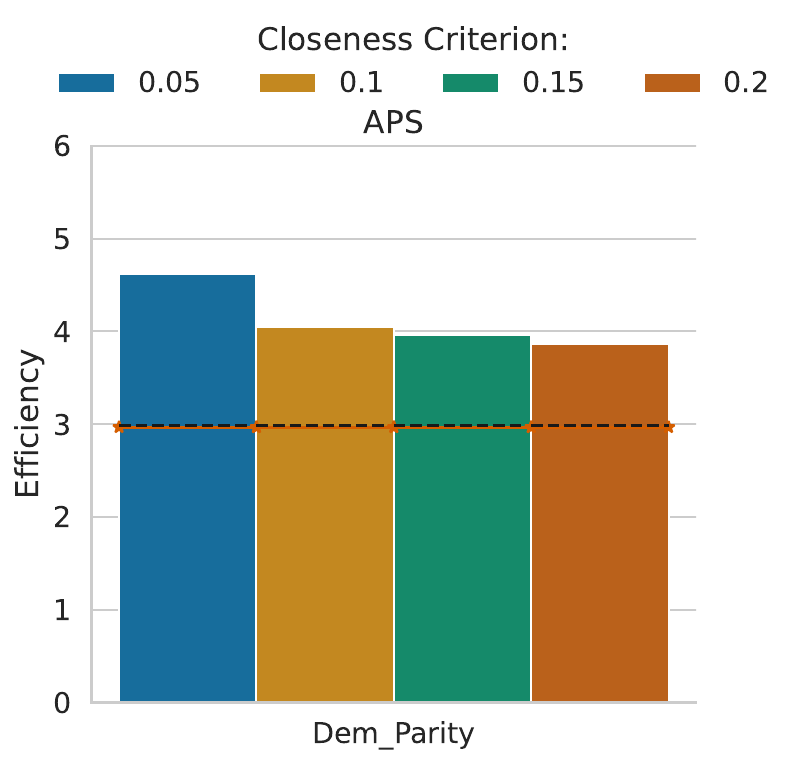}
    \end{subfigure}%
    \begin{subfigure}{0.48\textwidth}
        \includegraphics[width=\linewidth]{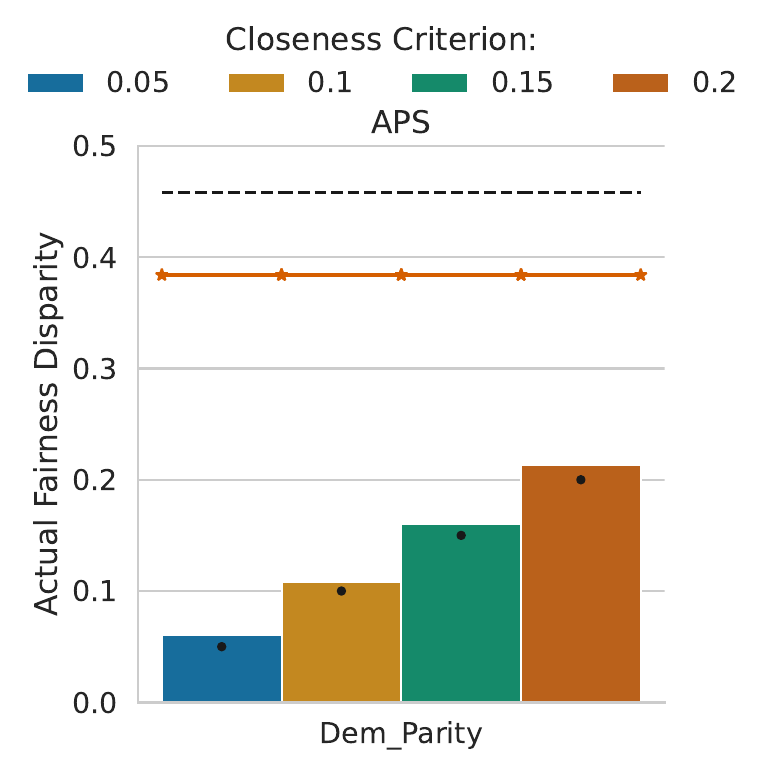}
    \end{subfigure}
    \caption{\small\textbf{ACSEducation.} Comparing Base APS, BatchGCP, and CF framework under Demographic Parity. For the CF framework, we vary $c\in\{0.05, 0.1, 0.15, 0.2\}$. We observe that the CF Framework achieves the smallest disparity for every value of $c$ (seen on the right figure) with a small cost to the efficiency (as seen on the left figure). In the figure, Base APS = Black lines, BatchGCP = Red lines, and the CF framework is the bar charts. The black dots are the \textit{desired} disparity level for the CF framework, which the CF framework achieves, while neither Base APS nor BatchGCP meets these thresholds.}
    \label{fig:acseducation_dem_parity_gcp}
\end{figure}

\begin{figure}[ht!]
    \centering
    \begin{subfigure}{0.5\textwidth}
        \includegraphics[width=\linewidth]{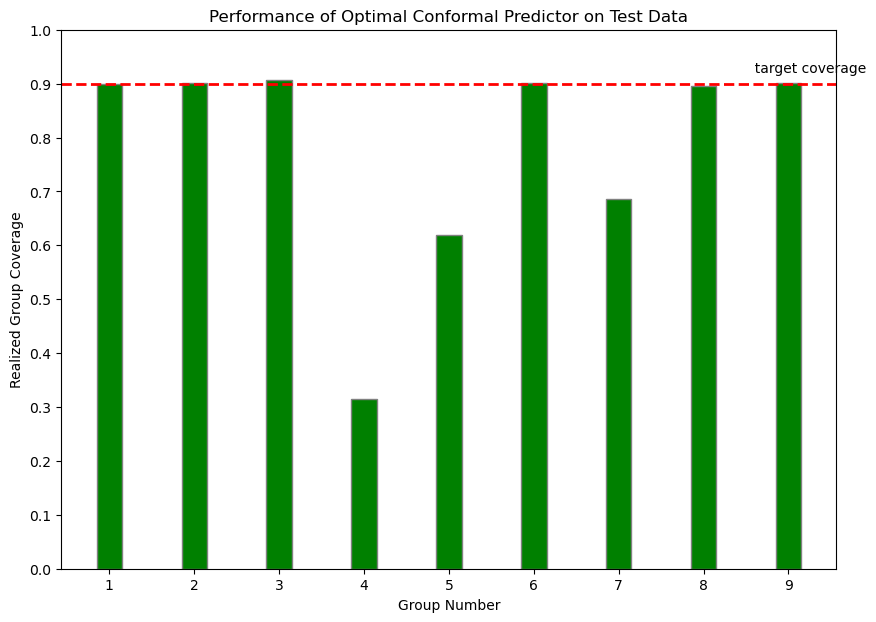}
        \caption{\small\textbf{ACSIncome.} }
    \end{subfigure}%
    \begin{subfigure}{0.5\textwidth}
        \includegraphics[width=\linewidth]{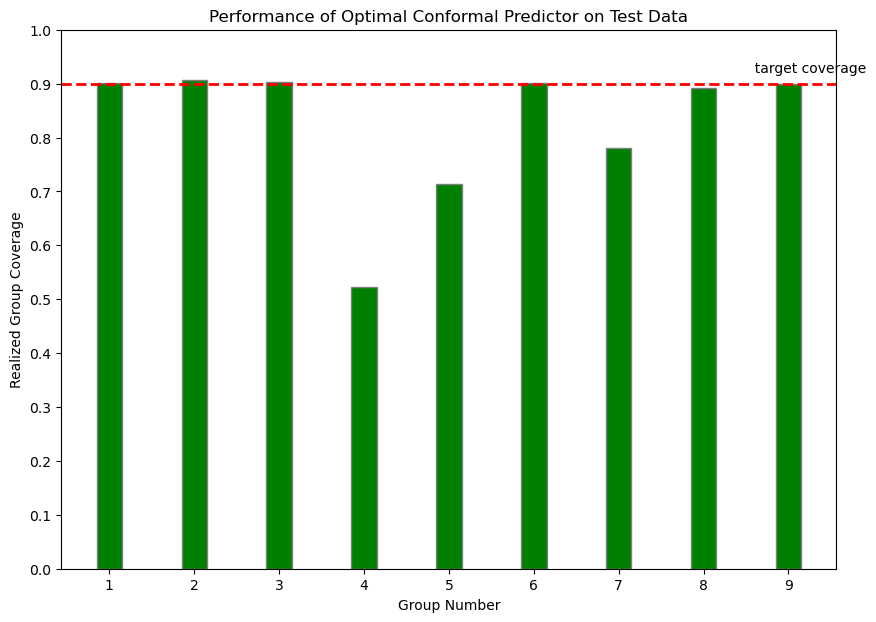}
        \caption{\small\textbf{ACSEducation.} }
    \end{subfigure}
    \caption{\small To dissect the poor performance of BatchGCP seen in Figures \ref{fig:acsincome_dem_parity_gcp} and \ref{fig:acseducation_dem_parity_gcp}, we present the per-group conditional coverages for both datasets and see that certain groups are significantly undercovered (i.e., groups 4, 5, and 7 in both figures).}
    \label{fig:acs_gcp}
\end{figure}

\end{document}